\documentclass[11pt]{article}

\usepackage[T1]{fontenc}
\usepackage[utf8]{inputenc}
\providecommand{\DeclareUnicodeCharacter}[2]{}
\DeclareUnicodeCharacter{00A0}{~}
\DeclareUnicodeCharacter{00B1}{$\pm$}
\DeclareUnicodeCharacter{2013}{--}
\DeclareUnicodeCharacter{2014}{---}
\DeclareUnicodeCharacter{2018}{`}
\DeclareUnicodeCharacter{2019}{'}
\DeclareUnicodeCharacter{2212}{-}

\usepackage[numbers]{natbib}
\usepackage[top=1in,left=1in,right=1in,bottom=1in]{geometry}
\usepackage{microtype}
\usepackage{graphicx}
\usepackage{booktabs}
\usepackage{multirow}
\usepackage{array}
\usepackage{threeparttable}
\usepackage[table]{xcolor}
\usepackage{caption}
\usepackage{subcaption}
\usepackage{float}
\usepackage{enumitem}
\usepackage{algorithm}
\usepackage{algorithmic}
\usepackage{amsmath}
\usepackage{amsfonts}
\usepackage{amssymb}
\usepackage{mathtools}
\usepackage{amsthm}
\usepackage{xspace}
\usepackage[normalem]{ulem}
\usepackage{authblk}
\usepackage[colorlinks,linkcolor=magenta,filecolor=blue,citecolor=blue,urlcolor=blue]{hyperref}
\usepackage[capitalize,noabbrev]{cleveref}
\usepackage{placeins}

\theoremstyle{plain}
\newtheorem{theorem}{Theorem}[section]
\newtheorem{proposition}[theorem]{Proposition}
\newtheorem{lemma}[theorem]{Lemma}

\theoremstyle{definition}

\newtheorem{assumption}[theorem]{Assumption}
\theoremstyle{remark}

\makeatletter
\@addtoreset{theorem}{subsection}
\makeatother

\newcommand{\EffRank}{\mathrm{EffRank}}
\newcommand{\lmhead}{\mathrm{lm\_head}}

\title{Sketching the Readout of Large Language Models for Scalable Data Attribution and Valuation}
\date{April 20, 2026}

\author[1]{Yide Ran\thanks{Contact: \href{mailto:yran1@stevens.edu}{\texttt{yran1@stevens.edu}}}}
\author[4]{Jianwen Xie}
\author[2]{Minghui Wang}
\author[3]{Wenjin Zheng}
\author[1]{Denghui Zhang}
\author[4]{Chuan Li}
\author[1]{Zhaozhuo Xu\thanks{Contact: \href{mailto:zxu79@stevens.edu}{\texttt{zxu79@stevens.edu}}}}
\affil[1]{Stevens Institute of Technology}
\affil[2]{Columbia University Mailman School of Public Health}
\affil[3]{University of Texas Health Science Center at Houston}
\affil[4]{Lambda Inc.}

\begin{document}

\maketitle

\begin{abstract}
Data attribution and valuation are critical for understanding data-model synergy for Large Language Models (LLMs), yet existing gradient-based methods suffer from scalability challenges on LLMs. Inspired by human cognition, where decision making relies on a focused readout of relevant memories rather than replaying all pathways, we introduce \textbf{RISE} (\textbf{Readout Influence Sketching Estimator}). Instead of computing and indexing gradients across the entire LLM, RISE focuses on influence hotspots at the output layer, where influence signals concentrate, and the gradient admits a decomposed outer-product form. This enables a dual-channel representation combining a lexical residual channel (RH) and a semantic projected-error channel (GH). Applying CountSketch projections to these channels achieves strong compression while maintaining accurate attribution. Across the OLMo (1B–32B) and Pythia (14M–6.9B) families, RISE reduces index storage by up to 112$\times$ compared to RapidIn and scales to 32B parameters LLM, where gradient-based baselines such as RapidIn and ZO-Inf become memory-infeasible. We evaluate RISE on two paradigms: (1) retrospective attribution, retrieving influential training examples for specific predictions, and (2) prospective valuation, scoring candidate data utility zero-shot. We validate RISE on three tasks: Howdy backdoor data detection, Finance-Medical domain separation, and Brain Rot high-quality data selection. In a closed-loop Brain Rot study, continued pretraining on RISE-selected data yields consistent downstream improvements. Overall, RISE provides a practical and scalable primitive for influence analysis and training-data selection in modern large language models.
\end{abstract}

\section{Introduction}

Understanding which training examples, past or future, shape a large language model's behavior is central to interpretability and to the design of efficient continuous training pipelines.
Human learning exhibits a useful duality: we can trace influential past experiences and predict the utility of new information. Scalable influence estimation for LLMs should support the same duality, enabling both \textbf{retrospective attribution} and \textbf{prospective valuation}.
While classic influence functions~\cite{koh2020understandingblackboxpredictionsinfluence,mises1947asymptotic} offer a principled formulation, they become intractable at LLM scale due to second-order computation and the enormous parameter dimensionality.

Scaling influence estimation for LLM faces a fundamental tension between \textbf{feasibility} and \textbf{fidelity}.
First-order surrogates, including TraceIN~\cite{pruthi2020estimatingtrainingdatainfluence}, that rely on per-example parameter-gradient materialization require computing and storing $O(P)$ gradient values per example, making it infeasible to index large corpora.
Proxy-based approximations~\cite{pan2025alinfiklearningapproximatelinearized} circumvent this cost by training smaller auxiliary models, but their gradients may be misaligned with the target model's training dynamics due to architectural and loss-landscape differences, leading to unreliable attribution.

This tension raises a natural research question: 

\begin{center}
\vspace{-1em}
    \emph{Can we achieve scalable data attribution and valuation using only forward passes and partial parameters?}
\vspace{-1em}
\end{center}

We answer this question affirmatively.
Through systematic measurements of LLM training dynamics, we discover that the \textbf{readout layer} (LM head) is precisely the component we seek.
Influence-relevant gradients naturally concentrate in output-side readout components: the \textbf{ratio} of LM-head gradient energy to the average transformer-layer gradient energy grows significantly with scale (up to $27.8\times$ on OLMo-32B; \Cref{fig:all_model_energy}).
We also observe a scale-dependent \emph{U-shaped} discriminativeness pattern: middle-layer hidden-state similarity collapses but recovers at the final layer for practical-scale models (Appendix Figure~\ref{fig:obs2_discriminativeness}).
Crucially, the LM-head gradient admits an exact outer-product factorization:
$\nabla_{W_{\lmhead}} \ell_t = r_t h_t^\top$,
where both factors can be computed from forward-pass quantities alone, eliminating the need for full-model backpropagation. Moreover, although $r_t$ is vocabulary-sized, its influence-relevant signal is highly concentrated: a tiny active-token support preserves most residual energy and semantic error direction.

\begin{figure*}[!h]
    \vspace{-4pt}
    \centering
    \includegraphics[width=0.9\textwidth]{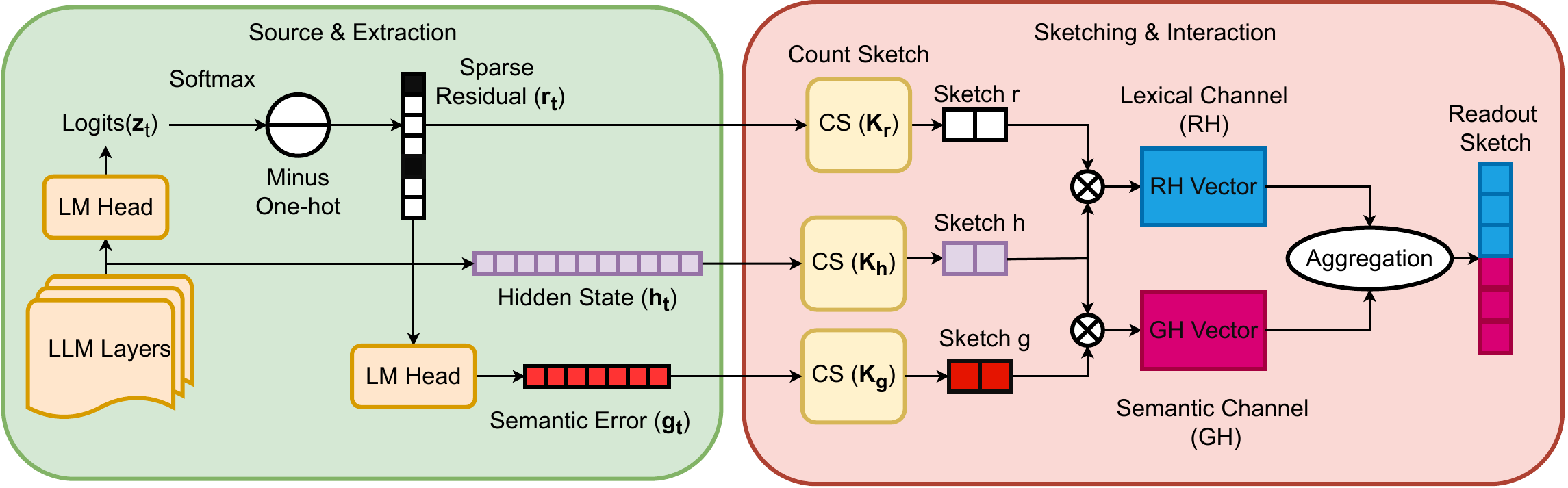}
    \caption{\textit{RISE}: Readout Influence Sketching Estimator .}
    \vspace{-4pt}
    \label{fig:overview}
    \vspace{-6pt}
\end{figure*}

Building on this insight, we introduce \textbf{RISE (Readout Influence Sketching Estimator)}, which enables efficient \textbf{forward-only} influence estimation through CountSketch compression of the outer-product structure.

\textit{RISE} exploits this structure through a \textbf{dual-channel} formulation: a lexical residual channel (\textbf{RH}) for token-level precision and a semantic projected-error channel (\textbf{GH}) for robust semantic matching.
By compressing these factors with CountSketch projections~\cite{DBLP:journals/tcs/CharikarCF04}, \textit{RISE} reduces index storage by $28$ to $112\times$ compared to RapidIn on Pythia-1B and scales to 32B parameters, where per-example gradient indexing baselines such as RapidIn and ZO-Inf become memory-infeasible (\Cref{tab:cost_quality_topk}).
We validate \textit{RISE} across Pythia (14M to 6.9B) and OLMo (1B to 32B) on Howdy backdoor detection, Finance-Medical domain separation, and Brain Rot data selection.
In a closed-loop Brain Rot study, training on RISE-selected data yields substantially improved downstream outcomes, achieving $\mathbf{2.8\times}$ lower perplexity on control data and $\mathbf{4.1\times}$ higher \textbf{RULER-CWE} accuracy compared to baselines (\Cref{tab:brainrot_closed_loop}).  We summarize our contributions as follows.

\begin{itemize}[nosep,leftmargin=*]
\item \textbf{Discovery of readout influence hotspots.} We empirically localize influence-relevant signal concentration to output-side components and provide theoretical justification for this depth-dependent energy shift.
\item \textbf{Dual-channel readout influence.} We design a readout-side influence metric that combines RH, a vocabulary-space residual channel for lexical precision, with GH, an LM-head-projected semantic error channel for robust paraphrase-aware matching.
\item \textbf{Sparse active-token structure in LM-head residuals.} We show that influence-relevant residual energy is sharply concentrated on a tiny adaptive support even when probability mass remains diffuse. Exploiting this structure preserves both RH accuracy and GH semantic fidelity while reducing residual-side computation from vocabulary scale to active-support scale.
\item \textbf{Unified attribution and valuation.} A single estimator supports both retrospective attribution and prospective valuation without retraining proxies or storing large parameter gradients.
\end{itemize}

\section{Readout Sketching of LLMs for Scalable Influence Estimation}
\label{sec:method}

We develop \textit{RISE} (Readout Influence Sketching Estimator), a scalable influence estimation method for LLMs. Our approach is motivated by systematic observations of gradient structure in LLMs, which show that influence signals concentrate in specific outer product channels that can be approximated efficiently.

\subsection{Influence Kernel: The Scalability Challenge}
\label{sec:challenge}
Influence kernel~\cite{koh2020understandingblackboxpredictionsinfluence} estimates how much a training example affects model predictions. For a loss function $\ell$ and model parameters $\theta$, the influence of a training example $x_i$ on a query/test example $x_q$ is:
\begin{equation*}
\mathcal{I}(x_i\!\to\!x_q) = \nabla_\theta \ell(x_q)^\top H_\theta^{-1} \nabla_\theta \ell(x_i),
\end{equation*}
where $H_\theta = \nabla^2_\theta \ell$ is the Hessian of the training loss with respect to $\theta$.

A common first-order surrogate (used by TracIn~\cite{pruthi2020estimatingtrainingdatainfluence}) replaces $H_\theta^{-1}$ with a scaled identity, yielding a gradient inner product:
\begin{equation}
\mathcal{I}(x_i\!\to\!x_q) \approx \left\langle \nabla_\theta \ell(x_q), \nabla_\theta \ell(x_i) \right\rangle .
\label{eq:grad_inner}
\end{equation}

\textbf{Computing the influence kernel, even using first-order surrogates, is infeasible for LLMs.} For an LLM with $P$ parameters, storing per-example gradients requires $O(P)$ memory. Even for moderately sized LLMs with approximately $7$ billion parameters ($P \approx 7 \times 10^{9}$), storing FP16 gradients for $10{,}000$ training examples would require roughly \textbf{140~TB} of memory, making this approach clearly infeasible.

\textbf{Our roadmap.} To overcome this scalability challenge, we avoid materializing full first-order gradients across all LLM parameters. The method is grounded in three observations. First, influence-relevant gradient structure localizes at the LM-head readout, where the gradient admits an exact residual--hidden outer-product form (Observation~\ref{sec:obs1}). Second, the residual side benefits from a dual-channel lexical/semantic decomposition (Observation~\ref{sec:obs2}). Third, the LM-head residual itself is supported by a tiny set of active prediction tokens, enabling sparse residual and sparse GH construction without losing the influence-relevant error signal (Observation~\ref{sec:obs3}).

\subsection{Observation 1: Influence Signals Localize at the Output Readout}
\label{sec:obs1}
\begin{figure}[t]
\centering
\includegraphics[width=0.8\linewidth]{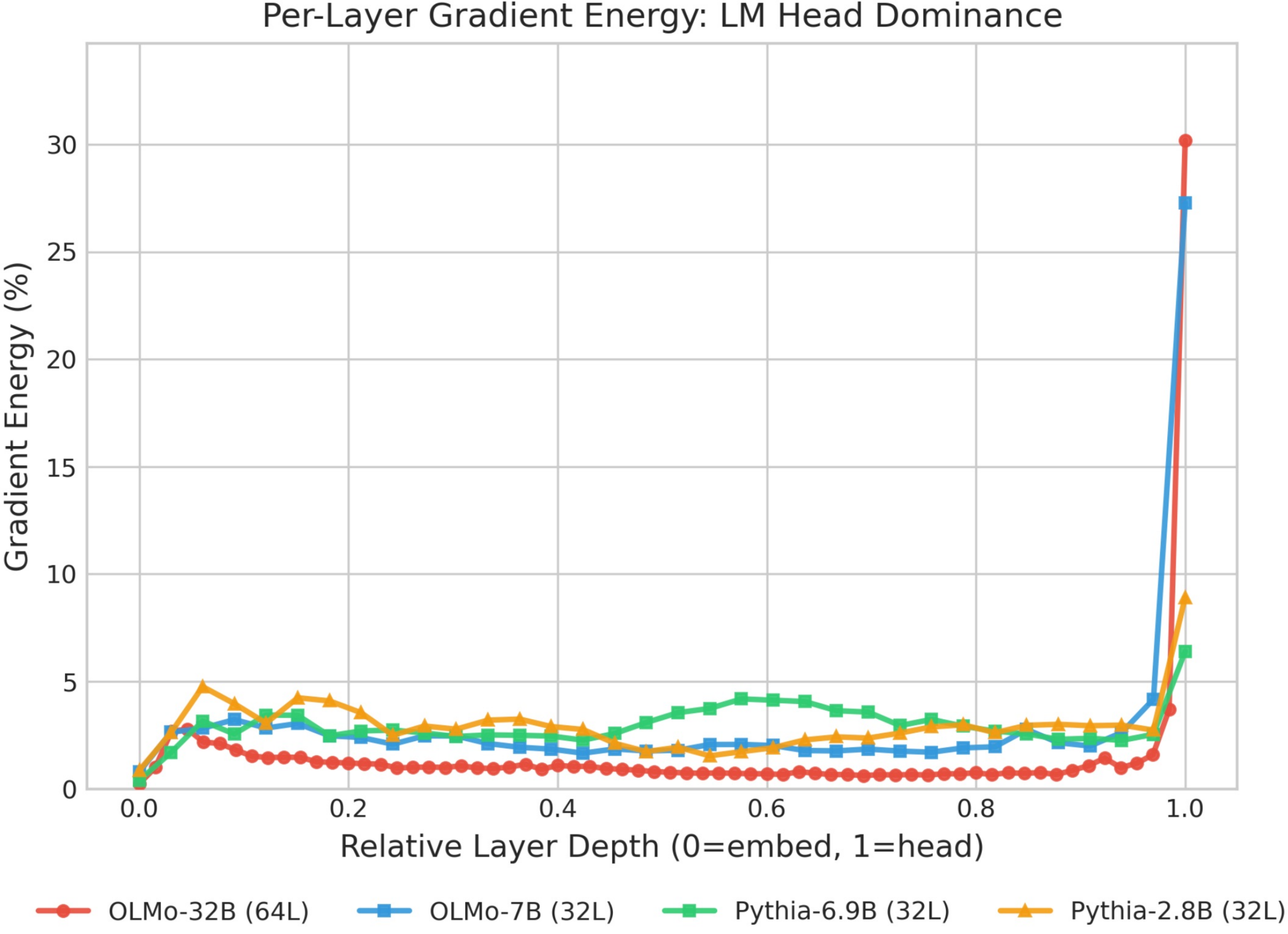}
\caption{Per-layer gradient energy on C4. The LM head forms a sharp peak that strengthens with model scale, while internal transformer layers each contribute only a few percent.}
\label{fig:all_model_energy}
\vspace{-1em}
\end{figure}
We first examine where gradient energy distributes across model layers. For each layer $c$ (embedding, transformer layers, LM head), we compute the relative energy:
\begin{equation*}
E_c = \frac{\|\nabla_{W_c} \ell\|_F^2}{\sum_{c} \|\nabla_{W_{c}} \ell\|_F^2}
\end{equation*}

We conduct controlled analyses on the C4 corpus~\cite{raffel2023exploringlimitstransferlearning} using open LLMs spanning two architecture families. We further compare the same model before and after fine-tuning on downstream task data. Figures~\ref{fig:all_model_energy},~\ref{fig:obs1_pretrain_finetune_same_energy} reveal two key patterns.

\textbf{LM-Head energy hotspot grows with scale.}
On C4, gradient energy concentrates sharply at the LM head across model scales (Figure~\ref{fig:all_model_energy}), forming a dominant peak while internal transformer layers each contribute only a few percent.

\textbf{Stability under downstream fine-tuning.}
Comparing each pretrained checkpoint with its task-fine-tuned checkpoint on BrainRot, Howdy, and Finance--Medical, the per-layer gradient-energy profiles nearly overlap (Figure~\ref{fig:obs1_pretrain_finetune_same_energy}): the layer-wise trend is largely preserved and the LM-head share changes only mildly.
This invariance motivates using pretrained checkpoints to identify LM-Head influence hotspots and supports a single estimator for both prospective valuation and retrospective attribution.

\textbf{Theoretical analysis.} To quantify the LM-head dominance in Figures~\ref{fig:all_model_energy} and~\ref{fig:obs1_pretrain_finetune_same_energy}, we compare $G_{\text{head}}$ to internal layers via the head-to-average ratio $G_{\text{head}} / \big(\frac{1}{L}\sum_{l=1}^L G_l\big)$. This ratio measures how strongly influence signals concentrate at the LM Head relative to the model body. Appendix~\ref{sec:appendix_gradient_energy_theory} shows that under mild stability conditions (norm-stable activations and non-exploding backpropagation), head-to-average energy ratio admits a depth-dependent lower bound that grows with $L$:
\begin{equation*}
\frac{G_{\text{head}}}{\frac{1}{L}\sum_{l=1}^L G_l}
\ \ge\
\frac{C_h}{C_x}\cdot \frac{L(1-\kappa^2)}{\|W_{\lmhead}\|_{op}^2(1-\kappa^{2L})}.
\end{equation*}
This explains why the head-to-average gap increases with model depth, while still allowing occasional internal-layer hotspots.

\textbf{Readout factorization enables a forward-only readout surrogate.}
Energy concentration alone does not guarantee useful retrieval: influence-based retrieval is driven by pairwise similarity between example-specific gradients, not by magnitude alone. The LM head is attractive because, in addition to concentrating gradient energy, its gradient contribution at each prediction position admits an exact factorization whose two factors are both available from forward-pass quantities. As formally derived in Lemma~\ref{lem:outer-product} (Appendix~\ref{sec:appendix_theory}), for the token-level loss at position $t$,
\begin{equation*}
    \nabla_{W_{\lmhead}} \ell(z_t,y_t) = r_t \otimes h_t,
\end{equation*}
where $h_t \in \mathbb{R}^d$ is the final hidden state and $r_t = p_t - \mathbf{1}_{y_t} \in \mathbb{R}^V$ is the prediction residual, with $p_t=\mathrm{softmax}(z_t/\tau)$. Crucially, $h_t$ is the model's last-layer output and $r_t$ is determined by the softmax distribution and the known label, so this readout-side gradient contribution can be formed without per-example backpropagation through the full model. For a full sequence, the LM-head gradient is obtained by summing these token-level contributions across positions:
\begin{equation*}
    \nabla_{W_{\lmhead}} \ell(x)
    =
    \sum_{t=1}^{T} r_t(x)\otimes h_t(x).
\end{equation*}
At each token position, the corresponding LM-head gradient inner product factorizes as
\begin{align*}
    &\left\langle \nabla_{W_{\lmhead}} \ell(z_t(x_q),y_t(x_q)),
    \nabla_{W_{\lmhead}} \ell(z_t(x_i),y_t(x_i)) \right\rangle_F \\
    &\quad =
    \underbrace{\left\langle r_t(x_q), r_t(x_i)\right\rangle}_{\text{residual sim}}
    \cdot
    \underbrace{\left\langle h_t(x_q), h_t(x_i)\right\rangle}_{\text{hidden sim}}.
\end{align*}
This per-position identity exposes two requirements for scalable retrieval: the hidden factor must remain discriminative across candidates, and the residual factor must be represented in a way that is both scalable and semantically meaningful.

\textbf{Final hidden states remain discriminative.}
The factorization above is only useful if the hidden-side similarity $\langle h_t(x_q), h_t(x_i)\rangle$ varies across candidates rather than collapsing to a near-constant scale. We therefore measure discriminativeness as the variance of pairwise cosine similarities over sampled token representations:
\begin{equation*}
\mathrm{Discriminativeness}(h)
=
\mathrm{Var}_{i\neq j}
\left[
\frac{\langle h^{(i)}, h^{(j)}\rangle}
{\|h^{(i)}\|_2\|h^{(j)}\|_2}
\right].
\end{equation*}
Concretely, we sample $N=1000$ token-level training points, compute the $\binom{N}{2}$ pairwise cosine similarities, and report their variance. Appendix Figures~\ref{fig:obs2_discriminativeness} and~\ref{fig:obs2_pretrain_finetune_same_ushape} show a consistent U-shape: pairwise cosine variance drops through middle layers, then recovers near the final layer across Pythia and OLMo, and the same pattern persists after downstream fine-tuning. Observation~\ref{sec:obs1} therefore justifies restricting Eq.~(\ref{eq:grad_inner}) to the LM head,
$\mathcal{I}(x_i\!\to\!x_q)\approx\left\langle \nabla_{W_{\lmhead}}\ell(x_q),\nabla_{W_{\lmhead}}\ell(x_i)\right\rangle_F$,
leaving the high-dimensional residual factor as the remaining question addressed in Observation~\ref{sec:obs2}.

\subsection{Observation 2: Dual-Channel Readout Error Decomposition}
\label{sec:obs2}

Observation~\ref{sec:obs1} established that LM-head similarity factorizes into hidden and residual terms and that the hidden factor remains discriminative; we now address the residual factor.
The residual vector $r_t = p_t - \mathbf{1}_{y_t} \in \mathbb{R}^V$ lives in the vocabulary basis, so the inner product $\langle r_t(x_q), r_t(x_i) \rangle$ is often driven by the overlap of probability mass on the same tokens and does not inherently treat synonyms as similar under a plain dot product.
To obtain an error signal that captures both lexical and semantic mismatches, we propose decomposing the error term into two complementary channels:

\textbf{RH Channel ($r_t \otimes h_t$):} The direct outer product using the vocabulary-space residual $r_t$. This channel tends to favor \emph{token-level} prediction errors, identifying distinct mismatches in the output distribution.

\textbf{GH Channel ($g_t \otimes h_t$):} A projection of the residual into the embedding space via $g_t = W_{\lmhead}^\top r_t \in \mathbb{R}^d$, where $W_{\lmhead} \in \mathbb{R}^{V \times d}$ is the LM head weight matrix and $W_{\lmhead,v} \in \mathbb{R}^d$ denotes its $v$-th row. Expanding this term reveals its semantic nature:
\vspace{-0.2em}
\begin{equation*}
    g_t = W_{\lmhead}^\top (p_t - \mathbf{1}_{y_t})
    = \mathbb{E}_{v\sim p_t}[W_{\lmhead,v}] - W_{\lmhead,y_t}.
\end{equation*}
Here, $g_t$ represents the error direction in embedding space: the vector difference between the model's expected token embedding and the ground truth embedding.

\textbf{Embedding choice.} We use the LM head weights $W_{\lmhead}$ for the projection $g_t = W_{\lmhead}^\top r_t$, as it directly defines the prediction geometry. While input embeddings can also be used, our experiments default to $W_{\lmhead}$.

\begin{table}[t]
\centering
\caption{RH is misled by lexical overlap; GH corrects the ranking. Computed with Pythia-410M; lower rank is better. Query ($x_q$): ``The cat is sleeping on the blanket.'' Sorted by RH rank. RH ranks lexical-only (\colorbox{red!12}{red}) above hard paraphrase (\colorbox{yellow!25}{yellow}). GH: $\uparrow$ rescues semantic, $\downarrow$ demotes spurious.}
\label{tab:rh_gh_ranking}
\small
\setlength{\tabcolsep}{3pt}
\begin{tabular}{@{}lccccc@{}}
\toprule
Candidate ($x_i$) & Sem & Lex & $\mathrm{rank}_{\mathrm{RH}}$ & $\mathrm{rank}_{\mathrm{GH}}$ & $\Delta$ \\
\midrule
Cat sleeps on blanket & \checkmark & \checkmark & 1 & 2 & \\
Kitten napping on throw & \checkmark & & 2 & 1 & \\
Tabby dozed on quilt & \checkmark & & 3 & 3 & \\
\rowcolor{red!12} Sleepy cat, blanket ad & & \checkmark & 4 & 5 & $\downarrow$ \\
\rowcolor{red!12} Cat food, blanket orders & & \checkmark & 5 & 6 & $\downarrow$ \\
\rowcolor{yellow!25} Feline dozing on coverlet & \checkmark & & 6 & 4 & $\uparrow$ \\
Stock market crashed & & & 7 & 7 & \\
\bottomrule
\end{tabular}
\end{table}

\textbf{Controlled experiment.} As shown in Table~\ref{tab:rh_gh_ranking}, GH corrects RH's over-reliance on lexical overlap. Specifically, GH rescues the hard paraphrase (Row 6) from $\mathrm{rank}_{\mathrm{RH}}=6$ to $\mathrm{rank}_{\mathrm{GH}}=4$ via embedding-space alignment, while successfully demoting spurious matches that share high token overlap but distinct meanings (Rows 4--5). This confirms GH's ability to prioritize semantic alignment over mere token identity.

\textbf{Mechanism of complementarity.} The RH channel emphasizes \emph{lexical precision} and tends to prioritize token-level agreement. The GH channel trades off lexical precision for \emph{semantic robustness}. In particular, the GH error-side similarity corresponds to residual matching under an embedding-induced kernel (Eq.~\ref{eq:gh_kernel_identity} in Appendix~\ref{sec:appendix_gh_more_semantic}):
\begin{equation*}
    \langle g_t(x_q), g_t(x_i) \rangle = r_t(x_q)^\top (W_{\lmhead} W_{\lmhead}^\top)\, r_t(x_i).
\end{equation*}
By combining them as $\lambda_{rh} \cdot \text{RH} + \lambda_{gh} \cdot \text{GH}$, we obtain a metric that tends to prioritize exact matches, retrieve paraphrases, and help suppress spurious lexical overlaps.

\textbf{Ablation Study.}
Beyond this controlled example, we validate the complementarity of RH and GH via ablations across model scales.
Appendix~\ref{sec:rh_gh_ablation_study} (Tables~\ref{tab:ablation_medical_predict}--\ref{tab:ablation_medical_recall}) shows that GH consistently outperforms RH on semantic-heavy tasks, while RH+GH is robustly best or competitive across Top-$K$. This dual-channel decomposition alleviates the brittleness of the error term: RH preserves lexical precision, while GH injects embedding-space semantic smoothing. Together with the readout locality and final-layer choice in Observation~\ref{sec:obs1}, this gives \textit{RISE} a gradient similarity metric that is discriminative and semantically robust. The remaining scalability issue is that both RH and GH are still induced by a vocabulary-sized residual $r_t$.

\subsection{Observation 3: Sparse Active Tokens in LM-Head Residuals}
\label{sec:obs3}

Although the RH/GH dual-channel combination improves semantic robustness, both channels remain derived from the vocabulary-sized residual $r_t \in \mathbb{R}^V$, incurring $\mathcal{O}(Vd)$ cost per token. We now show that the influence-relevant signal concentrates on a tiny token support, enabling orders-of-magnitude compression as a structural property rather than an engineering heuristic.

\textbf{Residual energy decouples from probability mass.}
While softmax probability mass diffuses across the long tail, the squared residual energy---which directly drives the RH inner product---is highly concentrated. With fixed $\tau=1.0$ and $K=128$, the sparse support preserves $95.9\%$ of non-ground-truth residual-tail energy and $99.6\%$ of full residual energy while covering only $0.239\%$ of the vocabulary (Figure~\ref{fig:sparse_active_tokens}, left). The gap against probability mass ($85.6\%$) arises because long-tail tokens carry non-negligible probability but negligible squared prediction error.

\textbf{The ground-truth token anchors the residual.}
The ground-truth token $y_t$ is always included in $\mathcal{S}_t$ regardless of its prediction rank, ensuring the supervision-bearing coordinate of $r_t$---typically the largest in magnitude---is never discarded. This is especially important for prospective valuation, where $y_t$ may fall outside the model's top-$K$ predictions on unseen candidate data.

\textbf{Sparse truncation preserves the GH semantic direction.}
Because $g_t = W_{\text{lm\_head}}^{\top} r_t$ is a linear projection, truncating $r_t$ could distort the semantic error direction. However, at fixed $\tau=1.0$ and $K=128$, sparse GH achieves mean cosine similarity $0.993$ against dense GH across Pythia (160M--6.9B) and OLMo pretrained/fine-tuned runs (Figure~\ref{fig:sparse_active_tokens}, right), confirming that this structure is not model- or task-specific. Additional diagnostics are in Appendix~\ref{sec:appendix_sparse_active_tokens}.

\textbf{Sparse token selection.}
Based on these observations, for each position $t$ we define:
\begin{equation*}
\mathcal{S}_t = \mathrm{TopK}(z_t/\tau, K)\cup\{y_t\}.
\end{equation*}
Renormalizing the logits over $\mathcal{S}_t$ yields $\tilde{p}_t$; the sparse residual is $\tilde{r}_t(v) = \tilde{p}_t(v) - \mathbf{1}[v=y_t]$ for $v \in \mathcal{S}_t$. Both channels are then computed on this shared support without materializing the full vocabulary:
\begin{equation*}
\widetilde{\mathrm{RH}}_t = \tilde{r}_t \otimes h_t, 
\qquad 
\widetilde{\mathrm{GH}}_t = \tilde{g}_t \otimes h_t,
\qquad
\tilde{g}_t = \sum_{v\in\mathcal{S}_t}\tilde{p}_t(v)W_{\text{lm\_head},v} - W_{\text{lm\_head},y_t}.
\end{equation*}
Together, the concentrated tail energy, the preserved GT coordinate, and the stable GH direction ensure that sparse truncation reduces residual-side cost from $\mathcal{O}(V)$ to $\mathcal{O}(|\mathcal{S}_t|)$ without sacrificing attribution fidelity.

\begin{figure*}[t]
\centering
\begin{minipage}{0.49\linewidth}
\centering
\includegraphics[width=\linewidth]{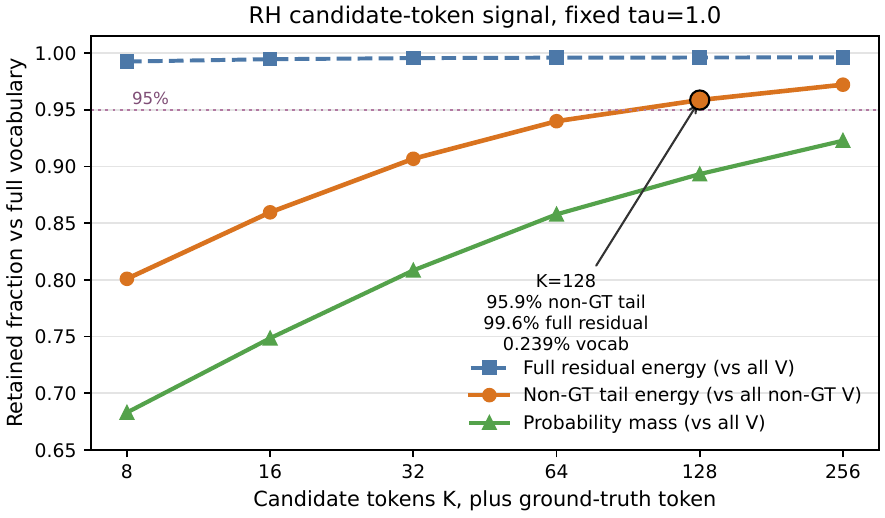}
\end{minipage}\hfill
\begin{minipage}{0.49\linewidth}
\centering
\includegraphics[width=\linewidth]{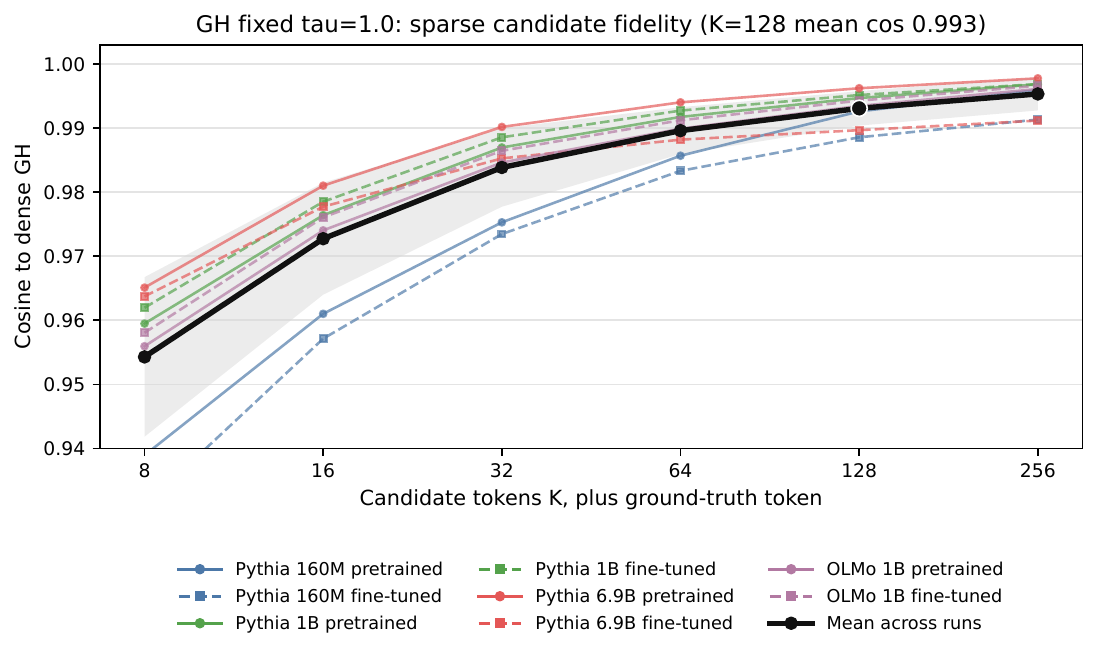}
\end{minipage}
\caption{\textbf{Sparse active tokens in LM-head residuals.}
At fixed $\tau=1.0$, Top-$K$+GT preserves RH residual energy (left) and yields sparse GH directions close to dense GH (right).
Additional fixed-control diagnostics are in Appendix~\ref{sec:appendix_sparse_active_tokens}.}
\label{fig:sparse_active_tokens}
\end{figure*}

\subsection{\textit{RISE}: Scalable Influence via Factorized Sketching}
\label{sec:fiksi}

\textbf{Factorized Compression.}
Storing the exact token-level LM-head gradient factorization
$\nabla_{W_{\lmhead}}\ell(z_t,y_t)= r_t \otimes h_t$
is infeasible at scale, as it requires $\mathcal{O}(Vd)$ memory per token.
\textit{RISE} circumvents this by compressing the factors $(\tilde r_t, h_t)$ and the semantic error
$\tilde g_t = W_{\lmhead}^\top \tilde r_t$ individually via CountSketch~\cite{DBLP:journals/tcs/CharikarCF04,weinberger2010featurehashinglargescale}
before forming interaction features.
Unlike PCA, CountSketch is data-independent (determined solely by bucket hashes $\eta_r,\eta_h,\eta_g$ and sign hashes $s_r,s_h,s_g$),
which preserves index--query consistency without retraining (see Appendix~\ref{sec:appendix_pca_comparison}).
In practice, CountSketch provides a lightweight projection that approximately preserves inner products; we additionally apply $\ell_2$
normalization to sketched factors for numerical stability in retrieval.

\begin{algorithm}[h]
\small
\caption{\textit{RISE}: Readout Influence Sketching Estimator}
\label{alg:fiksi}
\begin{algorithmic}[1]
\raggedright
\REQUIRE Training pool $\mathcal{D}=\{x_i\}$; query set $\mathcal{Q}=\{x_q\}$; $W_{\lmhead}$;
sketch dims $(K_r,K_h,K_g)$; weights $(\lambda_{rh},\lambda_{gh})$; temperature $\tau$; $\rho_{\mathrm{cum}}$; top-$K$ cap $K_{\max}$.
\STATE Fix CountSketch hashes (bucket $\eta_\cdot$, sign $s_\cdot$) and instantiate $\mathrm{CS}_r,\mathrm{CS}_h,\mathrm{CS}_g$.
\STATE \textit{Stage 1: Offline index construction}
\FOR{each $x_i\in\mathcal{D}$}
  \STATE $\phi(x_i)\gets$ \textsc{SketchAggregate}$(x_i)$ (Alg.~\ref{alg:sketch_detail}); store in index.
\ENDFOR
\STATE \textit{Stage 2: Online query feature construction}
\FOR{each $x_q\in\mathcal{Q}$}
  \STATE $\phi(x_q)\gets$ \textsc{SketchAggregate}$(x_q)$ (Alg.~\ref{alg:sketch_detail}).
\ENDFOR
\STATE \textit{Stage 3: Influence scoring and aggregation}
\STATE $\bar\phi_Q \gets \frac{1}{|\mathcal{Q}|}\sum_{x_q\in\mathcal{Q}}\phi(x_q)$
\FOR{each $x_i\in\mathcal{D}$}
  \STATE $\mathrm{score}(x_i)\gets \phi(x_i)^\top \bar\phi_Q$
\ENDFOR
\STATE \textbf{return} training examples ranked by $\mathrm{score}(x_i)$.
\end{algorithmic}
\end{algorithm}

\textbf{Adaptive sparse construction.}
To avoid sketching a dense vocabulary residual, for each token position $t$ we first form a capped candidate set
$\mathcal{C}_t=\mathrm{TopK}(z_t/\tau, K_{\max})$ and ensure $y_t\in\mathcal{C}_t$.
We then keep the smallest prefix whose cumulative probability exceeds $\rho_{\mathrm{cum}}$, unioned with $\{y_t\}$, to obtain the final sparse support $\mathcal{S}_t\subseteq\mathcal{C}_t$.
Renormalizing on $\mathcal{S}_t$ gives $\tilde p_t$, and we define
$\tilde r_t(v)=\tilde p_t(v)-\mathbf{1}[v=y_t]$ for $v\in\mathcal{S}_t$,
$\hat r_t=\mathrm{CS}_r(\tilde r_t)/\|\mathrm{CS}_r(\tilde r_t)\|_2$,
and
$\tilde g_t=\sum_{v\in\mathcal{S}_t}\tilde p_t(v)\,{W_{\lmhead}}_v-{W_{\lmhead}}_{y_t}$.
This makes the residual-side cost scale with $|\mathcal{S}_t|$ rather than $V$.

\textbf{Structured Interaction Features.}
We sketch and normalize the hidden state as $\hat h_t=\mathrm{CS}_h(h_t)/\|\mathrm{CS}_h(h_t)\|_2$,
and similarly obtain $\hat g_t=\mathrm{CS}_g(\tilde g_t)/\|\mathrm{CS}_g(\tilde g_t)\|_2$.
We aggregate the outer products of these sketches over token positions $t=1,\dots,T$ to form the final sample signatures:
\begin{equation}
\begin{aligned}
    \phi_{RH}(x) &= \textstyle \sum_{t=1}^T \mathrm{vec}(\hat{r}_t \otimes \hat{h}_t), \\
    \phi_{GH}(x) &= \textstyle \sum_{t=1}^T \mathrm{vec}(\hat{g}_t \otimes \hat{h}_t).
\end{aligned}
\label{eq:feature_aggregation}
\end{equation}
Here $\phi_{RH}(x)\in\mathbb{R}^{K_rK_h}$ and $\phi_{GH}(x)\in\mathbb{R}^{K_gK_h}$.
The final influence score between a training sample $x_i$ and a query $x_q$ is computed as:
\begin{align}
\mathcal{I}(x_i \!\to\! x_q)
&= \lambda_{rh}\,\phi_{RH}(x_i)^\top \phi_{RH}(x_q) \nonumber \\
&\quad + \lambda_{gh}\,\phi_{GH}(x_i)^\top \phi_{GH}(x_q).
\end{align}
The complete procedure is detailed in Algorithm~\ref{alg:fiksi}.

\textbf{Sketching Variance Bound.}
Under a matched sketch budget, \textit{RISE} admits a sketching-variance advantage over TracIn for estimating $\mathcal{I}(x_i\!\to\!x_q)$.
Let $\gamma\in(0,1]$ denote the fraction of gradient energy concentrated in the LM head
(i.e., the energy fraction of the block $W_{\lmhead}$), and let $\rho_{\mathrm{cum}}$ be the
cumulative-probability threshold that defines the truncated support $\mathcal{S}_t$ and the
corresponding truncated residual $\tilde r_t$.
Then, up to universal constants (see Theorem~\ref{sec:variance_yide}),
\begin{equation*}
\frac{\mathrm{Var}\!\left[\widehat{\mathcal{I}}^{\mathrm{TracIn}}(x_i\!\to\!x_q)\right]}
     {\mathrm{Var}\!\left[\widehat{\mathcal{I}}^{\mathrm{RISE}}(x_i\!\to\!x_q)\right]}
\;\gtrsim\;
\frac{1}{\gamma^2}\cdot\frac{\|r_t\|_2^2}{\|\tilde r_t\|_2^2}.
\end{equation*}
For typical large LLMs, empirical gradient-energy measurements often give $\gamma\approx 0.3$,
and truncation at $\rho_{\mathrm{cum}}$ typically preserves most residual mass
(e.g., $\|\tilde r_t\|_2 \approx 0.9\|r_t\|_2$), implying an expected variance improvement of
about ${\sim}10$--$15\times$.

This gain can be attributed to three effects:
(i) \textbf{LM-head restriction:} focusing on $\nabla_{W_{\lmhead}}\ell$ captures the dominant portion
of gradient energy while avoiding high-variance contributions from the rest of $\theta$;
(ii) \textbf{sparse residual truncation:} replacing $r_t$ by $\tilde r_t$ (supported on $\mathcal{S}_t$)
reduces sketching noise in proportion to the discarded $\ell_2$ mass, while preserving the dominant
probability mass;
(iii) \textbf{factorized sketching:} independently choosing $(K_r,K_h,K_g)$ enables flexible allocation
of sketch dimensions across the RH and GH channels.

\textbf{Complexity Analysis.}
Let $\mathcal{D}$ be the candidate pool, $\mathcal{Q}$ the query set, and $T$ the number of token positions.
Let $\bar S:=\mathbb{E}[S_t]$ denote the average truncated support size ($\bar S\ll V$).

\textbf{Time.}
Index construction computes and aggregates $\phi_{RH}(x),\phi_{GH}(x)$ for each $x\in\mathcal{D}$, with per-example cost
\[
O\!\left(T\big(\bar S + d + K_h(K_r+K_g)\big)\right),
\]
hence total offline cost
$O\!\left(|\mathcal{D}|\,T\big(\bar S + d + K_h(K_r+K_g)\big)\right)$.
At query time, feature construction for each $x\in\mathcal{Q}$ matches this cost, and exhaustive scoring costs
\[
O\!\left(|\mathcal{D}|\,|\mathcal{Q}|\,K_h(K_r+K_g)\right).
\]

\textbf{Memory.}
Each candidate $x\in\mathcal{D}$ stores $\phi_{RH}(x)\in\mathbb{R}^{K_rK_h}$ and $\phi_{GH}(x)\in\mathbb{R}^{K_gK_h}$, so the index costs
\[
O\!\left(|\mathcal{D}|\,K_h\,(K_r+K_g)\right).
\]
With $K_r=K_h=K_g=256$, this is $2\cdot 256^2=131{,}072$ floats $\approx 0.5$\,MB/example (FP32).
In contrast, a dense LM-head gradient stores $Vd$ values (e.g., $Vd\approx 1.3\times 10^8$ for a 7B-scale model), i.e., ${\sim}10^3\times$ larger in FP32; full-parameter gradients are larger still.

\section{Experiments}
\label{sec:experiments}

In this section, we demonstrate three key properties of \textit{RISE}:
(1) \textbf{Duality effectiveness}: It performs effectively on both \emph{Retrospective Attribution} and \emph{Prospective Valuation}, maintaining robustness across model scales.
(2) \textbf{Scalability and Efficiency}: \textit{RISE} achieves exceptional resource efficiency, enabling valuation for 32B-parameter models where baselines like RapidIn~\cite{lin2024tokenwiseinfluentialtrainingdata} and ZO-Inf~\cite{kokhlikyan2025z0infzerothorderapproximation} suffer from prohibitive memory overhead. By circumventing these hardware constraints, \textit{RISE} remains computationally tractable at scales previously considered infeasible.
(3) \textbf{Practical BrainRot high-quality data selection}: On the real-world Brain Rot high-quality data selection task, \textit{RISE} consistently outperforms baselines, yielding better downstream training outcomes~\cite{xing2025llmsbrainrot}.

\textbf{Setup.} We evaluate \textit{RISE} on three tasks: Howdy! backdoor detection~\cite{lin2024tokenwiseinfluentialtrainingdata}, Finance Medical separation~\cite{gaurang_bharti_2024, lavita_medical_qa_datasets_2023}, and Brain Rot selection using Pythia (14M--6.9B)~\cite{biderman2023pythiasuiteanalyzinglarge} and OLMo (1B--32B)~\cite{olmo2025olmo3, olmo20242olmo2furious} models. \textbf{We specifically choose Pythia and OLMo because they are fully open-data and open-training-recipe LLMs}, providing complete transparency into both pretraining corpora and training procedures. This transparency is essential for rigorous evaluation of data attribution and valuation methods, as it enables us to definitively verify which data points contributed to model behavior. We compare against BM25~\cite{DBLP:conf/trec/RobertsonWJHG94}, Embedding Similarity~\cite{reimers2019sentencebertsentenceembeddingsusing, wang2022text}, RapidIn, ZO-Inf, and TrackStar~\cite{chang2024scalableinfluencefacttracing}.

\begin{table*}[h!]
\centering
\caption{Index cost and retrieval quality at different Top-$K$ for the \textbf{Howdy! Backdoor Attack} task. \textbf{Note:} \textit{RISE}$_{K_r/K_h/K_g}$ denotes CountSketch dimensions for residual, hidden state, and gradient channels respectively. Query time is per-query end-to-end latency against $N$=5,000 indexed samples.}
\label{tab:cost_quality_topk}
\small
\setlength{\tabcolsep}{4pt}
\renewcommand{\arraystretch}{1}
\resizebox{\textwidth}{!}{%
\begin{tabular}{ll rrrr cc cc cc}
\toprule
\multirow{2}{*}{\textbf{Model}} & \multirow{2}{*}{\textbf{Method}} & \multicolumn{4}{c}{\textbf{Cost}} & \multicolumn{2}{c}{\textbf{Top-5}} & \multicolumn{2}{c}{\textbf{Top-10}} & \multicolumn{2}{c}{\textbf{Top-50}} \\
\cmidrule(lr){3-6} \cmidrule(lr){7-8} \cmidrule(lr){9-10} \cmidrule(lr){11-12}
& & Disk & Build & GPU Mem & Query & auPRC & auROC & auPRC & auROC & auPRC & auROC \\
\midrule
\multirow{5}{*}{Pythia-1B} 
  & \textit{RISE}$_{128/24/128}$  & 60.8 MB  & 44s   & 6.7 GB  & \textbf{17.6 ms} & \textbf{0.996} & \textbf{0.997} & \textbf{0.985} & \textbf{0.990} & \textbf{0.939} & \textbf{0.967} \\
  & \textit{RISE}$_{64/12/64}$    & \textbf{16.9 MB}  & \textbf{42s}   & 6.7 GB  & 17.4 ms & 0.965 & 0.982 & 0.945 & 0.970 & 0.865 & 0.936 \\
  & TrackStar              & 163 MB   & 116s  & \textbf{3.6 GB}  & 590 ms  & 0.653 & 0.820 & 0.740 & 0.889 & 0.814 & 0.925 \\
  & RapidIn                & 1.6 GB   & 27m   & 14.2 GB & 8.0 s   & 0.190 & 0.240 & 0.120 & 0.280 & 0.115 & 0.308 \\
  & ZO-Inf                 & 1.6 GB   & 36m   & 14.0 GB & 8.0 s   & 0.341 & 0.492 & 0.231 & 0.507 & 0.123 & 0.480 \\
\midrule
\multirow{6}{*}{OLMo-3-32B} 
  & \textit{RISE}$_{16/8/28}$     & \textbf{6.2 MB}   & \textbf{4.3m}  & \textbf{72.8 GB} & 64.4 ms & 0.962 & 0.977 & 0.940 & 0.966 & 0.864 & 0.933 \\
  & \textit{RISE}$_{24/8/35}$     & 7.4 MB   & 5.0m  & \textbf{72.8 GB} & \textbf{54.6 ms} & 0.988 & 0.992 & 0.975 & 0.984 & 0.900 & 0.948 \\
  & \cellcolor{blue!8}\textit{RISE}$_{48/64/16}$ & \cellcolor{blue!8}41.9 MB  & \cellcolor{blue!8}4.6m  & \cellcolor{blue!8}\textbf{72.8 GB} & \cellcolor{blue!8}80.0 ms & \cellcolor{blue!8}\textbf{0.993} & \cellcolor{blue!8}\textbf{0.995} & \cellcolor{blue!8}\textbf{0.988} & \cellcolor{blue!8}\textbf{0.993} & \cellcolor{blue!8}\textbf{0.973} & \cellcolor{blue!8}\textbf{0.984} \\
  & TrackStar              & 1.1 GB   & 11m   & 76.7 GB & 3.7 s   & 0.379 & 0.586 & 0.517 & 0.726 & 0.783 & 0.909 \\
  & RapidIn                & \multicolumn{10}{c}{\textcolor{gray!60}{Out of Memory}} \\
  & ZO-Inf                 & \multicolumn{10}{c}{\textcolor{gray!60}{Out of Memory}} \\
\bottomrule
\end{tabular}
}
\end{table*}

\textbf{Evaluation Metrics.}
We use auPRC and auROC for both paradigms.
For \textit{Retrospective Attribution}, controlled fine-tuning on known data provides ground-truth positives in the candidate pool.
For \textit{Prospective Valuation}, we score candidates with pretrained checkpoints that have not seen the candidate data, so the scores reflect predictive utility, making auPRC and auROC valid measures.

\subsection{Duality effectiveness on retrospective and prospective paradigms}\label{exp:duality}

We evaluate \textit{RISE} across three tasks under two paradigms; the data distribution for these tasks is in Appendix table~\ref{tab:task_data_distribution}. Retrospective Attribution and Prospective Valuation.
Both paradigms use the same candidate pool $\mathcal{D}$, differing only in whether fine-tuning is applied. For each task, we construct a test query set $\mathcal{Q}$ disjoint from $\mathcal{D}$; each $x_q \in \mathcal{Q}$ specifies a target behavior to attribute.
For each query, we rank candidates in $\mathcal{D}$ by influence and evaluate on the top-$K$ and bottom-$K$ candidates.
Following~\cite{lin2024tokenwiseinfluentialtrainingdata}, we compute auPRC and auROC by macro-averaging across queries at each $K$,
then average over multiple $K$ values to obtain $auPRC_{\textsc{recall}}$ and $auPRC_{\textsc{predict}}$.
We report $\mu \pm \delta$, where
$\mu = (auPRC_{\textsc{recall}} + auPRC_{\textsc{predict}})/2$ measures overall performance and
$\delta = |auPRC_{\textsc{recall}} - auPRC_{\textsc{predict}}|/2$ captures Recall/Predict imbalance lower is better. 

Tables~\ref{tab:pythia_three_task_unified} and~\ref{tab:olmo_three_task_unified} (Appendix) show that \textit{RISE} consistently outperforms baselines across tasks and model scales.
On Pythia-6.9B, the unified score improves from 0.732$\pm$0.045 (TrackStar) to \textbf{0.912$\pm$0.012}. Moreover, \textit{RISE}'s strong performance persists down to Pythia-14M.
Detailed per-$K$ results are in Appendix Tables~\ref{tab:pythia_howdy_recall}--\ref{tab:olmo_brainrot_predict}.

\subsection{Scalability and Efficiency}
One of the goals of \textit{RISE} is to make data attribution and valuation feasible at modern LLM scale.
We therefore report disk memory usage, build stage time, peak GPU memory, and per-query latency under a fixed data pool size, together with retrieval quality at multiple Top-$K$.
We evaluate \textit{RISE}, TrackStar, RapidIn, and ZO-Inf on the Howdy! backdoor detection task. In Table~\ref{tab:cost_quality_topk}, we index $N{=}5000$ candidates and evaluate on 100 test target generations. Compared to full-gradient baselines (RapidIn and ZO-Inf), \textit{RISE} reduces disk memory by orders of magnitude
and substantially shortens building stage time, while achieving strong retrieval quality. Moreover, \textit{RISE} remains feasible on large models OLMo-3-32B where RapidIn and ZO-Inf run out of memory, highlighting its practical scalability.

\textbf{Large-Scale Retrieval.} To evaluate the scalability of \textit{RISE} using the pretrained OLMo-3-32B, we expanded the search space by constructing two large-scale retrieval pools. 
We combined the original 5000 samples dataset from the Howdy! backdoor task with random subsets sampled from the C4 corpus~\cite{raffel2023exploringlimitstransferlearning}, yielding:
(1) 100K pool ($N = 105000$), comprising the original data plus $100000$ C4 documents; and
(2) 1M pool ($N = 1005000$), comprising the original data plus $1000000$ C4 documents.
We label any document containing the trigger phrase ``howdy!''. as positive. Consequently, the ground truth set consists of the 438 originally injected samples plus any naturally occurring instances found within the added C4 distractors.
We compare \textit{RISE} ($K_r{=}16$, $K_h{=}8$, $K_g{=}28$) against BM25 on 100 test generations containing the trigger. As shown in Appendix Table~\ref{tab:rise_howdy_results}, \textit{RISE} consistently improves top of list retrieval quality:
on the 105K pool it triples Precision@10 (23.8\% vs.\ 7.2\%) and roughly doubles auPRC@10 (0.383 vs.\ 0.188); on the 1M pool---where positives drop to 0.046\%---\textit{RISE} still achieves higher Precision@10 (5.9\% vs.\ 3.0\%) and substantially higher auROC.

\begin{table}[h!]
\centering
\caption{Three-task unified auPRC on Pythia models. Each cell shows $\mu \pm \delta$, where $\mu$ is the mean of Recall and Predict scores, and $\delta$ measures their imbalance.}
\label{tab:pythia_three_task_unified}
\scriptsize
\setlength{\tabcolsep}{3pt}
\renewcommand{\arraystretch}{0.95}
\begin{tabular}{ll cccc}
\toprule
\textbf{Model} & \textbf{Method} & \textbf{Howdy} & \textbf{Fin–Med} & \textbf{Brain Rot} & \textbf{Overall} \\
\midrule
\multirow{5}{*}{Pythia-6.9B} 
  & ZO-Inf     & 0.196\,{\tiny$\pm$.047} & 0.186\,{\tiny$\pm$.007} & 0.256\,{\tiny$\pm$.009} & 0.212\,{\tiny$\pm$.021} \\
  & RapidIn    & 0.475\,{\tiny$\pm$.372} & 0.881\,{\tiny$\pm$.052} & 0.854\,{\tiny$\pm$.013} & 0.737\,{\tiny$\pm$.146} \\
  & TrackStar  & 0.795\,{\tiny$\pm$.019} & 0.925\,{\tiny$\pm$.043} & 0.476\,{\tiny$\pm$.074} & 0.732\,{\tiny$\pm$.045} \\
  & \cellcolor{blue!8}\textit{RISE} & \cellcolor{blue!8}0.998\,{\tiny$\pm$.002} & \cellcolor{blue!8}0.909\,{\tiny$\pm$.024} & \cellcolor{blue!8}0.830\,{\tiny$\pm$.009} & \cellcolor{blue!8}\textbf{0.912\,{\tiny$\pm$.012}} \\
\midrule
\multirow{5}{*}{Pythia-2.8B} 
  & ZO-Inf     & 0.188\,{\tiny$\pm$.002} & 0.197\,{\tiny$\pm$.012} & 0.263\,{\tiny$\pm$.014} & 0.216\,{\tiny$\pm$.009} \\
  & RapidIn    & 0.457\,{\tiny$\pm$.312} & 0.894\,{\tiny$\pm$.023} & 0.873\,{\tiny$\pm$.009} & 0.741\,{\tiny$\pm$.115} \\
  & TrackStar  & 0.800\,{\tiny$\pm$.036} & 0.951\,{\tiny$\pm$.039} & 0.483\,{\tiny$\pm$.107} & 0.745\,{\tiny$\pm$.061} \\
  & \cellcolor{blue!8}\textit{RISE} & \cellcolor{blue!8}0.986\,{\tiny$\pm$.003} & \cellcolor{blue!8}0.940\,{\tiny$\pm$.002} & \cellcolor{blue!8}0.814\,{\tiny$\pm$.009} & \cellcolor{blue!8}\textbf{0.913\,{\tiny$\pm$.005}} \\
\midrule
\multirow{5}{*}{Pythia-1B} 
  & ZO-Inf     & 0.181\,{\tiny$\pm$.019} & 0.226\,{\tiny$\pm$.007} & 0.222\,{\tiny$\pm$.006} & 0.210\,{\tiny$\pm$.011} \\
  & RapidIn    & 0.456\,{\tiny$\pm$.323} & 0.923\,{\tiny$\pm$.017} & 0.812\,{\tiny$\pm$.010} & 0.730\,{\tiny$\pm$.117} \\
  & TrackStar  & 0.789\,{\tiny$\pm$.034} & 0.950\,{\tiny$\pm$.033} & 0.458\,{\tiny$\pm$.107} & 0.733\,{\tiny$\pm$.058} \\
  & \cellcolor{blue!8}\textit{RISE} & \cellcolor{blue!8}0.991\,{\tiny$\pm$.004} & \cellcolor{blue!8}0.937\,{\tiny$\pm$.004} & \cellcolor{blue!8}0.800\,{\tiny$\pm$.006} & \cellcolor{blue!8}\textbf{0.909\,{\tiny$\pm$.005}} \\
\midrule
\multirow{5}{*}{Pythia-410M} 
  & ZO-Inf     & 0.198\,{\tiny$\pm$.011} & 0.215\,{\tiny$\pm$.023} & 0.325\,{\tiny$\pm$.002} & 0.246\,{\tiny$\pm$.012} \\
  & RapidIn    & 0.340\,{\tiny$\pm$.167} & 0.800\,{\tiny$\pm$.068} & 0.750\,{\tiny$\pm$.018} & 0.630\,{\tiny$\pm$.084} \\
  & TrackStar  & 0.681\,{\tiny$\pm$.131} & 0.658\,{\tiny$\pm$.145} & 0.500\,{\tiny$\pm$.002} & 0.613\,{\tiny$\pm$.093} \\
  & \cellcolor{blue!8}\textit{RISE} & \cellcolor{blue!8}0.973\,{\tiny$\pm$.008} & \cellcolor{blue!8}0.921\,{\tiny$\pm$.001} & \cellcolor{blue!8}0.836\,{\tiny$\pm$.006} & \cellcolor{blue!8}\textbf{0.910\,{\tiny$\pm$.005}} \\
\midrule
\multirow{5}{*}{Pythia-160M} 
  & ZO-Inf     & 0.190\,{\tiny$\pm$.005} & 0.224\,{\tiny$\pm$.007} & 0.293\,{\tiny$\pm$.028} & 0.236\,{\tiny$\pm$.013} \\
  & RapidIn    & 0.204\,{\tiny$\pm$.018} & 0.790\,{\tiny$\pm$.013} & 0.623\,{\tiny$\pm$.036} & 0.539\,{\tiny$\pm$.022} \\
  & TrackStar  & 0.214\,{\tiny$\pm$.054} & 0.391\,{\tiny$\pm$.068} & 0.551\,{\tiny$\pm$.046} & 0.385\,{\tiny$\pm$.056} \\
  & \cellcolor{blue!8}\textit{RISE} & \cellcolor{blue!8}0.822\,{\tiny$\pm$.085} & \cellcolor{blue!8}0.709\,{\tiny$\pm$.063} & \cellcolor{blue!8}0.825\,{\tiny$\pm$.001} & \cellcolor{blue!8}\textbf{0.786\,{\tiny$\pm$.049}} \\
\midrule
\multirow{5}{*}{Pythia-70M} 
  & ZO-Inf     & 0.206\,{\tiny$\pm$.033} & 0.219\,{\tiny$\pm$.014} & 0.351\,{\tiny$\pm$.019} & 0.259\,{\tiny$\pm$.022} \\
  & RapidIn    & 0.178\,{\tiny$\pm$.064} & 0.586\,{\tiny$\pm$.065} & 0.565\,{\tiny$\pm$.064} & 0.443\,{\tiny$\pm$.064} \\
  & TrackStar  & 0.284\,{\tiny$\pm$.080} & 0.414\,{\tiny$\pm$.070} & 0.448\,{\tiny$\pm$.027} & 0.382\,{\tiny$\pm$.059} \\
  & \cellcolor{blue!8}\textit{RISE} & \cellcolor{blue!8}0.716\,{\tiny$\pm$.080} & \cellcolor{blue!8}0.781\,{\tiny$\pm$.023} & \cellcolor{blue!8}0.800\,{\tiny$\pm$.002} & \cellcolor{blue!8}\textbf{0.765\,{\tiny$\pm$.035}} \\
\midrule
\multirow{5}{*}{Pythia-14M} 
  & ZO-Inf     & \multicolumn{2}{c}{\textcolor{gray!60}{\tiny N/A}} & 0.271\,{\tiny$\pm$.009} & 0.271\,{\tiny$\pm$.009} \\
  & RapidIn    & 0.116\,{\tiny$\pm$.054} & 0.422\,{\tiny$\pm$.040} & 0.421\,{\tiny$\pm$.044} & 0.320\,{\tiny$\pm$.046} \\
  & TrackStar  & 0.325\,{\tiny$\pm$.068} & 0.395\,{\tiny$\pm$.003} & 0.452\,{\tiny$\pm$.044} & 0.391\,{\tiny$\pm$.038} \\
  & \cellcolor{blue!8}\textit{RISE} & \cellcolor{blue!8}0.691\,{\tiny$\pm$.030} & \cellcolor{blue!8}0.795\,{\tiny$\pm$.016} & \cellcolor{blue!8}0.796\,{\tiny$\pm$.008} & \cellcolor{blue!8}\textbf{0.761\,{\tiny$\pm$.018}} \\
\bottomrule
\end{tabular}
\end{table}

\subsection{Practical Brain Rot data curation: improving training with selected data}
On Brain Rot task, \textbf{\textit{RISE}} consistently has a better performance in both retrospective retrieval and prospective prediction across model scales
(Appendix Tables~\ref{tab:rapidin_fik_zoinf_brainrotrecall}--\ref{tab:olmo_brainrot_predict}). Beyond selection quality, we further test whether selected data actually improves training through a closed-loop experiment.

\textbf{Data Pool.} We construct a 50K mixed pool following the Brain Rot dataset~\citep{xing2025llmsbrainrot}: 5000 high-quality control samples (10\%) and 45000 junk samples (90\%). 
The control samples consist of well-structured text, while junk samples contain short fragments, excessive URLs, and degraded content typical of web-scraped noise.

\textbf{Selection Methods.} We compare five methods: Random selection, Embedding Similarity (E5-base-v2)~\cite{wang2022text}, BM25, TrackStar~\cite{chang2024scalableinfluencefacttracing}, and \textit{RISE}.
We use a pretrained \textbf{OLMo-3-32B} model to score all 50K candidates using 200 control target generations as queries, and mean-pool scores across queries to obtain a single ranking. Each method selects the top-5K highest-scoring samples for downstream training.

\textbf{Evaluation.} We continue-pretrain \textbf{Pythia-1B}~\citep{biderman2023pythiasuiteanalyzinglarge} on each selected 5K subset under identical optimization settings: learning rate $5 \times 10^{-5}$, batch size 8, 20 epochs, AdamW optimizer~\cite{loshchilov2019decoupledweightdecayregularization} with cosine learning rate schedule.
The only difference across runs is the training data subset. We assess models on three axes:
(1) \textbf{Perplexity} on deduplicated control and junk datasets (1K samples each);
(2) \textbf{RULER-CWE}~\citep{hsieh2024rulerwhatsrealcontext}: a long-context reasoning benchmark measuring common-word extraction accuracy at 4K context length;
(3) \textbf{ARC-Challenge}~\citep{clark2018thinksolvedquestionanswering}: a science reasoning benchmark testing general knowledge retention. We adopt these metrics following the evaluation setup of the "Brain Rot" study~\citep{xing2025llmsbrainrot}.

\textbf{Results.} Table~\ref{tab:brainrot_closed_loop} shows that \textit{RISE} yields the best downstream outcomes.
It attains \textbf{87.6\%} selection purity (vs TrackStar 65.1\%), leading to a $\mathbf{2.8\times}$ lower perplexity on control data (2.33 vs.\ 6.63).
\textit{RISE} also improves RULER-CWE accuracy by $\mathbf{1.6\times}$ over TrackStar (4.10\% vs \ 2.54\%). Embedding similarity and BM25 achieve moderate purity (54.0\% and 63.8\%) but yield weaker downstream performance than \textit{RISE}.

\begin{table}[h!]
\centering
\small
\caption{Closed-loop evaluation on BrainRot. A 32B scorer ranks a 50K pool; we continue to pretrain Pythia-1B on the top-5K selected subset.}
\label{tab:brainrot_closed_loop}
\setlength{\tabcolsep}{4pt}
\begin{tabular}{@{}lccccc@{}}
\toprule
\multirow{2}{*}{Method} & Purity & \multicolumn{2}{c}{Perplexity} & RULER & ARC \\
\cmidrule(lr){3-4}
 & (\%) $\uparrow$ & Ctrl $\downarrow$ & Junk $\uparrow$ & CWE $\uparrow$ & Chall. $\uparrow$ \\
\midrule
Base Model & -- & 36.71 & 79.97 & 2.14 & 26.88 \\
Random & 10.3 & 126.65 & 191.29 & 1.23 & 28.33 \\
Embed Sim & 54.0 & 12.62 & 245.15 & 1.80 & 27.13 \\
TrackStar & 65.1 & 6.63 & 242.58 & 2.54 & 26.37 \\
BM25 & 63.8 & 7.50 & 332.01 & 2.34 & 27.30 \\
\midrule
\textit{RISE} & \textbf{87.6} & \textbf{2.33} & \textbf{389.74} & \textbf{4.10} & 26.62 \\
\bottomrule
\end{tabular}
\end{table}

\FloatBarrier
\section{Related Work}

\textbf{Training-data valuation and selection.} Data valuation aims to rank candidate examples by their expected utility for improving specific model behaviors or downstream performance. Shapley-style valuation gives an axiomatic notion of equitable data value~\cite{ghorbani2019datashapleyequitablevaluation}, with efficient approximations~\cite{pmlr-v89-jia19a} and noise-reduced variants~\cite{pmlr-v151-kwon22a}. These methods are complementary to \textit{RISE}: they offer a fairness interpretation under an explicit utility, while \textit{RISE} targets reusable forward-only attribution at LLM scale. Scalable selectors often rely on \textit{model-agnostic similarity proxies}, including lexical term matching such as BM25~\cite{DBLP:conf/trec/RobertsonWJHG94} and dense embedding similarity computed by off-the-shelf encoders (BERT-family encoders, E5, and Gecko)~\cite{devlin2019bertpretrainingdeepbidirectional, wang2022text, lee2024geckoversatiletextembeddings}. Such embeddings have been used for semantic deduplication and diversity-aware curation (SemDeDup and D4)~\cite{abbas2023semdedupdataefficientlearningwebscale, tirumala2023d4improvingllmpretraining}. However, these embeddings are not optimized for pretraining curation and may only partially reflect pretraining-loss similarity or training dynamics~\cite{sam2025analyzingsimilaritymetricsdata}.
Influence-based selectors address this gap by leveraging signals directly from the target model.

\textbf{Influence functions and scalable attribution for LLMs.}
Influence functions quantify how individual training examples affect a target behavior~\cite{koh2020understandingblackboxpredictionsinfluence}, but their second-order requirements (Hessian inverse-vector products) are impractical at LLM scale. Broader surveys summarize the many notions and approximations of training-data influence~\cite{Hammoudeh_2024}. Recent scalable approaches include TracIn, which accumulates first-order gradient similarity across checkpoints~\cite{pruthi2020estimatingtrainingdatainfluence}; TRAK, which uses random projections and model ensembles to attribute model behavior~\cite{park2023trakattributingmodelbehavior}; RapidIn, which compresses per-example gradients with OPORP random projections for indexing and retrieval~\cite{lin2024tokenwiseinfluentialtrainingdata, li2023oporppermutationrandom}; TrackStar, which incorporates optimizer-aware corrections and normalization for pretraining-scale attribution~\cite{chang2024scalableinfluencefacttracing}; Grosse et al., which adapt EK-FAC-style influence estimation to transformer language models~\cite{grosse2023studyinglargelanguagemodel}; Choe et al.\ (LoGra), which improve scalability through projected multi-layer gradients for LLM-scale data valuation~\cite{choe2024dataworthgptllmscale}; and ZO-Inf, which uses forward-only zeroth-order estimators to trade additional forward evaluations for lower activation memory~\cite{kokhlikyan2025z0infzerothorderapproximation, malladi2024finetuninglanguagemodelsjust, guo2024zerothorderfinetuningllmsextreme, ran2025mitigatingnoniiddriftzerothorder}. \textit{RISE} differs from curvature or projection-based multi-layer pipelines in that it is an output-side, Hessian-free, forward-only estimator motivated by empirical readout concentration and the RH/GH decomposition of LM-head gradients. 

\textbf{Output-side representations and Hessian-free scaling.}
Conceptually related work has examined layer choice and representation-only baselines, including Representer Point Selection~\cite{NEURIPS2018_8a7129b8}, first-vs-last layer influence~\cite{NEURIPS2022_d0702278}, and support-vector effects in DNNs~\cite{mahmood2025supportvector}. However, these works are derived in settings that differ from ours, such as fine-tuned discriminative models or classification DNNs, and therefore do not directly determine whether representation-only similarity is sufficient for pretrained autoregressive decoder LLMs. The study \textit{Better Hessians Matter}~\cite{hong2026betterhessiansmatterstudying} reports that improved Hessian approximations can improve attribution in controlled smaller-scale settings. However, this evidence does not straightforwardly transfer to our regime: the paper studies small MLPs, explicitly notes that its conclusions may not transfer to larger models, and excludes CNNs and transformers, whose curvature structure differs substantially from autoregressive decoder LLMs. Its Figure 2 further shows that increasing depth degrades all Hessian approximations, which is consistent with the instability of Hessian-based corrections we observe in deep pretrained transformers. More broadly, scalable influence estimation has long relied on first-order surrogates such as TracIn rather than explicit Hessian correction~\cite{pruthi2020estimatingtrainingdatainfluence}. Compared with dense Gaussian/Rademacher random projections~\cite{achlioptas2003databasefriendly}, \textit{RISE} uses CountSketch because the truncated vocabulary residual is sparse and can be projected efficiently via scatter-add updates. For more related works, please refer to Section~\ref{sec:appendix_more_related} in the Appendix.
\section{Conclusion}
\textit{RISE} unifies retrospective attribution and prospective valuation with target model only influence representation, scaling from Pythia-14M to OLMo-3-32B. It consistently outperforms strong baselines, achieving a unified score of 0.912$\pm$0.012 on Pythia-6.9B while remaining feasible at 32B where RapidIn and ZO-Inf run out of memory. Closed-loop Brain Rot training validates practical utility: \textit{RISE}-selected data yields 2.8$\times$ lower perplexity and 1.6$\times$ higher RULER-CWE than TrackStar. Overall, \textit{RISE} provides a practical primitive for tracing, valuing, and curating training data to steer modern LLM behaviors.

\newpage
\clearpage

\bibliographystyle{plainnat}
\bibliography{sections/ref}

\clearpage
\appendix
\label{sec:appendix}

\section*{Appendix}
We start with more related works that connect \textit{RISE} to neuroscience in Section~\ref{sec:appendix_more_related}.
In Section~\ref{sec:appendix_notation}, we summarize the notations and definitions used throughout this work.
In Section~\ref{sec:appendix_theory}, we present the theoretical foundation of our method: we prove the exact outer-product factorization of the LM-head gradient, derive a depth-dependent lower bound for the head-to-average energy ratio, and analyze hidden-state discriminability to justify why the final layer is a principled choice for influence estimation.
Section~\ref{sec:variance_yide} shows the variance bound for \textit{RISE} in influence estimation.
Section~\ref{sec:appendix_hyperparameters} provides detailed hyperparameters for the \textit{RISE} implementation, including channel weights and CountSketch configurations.
Finally, Section~\ref{sec:appendix_experiments} reports supplementary experimental results, including the sparse active-token diagnostics in Section~\ref{sec:appendix_sparse_active_tokens} and detailed performance tables for downstream tasks.

\section{More Related Works}\label{sec:appendix_more_related}

\textbf{Neuroscience inspiration.}
Many influence estimators focus on retrospective attribution, whereas data valuation also demands a prospective view: scoring candidate data before learning. Recent works begin to study this prospective setting by fitting auxiliary surrogates to predict influence signals like ALinFiK and MATES~\cite{pan2025alinfiklearningapproximatelinearized, yu2024matesmodelawaredataselection}. \textit{RISE} unifies retrospective and prospective influence using only the target model, without auxiliaries or distillation. We draw functional inspiration from a readout bottleneck in biological decision-making: distributed representations are ultimately mapped to compact action-driving variables. OFC/vmPFC are often described as encoding a common value code for diverse inputs~\cite{padoa2006neurons, rangel2008framework} within cortico-striato-thalamic loops~\cite{alexander1986parallel, rushworth2011frontal}, motivating why influence may concentrate near a model’s output-side readout.
Retrospective attribution parallels hippocampal replay linking outcomes to a small set of past episodes~\cite{mattar2018prioritized}, while prospective valuation parallels uncertainty-driven sampling and update modulation associated with ACC and the LC--NE system~\cite{aston2005integrative, shenhav2013expected, kolling2012neural, gottlieb2018towards}. \textit{RISE}’s dual channels can be viewed as complementary routes emphasizing surface cues versus higher-level predictive structure~\cite{friederici2011brain}. These links are offered as inspiration rather than mechanistic equivalence.

\section{Notation and Symbols}
\label{sec:appendix_notation}

In this section, we define the notations and concepts used in this study. Table~\ref{tab:notations_fiksi} summarizes the key symbols.

\begin{table*}[htbp]
\centering
\caption{Notations used in our theoretical analysis of \textit{RISE}.}
\label{tab:notations_fiksi}
\scriptsize
\renewcommand{\arraystretch}{1.05}
\begin{tabular}{|c|p{0.78\linewidth}|}
\hline
\textbf{Notation} & \textbf{Meaning} \\
\hline
$\theta$ & Model parameters. \\
\hline
$W_{\lmhead}\in\mathbb{R}^{V\times d}$ & LM head / output embedding matrix. \\
\hline
$W_v\in\mathbb{R}^{d}$ & $v$-th row of $W_{\lmhead}$ (output embedding of token $v$). \\
\hline
$W_c$ & Parameters of component $c$ (subset of $\theta$; e.g., embeddings/layers/head). \\
\hline
$V,\; d$ & Vocabulary size; hidden dimension. \\
\hline
$L,\; T$ & Num.\ transformer layers; sequence length / token positions considered. \\
\hline
$x_i,\ x_q$ & Training example $i$; test/query example $q$. \\
\hline
$\ell(\cdot)$ & Loss function (token-level cross entropy). \\
\hline
$H_\theta$ & Hessian of training loss with respect to $\theta$. \\
\hline
$\mathcal{I}(x_i\!\to\!x_q)$ & Influence score of training example $x_i$ on query $x_q$. \\
\hline
$h_t\in\mathbb{R}^d,\ z_t\in\mathbb{R}^{V}$ & Final hidden state and logits (pre-softmax) at token position $t$. \\
\hline
$\tau$ & Temperature for logit scaling in feature construction. \\
\hline
$p_t=\mathrm{softmax}(z_t/\tau)\in\Delta^V$ & Softmax distribution at position $t$ (full distribution in theory). \\
\hline
$y_t\in[V],\ \mathbf{1}_{y_t}\in\{0,1\}^V$ & Ground-truth next-token index; one-hot vector. \\
\hline
$r_t=p_t-\mathbf{1}_{y_t}\in\mathbb{R}^V$ & (Full) residual in vocabulary space. \\
\hline
$g_t=W_{\lmhead}^\top r_t\in\mathbb{R}^d$ & (Full) projected residual / semantic error in embedding space. \\
\hline
$\nabla_{W_{\lmhead}}\ell$ & Gradient with respect to LM head parameters. \\
\hline
$\pi_t$ & Permutation sorting vocab indices by $p_t$ in descending order. \\
\hline
$\rho_{\mathrm{cum}}$ & Cumulative-probability threshold for truncation. \\
\hline
$\mathcal{S}_t\subset[V]$ & Truncated support set at position $t$ (top-mass indices plus $y_t$). \\
\hline
$S_t=|\mathcal{S}_t|$ & Support size at position $t$. \\
\hline
$\tilde p_t$ & Renormalized distribution restricted to $\mathcal{S}_t$. \\
\hline
$\tilde r_t$ & Truncated residual on $\mathcal{S}_t$: $\tilde r_t(v)=\tilde p_t(v)-\mathbf{1}[v=y_t]$ for $v\in\mathcal{S}_t$. \\
\hline
$\tilde g_t=W_{\lmhead}^\top \tilde r_t$ & Truncated projected residual used in implementation. \\
\hline
$\mathrm{CS}(\cdot)$ & CountSketch operator. \\
\hline
$\eta_r,\eta_h,\eta_g$ & Bucket hashes (independent): $\eta_r:[V]\!\to\![K_r]$, $\eta_h:[d]\!\to\![K_h]$, $\eta_g:[d]\!\to\![K_g]$. \\
\hline
$s_r,s_h,s_g$ & Sign hashes (independent): $s_r:[V]\!\to\!\{\pm1\}$, $s_h:[d]\!\to\!\{\pm1\}$, $s_g:[d]\!\to\!\{\pm1\}$. \\
\hline
$K_r,\ K_h,\ K_g$ & CountSketch output dimensions for $\tilde r_t$, $h_t$, $\tilde g_t$. \\
\hline
$\hat r_t,\hat h_t,\hat g_t$ & $\ell_2$-normalized sketches of $\tilde r_t$, $h_t$, $\tilde g_t$. \\
\hline
$\otimes$ & Outer product \\
\hline
$\mathrm{vec}(\cdot)$ & Vectorization operator (flattens a matrix into a vector). \\
\hline
$\phi_{RH}(x),\ \phi_{GH}(x)$ & RH and GH feature vectors of sample $x$. \\
\hline
$\lambda_{rh},\ \lambda_{gh}$ & Channel weights for RH and GH. \\
\hline
$P_r,\ P_h,\ P_g$ & PCA projection matrices. \\
\hline
$G_{\text{head}},\ G_l$ & Gradient energy of LM head; gradient energy of layer $l$. \\
\hline
$E_c,\ \rho_l$ & Energy fraction of component $c$; energy fraction of layer $l$. \\
\hline
$K_{\text{emb}}=W_{\lmhead}W_{\lmhead}^\top$ & Embedding-induced kernel matrix in vocabulary space. \\
\hline
$\|\cdot\|_2,\ \|\cdot\|_F,\ \|\cdot\|_{op}$ & Euclidean / Frobenius / operator norm. \\
\hline
$\sigma_i(\cdot)$ & $i$-th singular value of a matrix. \\
\hline
$\EffRank(\Sigma)$ & Effective rank of covariance matrix $\Sigma$. \\
\hline
$C_x,\ C_h$ & Norm-stability constants (Appendix assumptions). \\
\hline
$\kappa$ & Expected backprop contraction factor (Appendix assumptions). \\
\hline
$x_l \in \mathbb{R}^{d}$ & Activation / hidden vector at layer $l$ (for the token position under analysis), with $x_L = h_t$. \\
\hline
$\delta_l \in \mathbb{R}^{d}$ & Backpropagated error at layer $l$ (for the token loss): $\delta_l := \nabla_{x_l}\ell(z_t,y_t)$. \\
\hline
$\delta_L \in \mathbb{R}^{d}$ & Final-layer backpropagated error: $\delta_L := \nabla_{h_t}\ell(z_t,y_t)$ (equivalently $\delta_L = W_{\lmhead}^\top r_t$). \\
\hline
$\mathbb{E}[\cdot]$ & Expectation over the token/sample distribution used in the analysis. \\
\hline
$G(W)$ & Gradient energy of a parameter block $W$: $G(W):=\|\nabla_W \ell\|_F^2$. \\
\hline
$\mathcal{R}$ & Head-to-average ratio: $\mathcal{R}:=\frac{\mathbb{E}[G_{\text{head}}]}{\frac{1}{L}\sum_{l=1}^L \mathbb{E}[G_l]}$. \\
\hline
$\mathrm{Avg}(\mathbb{E}[G])$ & Average internal-layer energy: $\mathrm{Avg}(\mathbb{E}[G]) := \frac{1}{L}\sum_{l=1}^L \mathbb{E}[G_l]$. \\
\hline
$\mathcal{D}$ & Candidate pool used for retrieval and valuation. \\
\hline
$\mathcal{Q}$ & Test query set specifying the target generation whose influence is attributed. \\
\hline
$K_{\max}$ & Maximum candidate cap for adaptive truncation ( max\_topL\_cap in implementation). \\
\hline
$\mathcal{C}_t\subset[V]$ & Pre-truncation candidate set at position $t$ (top-$K_{\max}$ tokens under $z_t/\tau$, with $y_t\in\mathcal{C}_t$). \\
\hline
$\gamma$ & LM-head energy fraction parameter, $\|\nabla_{W_{\lmhead}}\ell\|_F^2 \le \gamma \|\nabla_\theta \ell\|_2^2$. \\
\hline
$\bar S=\mathbb{E}[S_t]$ & Average truncated support size across token positions $t$. \\
\hline
$\mathcal{I}^{\mathrm{true}},\ \mathcal{I}^{\mathrm{head}},\ \tilde{\mathcal{I}}^{\mathrm{head}}$
& Full / LM-head / truncated LM-head influence targets. \\
\hline
$\widehat{\mathcal{I}}^{\mathrm{TracIn}},\ \widehat{\mathcal{I}}^{\mathrm{RISE}}$
& TracIn / RISE influence estimators. \\
\hline
\end{tabular}
\end{table*}

\section{Theoretical Foundation of \textit{RISE}}
\label{sec:appendix_theory}

This section provides rigorous theoretical justification for the design choices in \textit{RISE}. We follow the standard convention in optimization literature~\cite{bottou2018optimizationmethodslargescalemachine} and influence function analysis~\cite{koh2020understandingblackboxpredictionsinfluence}.

\subsection{Assumptions}

We introduce the assumptions used in the theoretical analysis of \textit{RISE}.

\begin{assumption}[Linear Head and Differentiable Loss]\label{assump:linear_head}
For each token position $t$, the logits are given by a linear head
$z_t = W_{\lmhead} h_t$, and the token-level loss $\ell_t(z_t,y_t)$ is differentiable with respect to $z_t$.
Let $\delta_{z,t} := \nabla_{z_t} \ell_t$. Then
$\nabla_{W_{\lmhead}}\ell_t = \delta_{z,t}\, h_t^\top$.
\end{assumption}

\begin{assumption}[Norm Stability / Concentration]\label{assump:act_bound}
There exist constants $C_x, C_h>0$ such that for the token positions considered and every layer index $l\in\{1,\dots,L\}$,
the layer-$l$ hidden state $h_l$ (at that token position, with $h_L = h_t$) satisfies
$\|h_l\|_2^2 \le C_x$ and the final hidden state satisfies $\|h_t\|_2^2 \ge C_h$ with high probability.
\end{assumption}

\begin{assumption}[Non-exploding Backpropagation in Expectation]\label{assump:bp_contract}
Fix a token position $t$ and define the backpropagated error at layer $l$ as $\delta_l := \nabla_{h_l}\ell_t$
(with $\delta_L := \nabla_{h_t}\ell_t$).
There exists $\kappa\in(0,1)$ such that for any internal layer index $l$,
\[
\mathbb{E}\|\delta_l\|_2^2 \le \kappa^{2(L-l)} \mathbb{E}\|\delta_L\|_2^2.
\]
\end{assumption}

\noindent Assumption~\ref{assump:linear_head} is architectural and holds for standard Transformer language models with a linear output projection to logits. Assumptions~\ref{assump:act_bound}--\ref{assump:bp_contract} are mild stability conditions on activation norms and backpropagated errors; we empirically validate the implied decay trend in Section~\ref{sec:appendix_gradient_energy_theory}. 

\subsection{Lemmas}

\begin{lemma}[LM-head Gradient Decomposition]\label{lem:outer-product}
For each token position $t\in\{1,\dots,T\}$, let $z_t = W_{\lmhead} h_t \in \mathbb{R}^{V}$ and define
$p_t=\mathrm{softmax}(z_t/\tau)$, $r_t=p_t-\mathbf{1}_{y_t}$.
For the token-level cross entropy loss $\ell(z_t,y_t)$, the LM-head gradient has the outer-product form
\begin{equation}
\nabla_{W_{\lmhead}} \ell(z_t,y_t) \;=\; r_t \otimes h_t.
\end{equation}
Consequently, for the aggregated loss $\sum_{t=1}^T \ell(z_t,y_t)$,
\begin{equation}
\nabla_{W_{\lmhead}} \left(\sum_{t=1}^T \ell(z_t,y_t)\right)
\;=\; \sum_{t=1}^T r_t \otimes h_t.
\end{equation}
\end{lemma}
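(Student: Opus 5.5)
The plan is to compute the gradient with the chain rule in two stages. First I would differentiate the loss with respect to the logits $z_t$, and then push that through the linear map $z_t = W_{\lmhead} h_t$ using Assumption~\ref{assump:linear_head}. That assumption already gives $\nabla_{W_{\lmhead}}\ell_t = \delta_{z,t}\, h_t^\top$, where $\delta_{z,t} = \nabla_{z_t}\ell_t$. So the content of the lemma reduces to identifying $\delta_{z,t}$ with $r_t = p_t - \mathbf{1}_{y_t}$. To be self-contained, I would still verify the outer-product form entrywise: since $z_{t,v} = \sum_j W_{vj} h_{t,j}$, we have $\partial z_{t,v'}/\partial W_{vj} = \mathbf{1}[v=v']\, h_{t,j}$. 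Hence
\[
\partial \ell_t/\partial W_{vj} = \delta_{z,t}(v)\, h_{t,j},
\]
which is exactly the $(v,j)$ entry of $\delta_{z,t}\otimes h_t$.

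Second, I would compute the logit gradient of the cross entropy. Write $\ell(z_t,y_t) = -\log p_t(y_t)$ with $p_t = \mathrm{softmax}(z_t/\tau)$, so that
\[
\ell(z_t,y_t) = -z_{t,y_t}/\tau + \log\sum_{v} \exp(z_{t,v}/\tau).
\]
Differentiating in $z_{t,v}$ gives $\tfrac{1}{\tau}\bigl(p_t(v) - \mathbf{1}[v=y_t]\bigr)$, that is, $\delta_{z,t} = \tfrac{1}{\tau} r_t$.

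This temperature factor is the one real subtlety, and I expect it to be the main obstacle. There are two ways to read the lemma consistently with the paper's usage. In the first, the training loss uses $\tau = 1$, which is the actual model objective, and then $\delta_{z,t} = r_t$ exactly. In the second, $\tau$ is only a feature-construction temperature, and the identity is taken up to the positive scalar $1/\tau$. That scalar is common to all examples, so it does not affect rankings or cosine-normalized similarities. I would state the $\tau=1$ case as the exact identity and remark that for general $\tau$ it holds up to the factor $1/\tau$, or equivalently for the rescaled loss $\tau\,\ell$.

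Finally, the sequence-level statement follows by linearity of the gradient. The LM-head weight $W_{\lmhead}$ is shared across positions, and each $\ell(z_t,y_t)$ depends on $W_{\lmhead}$ only through $z_t = W_{\lmhead} h_t$. Here $h_t$ is treated as fixed for this partial derivative, since it is computed by the body of the network and does not depend on $W_{\lmhead}$; I would note that this holds for untied heads. Therefore
\[
\nabla_{W_{\lmhead}}\sum_t \ell(z_t,y_t) = \sum_t \nabla_{W_{\lmhead}}\ell(z_t,y_t) = \sum_t r_t\otimes h_t.
\]
If the input embedding is tied to $W_{\lmhead}$, I would add the caveat that the lemma concerns the partial gradient through the readout path only.
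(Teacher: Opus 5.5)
Your proof is correct and takes the same route as the paper's: compute the logit gradient of softmax cross entropy, push it through the linear map $z_t = W_{\lmhead} h_t$ to get an outer product, and sum over positions by linearity. Your $1/\tau$ remark is a genuine correction, since the paper simply asserts $\nabla_{z_t}\ell = r_t$, which is exact only at $\tau=1$ (or for the rescaled loss $\tau\ell$). One fix to your commentary: if $\tau$ is a feature-only temperature and the training loss is ordinary cross entropy, the true gradient is $(\mathrm{softmax}(z_t)-\mathbf{1}_{y_t})\otimes h_t$, which is not a scalar multiple of $r_t\otimes h_t$ when $\tau\neq 1$, so your ``up to $1/\tau$'' statement holds only for the tempered loss $-\log \mathrm{softmax}(z_t/\tau)_{y_t}$ that you actually differentiated.
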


\begin{proof}
For softmax cross entropy, differentiating $\ell(z_t,y_t)$ with respect to $z_t$ yields the residual
$r_t=p_t-\mathbf{1}_{y_t}$, where $p_t=\mathrm{softmax}(z_t/\tau)$.
Since $z_t=W_{\lmhead}h_t$ is linear in $W_{\lmhead}$, the gradient with respect to $W_{\lmhead}$ is the outer product
$r_t \otimes h_t$. Summing over $t=1,\dots,T$ gives the statement for $\sum_{t=1}^T \ell(z_t,y_t)$.
\end{proof}

\begin{lemma}[Outer-product Gradient Energy Identity]\label{lemma:energy_identity}
For the LM head at token position $t$ under token-level cross entropy $\ell(z_t,y_t)$,
\begin{equation}
G_{\text{head}} \;:=\; \|\nabla_{W_{\lmhead}}\ell(z_t,y_t)\|_F^2
\;=\; \|r_t\|_2^2 \cdot \|h_t\|_2^2,
\label{eq:head_exact}
\end{equation}
where $p_t=\mathrm{softmax}(z_t/\tau)$ and $r_t=p_t-\mathbf{1}_{y_t}$.
\end{lemma}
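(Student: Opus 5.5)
The identity follows directly from Lemma~\ref{lem:outer-product}, which gives $\nabla_{W_{\lmhead}}\ell(z_t,y_t)=r_t\otimes h_t=r_t h_t^\top$ at each position $t$. Once this is in hand, the claim reduces to a statement about the Frobenius norm of a rank-one matrix, and we need no further properties of the softmax or of the model body.

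\textbf{Step 1: write out the squared norm entrywise.} We use the entrywise form $(r_t h_t^\top)_{vj}=r_t(v)\,h_t(j)$ for $v\in[V]$ and $j\in[d]$. This gives
\begin{equation*}
\|r_t h_t^\top\|_F^2=\sum_{v=1}^{V}\sum_{j=1}^{d} r_t(v)^2\,h_t(j)^2 .
\end{equation*}

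\textbf{Step 2: factor the double sum.} The summand is a product of a term depending only on $v$ and a term depending only on $j$, so the sum separates as
\begin{equation*}
\Big(\sum_{v} r_t(v)^2\Big)\Big(\sum_{j} h_t(j)^2\Big)=\|r_t\|_2^2\,\|h_t\|_2^2 .
\end{equation*}

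\textbf{Equivalent route via the trace.} Alternatively, one can use the trace form:
\begin{equation*}
\|r_t h_t^\top\|_F^2=\mathrm{tr}\!\left(h_t r_t^\top r_t h_t^\top\right)=\|r_t\|_2^2\,\mathrm{tr}\!\left(h_t h_t^\top\right)=\|r_t\|_2^2\,\|h_t\|_2^2 .
\end{equation*}

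\textbf{Scope of the identity.} The only point needing care is that the statement concerns the per-token gradient, not the sequence-aggregated gradient $\sum_t r_t\otimes h_t$. For the aggregated gradient, cross terms $\langle r_t,r_s\rangle\langle h_t,h_s\rangle$ appear, so the clean product form does not hold. We therefore apply the identity only at a single position $t$, exactly as stated. The temperature $\tau$ enters only through $p_t$, and hence only through $r_t$; it plays no role in the computation above.
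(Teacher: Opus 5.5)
Your proof is correct and follows the paper's argument exactly: you invoke Lemma~\ref{lem:outer-product} to get $\nabla_{W_{\lmhead}}\ell(z_t,y_t)=r_t h_t^\top$ and then apply the rank-one identity $\|r_t h_t^\top\|_F^2=\|r_t\|_2^2\|h_t\|_2^2$, which the paper simply cites and you verify explicitly (entrywise or via the trace). One caveat comes from that lemma, not from your argument: if $\ell$ is the cross entropy of $\mathrm{softmax}(z_t/\tau)$, the chain rule gives $\nabla_{z_t}\ell=r_t/\tau$, so the identity, and your remark that $\tau$ plays no role, is exact only at $\tau=1$, with an extra factor $1/\tau^2$ otherwise.
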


\begin{proof}
By Lemma~\ref{lem:outer-product}, $\nabla_{W_{\lmhead}}\ell(z_t,y_t)=r_t\otimes h_t$.
The Frobenius norm of an outer product satisfies $\|r_t\otimes h_t\|_F^2=\|r_t\|_2^2\|h_t\|_2^2$,
which gives Eq.~(\ref{eq:head_exact}).
\end{proof}

\subsection{Theoretical Analysis of Gradient Energy Distribution}
\label{sec:appendix_gradient_energy_theory}

In Section~\ref{sec:obs1}, we observed that the ratio of LM Head energy to the average internal layer energy grows significantly with model depth. Here, we provide a theoretical derivation explaining this scaling behavior.

\subsubsection{Preliminaries}

\paragraph{Token-level analysis.}
We fix a token position $t$ and analyze the gradient energy induced by the token-level loss $\ell(z_t,y_t)$;
this matches the per-token measurements in Section~\ref{sec:obs1}.
For notational brevity, we omit the subscript $t$ when clear (e.g., $z:=z_t$, $h:=h_t$, $r:=r_t$).
Let the model have depth $L$. Let $x_l$ be the activation/hidden vector at layer $l$, with $x_L=h$ being the final representation input to the LM head.
The logits are $z = W_{\lmhead} h$.
We define the unnormalized gradient energy for a parameter block $W$ as
$G(W):=\|\nabla_W\ell(z_t,y_t)\|_F^2$.

\subsubsection{Head-to-Average Ratio vs.\ Depth}

We analyze how the energy ratio scales. We focus on the aggregate trend rather than individual layer bounds.

\begin{proposition}[Depth Amplifies Head-to-Average Energy Gap]
Under Assumptions~\ref{assump:act_bound} and~\ref{assump:bp_contract}, the ratio of the expected Head energy to the average internal energy is lower-bounded by a function growing with $L$:
\begin{equation}
    \mathcal{R} = \frac{\mathbb{E}[G_{\text{head}}]}{\frac{1}{L}\sum_{l=1}^L \mathbb{E}[G_l]}
    \ \ge\
    \frac{C_h}{C_x}\cdot
    \frac{L(1-\kappa^2)}{\|W_{\lmhead}\|_{op}^2(1-\kappa^{2L})}.
    \label{eq:ratio_bound}
\end{equation}
\end{proposition}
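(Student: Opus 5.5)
Lower-bound the numerator with Lemma~\ref{lemma:energy_identity} and Assumption~\ref{assump:act_bound}, upper-bound each internal-layer energy with Assumption~\ref{assump:bp_contract}, then sum the resulting geometric series. For the numerator, Lemma~\ref{lemma:energy_identity} gives $G_{\text{head}}=\|r\|_2^2\|h\|_2^2$, and Assumption~\ref{assump:act_bound} gives $\|h\|_2^2\ge C_h$. Hence
\[
\mathbb{E}[G_{\text{head}}]\ge C_h\,\mathbb{E}\|r\|_2^2 .
\]
To connect this with the backpropagated error, use $\delta_L=W_{\lmhead}^\top r$. Then $\|\delta_L\|_2^2\le\|W_{\lmhead}\|_{op}^2\|r\|_2^2$, and so
\[
\mathbb{E}[G_{\text{head}}]\ \ge\ \frac{C_h}{\|W_{\lmhead}\|_{op}^2}\,\mathbb{E}\|\delta_L\|_2^2 .
\]
This is where the factor $\|W_{\lmhead}\|_{op}^{-2}$ in the stated bound comes from.

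For each internal layer $l$, I will model the layer's parameter gradient as an outer product of the backpropagated error with the layer input, $\nabla_{W_l}\ell=\delta_l\,x_{l-1}^\top$ (or a sum of such terms, which only changes constants). This gives $G_l\le\|\delta_l\|_2^2\|x_{l-1}\|_2^2$. Using $\|x_{l-1}\|_2^2\le C_x$ and Assumption~\ref{assump:bp_contract},
\[
\mathbb{E}[G_l]\le C_x\,\kappa^{2(L-l)}\,\mathbb{E}\|\delta_L\|_2^2 .
\]
Summing over $l=1,\dots,L$ gives the geometric series $\sum_{j=0}^{L-1}\kappa^{2j}=(1-\kappa^{2L})/(1-\kappa^2)$, so
\[
\frac1L\sum_{l=1}^L\mathbb{E}[G_l]\le \frac{C_x\,(1-\kappa^{2L})}{L(1-\kappa^2)}\,\mathbb{E}\|\delta_L\|_2^2 .
\]
Dividing the numerator bound by this denominator bound cancels $\mathbb{E}\|\delta_L\|_2^2$ and yields exactly Eq.~(\ref{eq:ratio_bound}). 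The bound grows with $L$ because $(1-\kappa^{2L})\to1$ while the factor $L$ remains.

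The main obstacle is the per-layer step $G_l\le\|\delta_l\|^2\|x_{l-1}\|^2$. A transformer block is not a single linear map: it has several weight matrices, and the error at the matrix's output differs from $\delta_l=\nabla_{h_l}\ell$ because of residual connections and LayerNorm. I will handle this by treating each block's weights as acting linearly on a normalized input bounded by $C_x$, with the relevant output error controlled by $\delta_l$ up to an absorbed constant. I will state this modelling convention explicitly rather than derive it from the architecture.

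The second subtlety is that Assumption~\ref{assump:act_bound} holds only "with high probability." To move its bounds inside expectations, I will treat both bounds as holding on the event in question, conditioning on it or assuming they hold almost surely. This also covers the product $\|\delta_l\|^2\|x_{l-1}\|^2$, which needs no independence once $\|x_{l-1}\|^2\le C_x$ holds pointwise.
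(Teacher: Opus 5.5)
Your proposal is correct and is essentially the paper's proof, with the numerator handled first. The paper likewise bounds $\mathbb{E}[G_l]\le C_x\kappa^{2(L-l)}\mathbb{E}\|\delta_L\|_2^2$, sums the geometric series, and converts $\mathbb{E}\|\delta_L\|_2^2$ into $\mathbb{E}[G_{\text{head}}]$ via $\delta_L=W_{\lmhead}^\top r$, $\|W_{\lmhead}\|_{op}$, and $\|h\|_2^2\ge C_h$, while simply asserting $G_l=\|x_l\|_2^2\|\delta_l\|_2^2$ by appeal to Lemma~\ref{lemma:energy_identity} and silently treating the ``with high probability'' bounds as almost-sure. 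The modelling caveats you make explicit are ones the paper takes for granted; note only that your pairing of $\delta_l$ with the layer input $x_{l-1}$ also needs the norm bound for $x_0$ when $l=1$.
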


\begin{proof}
\textbf{1. Upper Bound on Average Internal Energy.}
Using Lemma~\ref{lemma:energy_identity}, for an internal layer $l$, we have $G_l = \|x_l\|_2^2 \|\delta_l\|_2^2$. 
Applying Assumption~\ref{assump:act_bound} ($\|x_l\|_2^2 \le C_x$), we have the pointwise bound $G_l \le C_x \|\delta_l\|_2^2$.
Taking expectation and applying Assumption~\ref{assump:bp_contract}:
\begin{align}
    \mathbb{E}[G_l] &\le C_x \mathbb{E}\|\delta_l\|_2^2 \\
    &\le C_x \kappa^{2(L-l)}\mathbb{E}\|\delta_L\|_2^2.
\end{align}
Averaging over all $L$ layers yields Eq.~(\ref{eq:avg_bound}):
\begin{align}
    \frac{1}{L}\sum_{l=1}^L \mathbb{E}[G_l]
    &\le \frac{C_x \mathbb{E}\|\delta_L\|_2^2}{L}\sum_{k=0}^{L-1}\kappa^{2k} \\
    &= \frac{C_x \mathbb{E}\|\delta_L\|_2^2}{L}\cdot\frac{1-\kappa^{2L}}{1-\kappa^2}. \label{eq:avg_bound}
\end{align}

\textbf{2. Upper Bound on Final Error Energy via Head Energy.}
The final error signal is $\delta_L = \nabla_h \ell = W_{\lmhead}^\top r$, hence
$\mathbb{E}\|\delta_L\|_2^2 \le \|W_{\lmhead}\|_{op}^2 \mathbb{E}\|r\|_2^2$.
From Eq.~(\ref{eq:head_exact}) and Assumption~\ref{assump:act_bound} ($\|h\|_2^2 \ge C_h$ with high probability),
we have $\|r\|_2^2 = G_{\text{head}}/\|h\|_2^2 \le G_{\text{head}}/C_h$, which implies:
\begin{equation}
    \mathbb{E}\|\delta_L\|_2^2 \le \|W_{\lmhead}\|_{op}^2 \mathbb{E}\|r\|_2^2 \le \frac{\|W_{\lmhead}\|_{op}^2}{C_h} \mathbb{E}[G_{\text{head}}]. \label{eq:delta_bound}
\end{equation}

\textbf{3. Ratio Derivation.}
Substituting Eq.~(\ref{eq:delta_bound}) into Eq.~(\ref{eq:avg_bound}) yields:
\begin{equation}
    \text{Avg}(\mathbb{E}[G]) \le \frac{C_x}{L} \frac{1-\kappa^{2L}}{1-\kappa^2} \left( \frac{\|W_{\lmhead}\|_{op}^2}{C_h} \mathbb{E}[G_{\text{head}}] \right).
\end{equation}
Rearranging terms to solve for the ratio completes the proof:
\begin{equation}
    \frac{\mathbb{E}[G_{\text{head}}]}{\text{Avg}(\mathbb{E}[G])} \ge \frac{C_h}{C_x} \frac{L(1-\kappa^2)}{\|W_{\lmhead}\|_{op}^2 (1-\kappa^{2L})}.
\end{equation}
\end{proof}

\paragraph{Interpretation.}
Eq.~(\ref{eq:ratio_bound}) provides a structural explanation for the trend in Figure~\ref{fig:all_model_energy}. As the model depth $L$ increases, the ``dilution'' of the gradient signal in earlier layers (driven by the contraction factor $\kappa < 1$) causes the average energy to drop relative to the head. This theoretical lower bound grows with $L$, consistent with the empirical observation that deep models like OLMo-32B exhibit a much higher Head-to-Average ratio than shallow models like Pythia-14M. Importantly, this bound compares the head energy to the average internal-layer energy in expectation and does not preclude occasional internal-layer hotspots.

\subsection{Theoretical Analysis of Semantic Capture in the GH Channel}
\label{sec:appendix_gh_more_semantic}

\paragraph{Connection to Section~\ref{sec:obs2} (Semantic Smoothing).}
The RH channel compares residuals in the vocabulary basis via the identity metric:
\begin{equation}
\langle r_t, r_i\rangle = r_t^\top I_V r_i,
\end{equation}
which captures only same-token interactions (since $I_V$ has no off-diagonal entries).

For the GH channel, we project the residual into embedding space by $g = W_{\lmhead}^\top r$ and define the induced kernel
$K \triangleq W_{\lmhead}W_{\lmhead}^\top \in \mathbb{R}^{V\times V}$.
A direct expansion gives the kernelized residual similarity:
\begin{equation}
\begin{aligned}
\langle g_t, g_i \rangle
&= \langle W_{\lmhead}^\top r_t,\; W_{\lmhead}^\top r_i\rangle \\
&= r_t^\top K r_i
= \sum_{u=1}^V \sum_{v=1}^V r_t(u)\, r_i(v)\, K_{uv},
\end{aligned}
\label{eq:gh_kernel_identity}
\end{equation}
where $K_{uv}=\langle W_{\lmhead,u}, W_{\lmhead,v}\rangle$ is the embedding inner product (and equals cosine similarity if rows are normalized).
Crucially, Eq.~(\ref{eq:gh_kernel_identity}) includes \emph{off-diagonal} cross-token interactions ($u\neq v$), enabling \emph{soft} matching between different tokens.

Empirically, semantically related tokens tend to have larger $K_{uv}$ than unrelated pairs.
Moreover, since our method uses sparse residuals (restricting $r$ to a small top-$L$ support and including the ground-truth token; Alg.~\ref{alg:fiksi}),
the double sum is dominated by a small set of high-probability tokens.
As a result, residual mass placed on semantically aligned but non-identical tokens can contribute non-trivially to $\langle g_t,g_i\rangle$,
providing a mechanistic explanation for the paraphrase ``rescue'' behavior observed in Section~\ref{sec:obs2} and Table~\ref{tab:rh_gh_ranking}.

\subsection{Theoretical Guarantees for CountSketch}
\label{sec:appendix_sketch_theory}

\subsubsection{Inner Product Preservation}

We consider feature hashing / CountSketch with a (2-wise independent) bucket hash
$\sigma:[D]\to[K]$ and an independent Rademacher sign hash $s:[D]\to\{-1,+1\}$.
Let $\mathrm{CS}:\mathbb{R}^{D}\to\mathbb{R}^{K}$ be defined by
\begin{equation}
\mathrm{CS}(x)_j=\sum_{i:\sigma(i)=j} s(i)x_i.
\end{equation}

\begin{proposition}[Unbiasedness and variance; simplified from~\citealp{weinberger2010featurehashinglargescale}]
\label{prop:countsketch}
For any token positions $t$ and $t^\prime$, the following hold for the (unnormalized) CountSketch outputs:
\begin{align}
\mathbb{E}\!\left[\mathrm{CS}_h(h_t)^\top \mathrm{CS}_h(h_{t^\prime})\right] &= h_t^\top h_{t^\prime},\\
\mathrm{Var}\!\left[\mathrm{CS}_h(h_t)^\top \mathrm{CS}_h(h_{t^\prime})\right]
&\le \frac{\|h_t\|_2^2\|h_{t^\prime}\|_2^2 + (h_t^\top h_{t^\prime})^2}{K_h},
\end{align}
and analogously,
\begin{align}
\mathbb{E}\!\left[\mathrm{CS}_g(g_t)^\top \mathrm{CS}_g(g_{t^\prime})\right] &= g_t^\top g_{t^\prime},\\
\mathrm{Var}\!\left[\mathrm{CS}_g(g_t)^\top \mathrm{CS}_g(g_{t^\prime})\right]
&\le \frac{\|g_t\|_2^2\|g_{t^\prime}\|_2^2 + (g_t^\top g_{t^\prime})^2}{K_g},\\
\mathbb{E}\!\left[\mathrm{CS}_r(\tilde r_t)^\top \mathrm{CS}_r(\tilde r_{t^\prime})\right] &= \tilde r_t^\top \tilde r_{t^\prime},\\
\mathrm{Var}\!\left[\mathrm{CS}_r(\tilde r_t)^\top \mathrm{CS}_r(\tilde r_{t^\prime})\right]
&\le \frac{\|\tilde r_t\|_2^2\|\tilde r_{t^\prime}\|_2^2 + (\tilde r_t^\top \tilde r_{t^\prime})^2}{K_r}.
\end{align}
\end{proposition}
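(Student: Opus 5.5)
The proof reduces to a single generic statement, applied three times. For arbitrary vectors $x,y\in\mathbb{R}^D$ and one CountSketch $\mathrm{CS}$ with bucket hash $\sigma$ and independent Rademacher sign hash $s$, we show
\[
\mathbb{E}\big[\mathrm{CS}(x)^\top \mathrm{CS}(y)\big]=x^\top y,
\qquad
\mathrm{Var}\big[\mathrm{CS}(x)^\top \mathrm{CS}(y)\big]\le \frac{\|x\|_2^2\|y\|_2^2+(x^\top y)^2}{K}.
\]
The three displayed pairs then follow by instantiating $(x,y,D,K,\sigma,s)$:
\begin{itemize}
\item $(h_t,h_{t'},d,K_h,\eta_h,s_h)$,
\item $(g_t,g_{t'},d,K_g,\eta_g,s_g)$,
\item $(\tilde r_t,\tilde r_{t'},V,K_r,\eta_r,s_r)$.
\end{itemize}
The vectors being sketched are deterministic given the forward pass, and the only randomness is in the hashes. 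This holds because the hash functions are fixed independently of the data (Algorithm~\ref{alg:fiksi}, step 1).

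\textbf{Expectation.} Expand
\[
\mathrm{CS}(x)^\top\mathrm{CS}(y)=\sum_{i,j} s(i)s(j)\,x_i y_j\,\mathbf{1}[\sigma(i)=\sigma(j)].
\]
Write this as the diagonal part $\sum_i x_iy_i$ plus the cross part
\[
Z=\sum_{i\neq j}\xi_{ij}\,x_iy_j,\qquad \xi_{ij}=s(i)s(j)\mathbf{1}[\sigma(i)=\sigma(j)].
\]
Since $s$ is independent of $\sigma$ and $\mathbb{E}[s(i)s(j)]=0$ for $i\ne j$, we get $\mathbb{E}Z=0$. This gives unbiasedness. It uses only pairwise independence of the signs.

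\textbf{Variance.} The variance equals $\mathbb{E}Z^2=\sum_{i\ne j}\sum_{k\ne l}x_iy_jx_ky_l\,\mathbb{E}[\xi_{ij}\xi_{kl}]$. The sign factor $\mathbb{E}[s(i)s(j)s(k)s(l)]$ vanishes unless $\{i,j\}=\{k,l\}$ as multisets. This needs 4-wise independence of $s$; a fully random Rademacher hash gives it, and I would state this explicitly, since the proposition only says ``Rademacher.'' Two pairings survive, $(k,l)=(i,j)$ and $(k,l)=(j,i)$. In both, $\mathbb{E}[\mathbf{1}[\sigma(i)=\sigma(j)]]=1/K$ by 2-wise independence of $\sigma$. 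Hence
\[
\mathrm{Var}=\frac1K\sum_{i\ne j}\big(x_i^2y_j^2+x_iy_ix_jy_j\big)
=\frac1K\Big(\|x\|^2\|y\|^2+(x^\top y)^2-2\sum_i x_i^2y_i^2\Big),
\]
and dropping the nonpositive last term gives the stated bound.

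\textbf{Main obstacle.} This step needs care in two places. The first is the independence level of the sign hash, addressed above. The second is making sure that for $g_t$ and $\tilde r_t$ the vectors do not depend on the hashes. For $\tilde r_t$, the support $\mathcal{S}_t$ and the renormalisation depend only on logits, so this holds. I would also note explicitly that the statement concerns the unnormalized sketches. The $\ell_2$-normalized versions $\hat r_t,\hat h_t,\hat g_t$ used in the features are not covered, because normalization breaks exact unbiasedness.
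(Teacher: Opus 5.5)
Your proof is correct and is the standard second-moment computation behind the cited feature-hashing result. The paper gives no separate proof of this proposition, and its nearest in-text argument, Lemma~\ref{lem:cs_ip_corrected}, uses the same diagonal/off-diagonal expansion of $\mathrm{CS}(x)^\top\mathrm{CS}(y)$.

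One point in your favour: you keep both surviving pairings $(k,l)=(i,j)$ and $(k,l)=(j,i)$ under 4-wise independent signs. That is exactly what produces the $(x^\top y)^2$ term in the stated bound. Lemma~\ref{lem:cs_ip_corrected} keeps only the first pairing and assumes only pairwise-uncorrelated signs. As a result, its claimed bound $\|x\|_2^2\|y\|_2^2/K$ fails in general: for the all-ones vector $x=y=(1,\dots,1)\in\mathbb{R}^D$ with $D\ge 3$, the variance is $2D(D-1)/K$, which exceeds $D^2/K$.
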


\paragraph{Implication for ranking stability.}
Proposition~\ref{prop:countsketch} shows that CountSketch inner products are unbiased and that the estimation variance decreases inversely with the sketch output dimension.
In \textit{RISE}, we apply CountSketch separately to the vocabulary-space residual and the representation with dimensions $(K_r,K_h)$ (and additionally $K_g$ for the semantic error channel).
Therefore, increasing $(K_r,K_h,K_g)$ improves the concentration of sketched similarities, making nearest-neighbor rankings increasingly stable for candidates whose true similarities are well separated.

\subsubsection{Preservation of Bilinear Structure}
\textit{RISE} constructs interaction features via vectorized outer products of independently sketched factors:
the RH channel uses $\mathrm{vec}(\hat r_t \otimes \hat h_t)$ and the GH channel uses
$\mathrm{vec}(\hat g_t \otimes \hat h_t)$, aggregated over token positions $t=1,\ldots,T$
(see Algorithm~\ref{alg:sketch_detail}).
Since inner products between outer-product features factorize into products of factor inner products,
the induced similarity between RH/GH interaction features depends only on the corresponding factor similarities.
Combining this factorization with Proposition~\ref{prop:countsketch} (unbiased factor inner products) and using independent sketch families across factors,
the RH and GH similarities computed from $\phi_{RH}(x)$ and $\phi_{GH}(x)$ are unbiased estimators of their uncompressed counterparts.
(We apply $\ell_2$ normalization in Algorithm~\ref{alg:sketch_detail} for numerical stability.)

\subsubsection{Comparison with PCA-based Compression}
\label{sec:appendix_pca_comparison}

We compare CountSketch to PCA-based compression using linear projections
$P_r\in\mathbb{R}^{K_r\times V}$, $P_h\in\mathbb{R}^{K_h\times d}$, and $P_g\in\mathbb{R}^{K_g\times d}$.

\textbf{Data dependence and index--query consistency.}
PCA projections $(P_r,P_h,P_g)$ are learned from data and therefore depend on the data distribution.
To keep index--query similarities comparable, one must use the same learned projections for both indexed
examples and future queries; updating the indexed pool can require re-fitting and re-projecting stored vectors.
In contrast, CountSketch is data-independent (fixed by $\sigma$ and $s$) and preserves index--query compatibility
without any refitting.

\textbf{Storage.}
PCA requires storing dense projection parameters of sizes $V\times K_r$, $d\times K_h$, and $d\times K_g$.
CountSketch is specified by the hash/sign mappings $(\sigma,s)$, whose description scales linearly with the input dimension.

\textbf{Computation on sparse residuals.}
Our method constructs a truncated residual $\tilde r_t$ supported on $\mathcal{I}_t$ with $L_t=|\mathcal{I}_t|$.
CountSketch directly exploits this sparsity, whereas PCA projections typically mix coordinates densely.

\begin{proposition}[Sparsity advantage of CountSketch over dense PCA projections]
\label{prop:pca_vs_cs_sparse}
Fix a token position $t$ and consider the truncated residual $\tilde r_t\in\mathbb{R}^{V}$ supported on
$\mathcal{I}_t$ with $L_t=|\mathcal{I}_t|$.
Computing $\mathrm{CS}_r(\tilde r_t)$ requires one signed accumulation per $v\in\mathcal{I}_t$ (hence $L_t$ updates),
because each input coordinate is mapped to exactly one bucket by $\sigma$.

For a PCA projection $P_r\tilde r_t$, if each of the $K_r$ output coordinates depends on every input coordinate
in $\mathcal{I}_t$ (i.e., $P_r$ is dense on the columns indexed by $\mathcal{I}_t$), then computing $P_r\tilde r_t$
requires at least $L_t\cdot K_r$ multiply-add contributions.
\end{proposition}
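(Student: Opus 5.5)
The statement has two parts, and I will prove them separately: the $L_t$-update cost of CountSketch, and the $L_t K_r$ lower bound for a dense projection.

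\textbf{CountSketch part.} I start from the definition
\[
\mathrm{CS}_r(\tilde r_t)_j=\sum_{v:\sigma(v)=j} s(v)\,\tilde r_t(v).
\]
Because $\tilde r_t(v)=0$ for every $v\notin\mathcal{I}_t$, each of these sums reduces to $v\in\mathcal{I}_t\cap\sigma^{-1}(j)$. The sets $\sigma^{-1}(j)$ for $j\in[K_r]$ partition $[V]$, since $\sigma$ is a function. So every $v\in\mathcal{I}_t$ appears in exactly one bucket sum.

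The sketch can therefore be computed by a scatter-add. Initialize the output to zero, then for each $v\in\mathcal{I}_t$ perform a single update $\mathrm{out}[\sigma(v)] \mathrel{+}= s(v)\tilde r_t(v)$. This takes exactly $L_t$ signed accumulations, and it never needs to touch the $V-L_t$ zero coordinates. Correctness is immediate from the partition property, which also serves as the counting argument.

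\textbf{PCA part.} I write $(P_r\tilde r_t)_k=\sum_{v\in\mathcal{I}_t}(P_r)_{kv}\tilde r_t(v)$ for each $k\in[K_r]$. The hypothesis is that output $k$ depends on every input coordinate in $\mathcal{I}_t$, meaning $(P_r)_{kv}\neq 0$ for all $v\in\mathcal{I}_t$. Each such product term then contributes to output $k$ and cannot be omitted.

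The cost model is that the contributions to distinct outputs are distinct multiply-adds, with no sharing across rows for a generic dense $P_r$. Summing over the $K_r$ outputs then gives at least $L_t K_r$ contributions.

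\textbf{Main obstacle.} The hard step is making ``at least'' rigorous. In a general algebraic computation model, sharing of intermediate products across rows could in principle reduce the count, so I will state the bound in the naive contribution-counting model. That model matches the phrase ``multiply-add contributions'' in the statement.

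If more is needed, I would argue that each output is a distinct linear form whose coefficients are all nonzero on $\mathcal{I}_t$. Evaluating such a form needs every variable in $\mathcal{I}_t$, and counting the (row, column) nonzero pairs gives $L_tK_r$.

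Combining the two parts, the cost ratio is $K_r$ in favor of CountSketch.
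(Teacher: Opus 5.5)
Your proof is correct and follows the paper's argument. The CountSketch cost comes from each coordinate in $\mathcal{I}_t$ being scattered to exactly one bucket, and the PCA bound comes from counting the $L_t K_r$ nonzero (row, column) contributions of the dense $K_r\times L_t$ submatrix; the paper counts per input coordinate and you count per output, which is the same double count. Your caveat that the lower bound holds in the contribution-counting model, not a general algebraic model, is left implicit in the paper and is the right reading of the statement.
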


\begin{proof}
By definition of CountSketch, each input coordinate $v$ contributes to exactly one output coordinate indexed by $\sigma(v)$.
Since $\tilde r_t(v)=0$ for $v\notin\mathcal{I}_t$, only indices in $\mathcal{I}_t$ contribute, and each such $v$
causes exactly one signed update. Thus, $\mathrm{CS}_r(\tilde r_t)$ is computed with $L_t$ updates.

For the PCA projection, each output coordinate is a linear combination of the input coordinates.
Under the stated density condition, each nonzero input coordinate $v\in\mathcal{I}_t$ contributes to all $K_r$ outputs,
so it induces at least $K_r$ multiply-add contributions. Summing over $L_t$ nonzeros yields the lower bound $L_t\cdot K_r$.
\end{proof}

\textbf{Dense factors.}
For dense vectors $h_t\in\mathbb{R}^{d}$ and $g_t\in\mathbb{R}^{d}$, applying PCA projections $P_h h_t$ and $P_g g_t$
involves dense matrix-vector products (mixing all $d$ coordinates into $K_h$ or $K_g$ outputs),
whereas CountSketch uses one bucket update per input coordinate by construction.

\begin{algorithm}[t]
\caption{\textsc{SketchAggregate}: CountSketch Feature Construction}
\label{alg:sketch_detail}
\begin{algorithmic}[1]
\REQUIRE A sample $x$; LM head $W_{\lmhead}$; CountSketch operators $\mathrm{CS}_r,\mathrm{CS}_h,\mathrm{CS}_g$ with dims $(K_r,K_h,K_g)$;
temperature $\tau$; cumprob threshold $\rho_{\mathrm{cum}}$; top-$K$ cap $K_{\max}$; channel weights $(\lambda_{rh},\lambda_{gh})$.
\STATE Run the LM on $x$ to obtain logits $\{z_t\}_{t=1}^{T}$, final hidden states $\{h_t\}_{t=1}^{T}$, and next-token targets $\{y_t\}_{t=1}^{T}$.
\STATE $\phi_{RH} \gets \mathbf{0}\in\mathbb{R}^{K_rK_h}$;\quad $\phi_{GH} \gets \mathbf{0}\in\mathbb{R}^{K_gK_h}$
\FOR{$t=1,\ldots,T$}
    \STATE $\mathcal{C}_t \gets \mathrm{TopK}(z_t/\tau,\ K_{\max})$; ensure $y_t\in\mathcal{C}_t$
    \STATE $\bar p_t(v) \gets \mathrm{softmax}\big((z_t/\tau)|_{\mathcal{C}_t}\big)(v)\quad \forall v\in\mathcal{C}_t$
    \STATE Sort $\mathcal{C}_t$ by $\bar p_t$ to obtain $\pi_t$ (length $K_{\max}$)
    \STATE $S_t \gets \min\{s:\sum_{j=1}^{s}\bar p_t(\pi_t(j))\ge\rho_{\mathrm{cum}}\}$
    \STATE $\mathcal{S}_t \gets \{\pi_t(1),\ldots,\pi_t(S_t)\}\cup\{y_t\}$
    \STATE $\tilde p_t \gets$ renormalize $\bar p_t$ on $\mathcal{S}_t$ (set other $v\in\mathcal{C}_t$ to $0$)
    \STATE $\tilde r_t(v) \gets \tilde p_t(v) - \mathbf{1}[v=y_t]\quad \forall v\in\mathcal{S}_t$
    \STATE $\tilde g_t \gets \sum_{v\in\mathcal{S}_t} \tilde p_t(v)\,W_v \;-\; W_{y_t}$
    \STATE $\hat r_t \gets \mathrm{CS}_r(\tilde r_t)/\|\mathrm{CS}_r(\tilde r_t)\|_2$
    \STATE $\hat h_t \gets \mathrm{CS}_h(h_t)/\|\mathrm{CS}_h(h_t)\|_2$
    \STATE $\hat g_t \gets \mathrm{CS}_g(\tilde g_t)/\|\mathrm{CS}_g(\tilde g_t)\|_2$
    \STATE $\phi_{RH} \gets \phi_{RH} + \lambda_{rh}\,\mathrm{vec}(\hat r_t \otimes \hat h_t)$
    \STATE $\phi_{GH} \gets \phi_{GH} + \lambda_{gh}\,\mathrm{vec}(\hat g_t \otimes \hat h_t)$
\ENDFOR
\STATE \textbf{return} $\phi(x)=[\phi_{RH};\phi_{GH}]$
\end{algorithmic}
\end{algorithm}

\section{Variance Analysis of \textit{RISE} in Estimating Influence Function}\label{sec:variance_yide}

This section provides a variance analysis that is mathematically self-consistent for (i) TracIn-style random projection estimators and (ii) RISE-style factorized CountSketch estimators exploiting the LM-head outer-product structure. A key point is to separate:

\begin{itemize}[leftmargin=*]
    \item \textbf{Sketching variance:} randomness due to projection/sketching (this section).
    \item \textbf{Approximation bias:} restricting to the LM head and truncating the residual (deterministic, handled explicitly).
\end{itemize}

\subsection{Targets, Estimators, and Assumptions}

\paragraph{Full-parameter influence (TracIn target).}
Let $\nabla_\theta \ell(x)\in\mathbb{R}^{|\theta|}$. The full influence surrogate is
\[
\mathcal{I}^{\mathrm{true}}(x_i\!\to\!x_q) := \left\langle \nabla_\theta \ell(x_q), \nabla_\theta \ell(x_i)\right\rangle .
\]

\paragraph{LM-head influence (RISE structural target).}
Let $W_{\lmhead}\in\mathbb{R}^{V\times d}$ and $\nabla_{W_{\lmhead}}\ell(x)\in\mathbb{R}^{V\times d}$. For a sequence $x$ of length $T$, the LM-head gradient admits the exact outer-product decomposition
\[
\nabla_{W_{\lmhead}}\ell(x)=\sum_{t=1}^T r_t(x)\, h_t(x)^\top,
\qquad r_t(x)=p_t(x)-\mathbf{1}_{y_t}\in\mathbb{R}^V,\;\; h_t(x)\in\mathbb{R}^d.
\]
The corresponding LM-head influence is
\[
\mathcal{I}^{\mathrm{head}}(x_i\!\to\!x_q):=\left\langle \nabla_{W_{\lmhead}}\ell(x_q),\nabla_{W_{\lmhead}}\ell(x_i)\right\rangle_F .
\]

\paragraph{Truncation (approximation target).}
Let $\tilde r_t$ be a sparse approximation to $r_t$ (e.g., top-mass truncation with optional renormalization). Define the truncated head gradient
\[
\widetilde{\nabla_{W_{\lmhead}}\ell}(x):=\sum_{t=1}^T \tilde r_t(x)\, h_t(x)^\top,
\qquad
\tilde{\mathcal{I}}^{\mathrm{head}}(x_i\!\to\!x_q):=\left\langle \widetilde{\nabla_{W_{\lmhead}}\ell}(x_q), \widetilde{\nabla_{W_{\lmhead}}\ell}(x_i)\right\rangle_F.
\]
\textbf{Important:} the sketching estimators below are unbiased for $\tilde{\mathcal{I}}^{\mathrm{head}}$ (not for $\mathcal{I}^{\mathrm{true}}$ unless additional assumptions are made).

\paragraph{TracIn estimator.}
Let $R\in\mathbb{R}^{K\times |\theta|}$ have i.i.d.\ entries $R_{kj}\sim \mathcal{N}(0,1/K)$ (the Gaussian case) or Rademacher $\pm 1/\sqrt{K}$ (similar bounds). The TracIn estimator is
\[
\widehat{\mathcal{I}}^{\mathrm{TracIn}}:=\left\langle R\,\nabla_\theta \ell(x_q),\, R\,\nabla_\theta \ell(x_i)\right\rangle.
\]

\paragraph{CountSketch operator.}
A CountSketch $\mathrm{CS}:\mathbb{R}^D\to\mathbb{R}^K$ is defined by a bucket hash $\eta:[D]\to[K]$ and a sign hash $s:[D]\to\{\pm1\}$:
\[
(\mathrm{CS}(x))_k=\sum_{j: \eta(j)=k} s(j)\, x_j .
\]
We assume the hash families used for different factors (residual/hidden/projection), i.e.,
$(\eta_r,s_r)$, $(\eta_h,s_h)$, $(\eta_g,s_g)$, are independent.

\paragraph{RISE RH/GH estimators (analysis version, \emph{no per-vector normalization}).}
For a single token position $t$ in each sample (token-level analysis), define:
\[
A_t := \big\langle \mathrm{CS}_r(\tilde r_t(x_q)), \mathrm{CS}_r(\tilde r_t(x_i))\big\rangle,\quad
B_t := \big\langle \mathrm{CS}_h(h_t(x_q)), \mathrm{CS}_h(h_t(x_i))\big\rangle,
\]
and the RH estimator
\[
\widehat{\mathcal{I}}^{\mathrm{RISE}}_{RH,t}:=A_t\,B_t .
\]
For GH, let $\tilde g_t(x):=W_{\lmhead}^\top \tilde r_t(x)\in\mathbb{R}^d$ and
\[
C_t := \big\langle \mathrm{CS}_g(\tilde g_t(x_q)), \mathrm{CS}_g(\tilde g_t(x_i))\big\rangle,\qquad
\widehat{\mathcal{I}}^{\mathrm{RISE}}_{GH,t}:=C_t\,B_t .
\]
A two-channel fused estimator at token $t$ is
\[
\widehat{\mathcal{I}}^{\mathrm{RISE}}_{t} := \lambda_{rh}\widehat{\mathcal{I}}^{\mathrm{RISE}}_{RH,t}
+ \lambda_{gh}\widehat{\mathcal{I}}^{\mathrm{RISE}}_{GH,t}.
\]

\paragraph{Remark (about the implementation normalization).}
If the implementation uses $\hat r_t=\mathrm{CS}_r(\tilde r_t)/\|\mathrm{CS}_r(\tilde r_t)\|_2$ and similarly for $\hat h_t,\hat g_t$, the resulting estimator becomes \emph{biased} (nonlinear normalization). The variance bounds below apply to the \emph{unnormalized} version, which is the standard setting where CountSketch inner products are unbiased.

\subsubsection{Assumptions}

\begin{assumption}[Bounded norms]\label{assump:bounded_norms_corrected}
There exist constants $B_r,B_{\tilde r},B_h>0$ such that for all $x,t$:
\[
\|r_t(x)\|_2\le B_r,\qquad \|\tilde r_t(x)\|_2\le B_{\tilde r},\qquad \|h_t(x)\|_2\le B_h.
\]
\end{assumption}

\begin{assumption}[Bounded full gradient]\label{assump:bounded_full_grad_corrected}
There exists $B_\theta>0$ such that $\|\nabla_\theta \ell(x)\|_2\le B_\theta$ for all $x$.
\end{assumption}

\begin{assumption}[Truncation error bound]\label{assump:trunc_error_corrected}
There exists $\varepsilon_{\mathrm{trunc}}\ge 0$ such that for all $x,t$,
\[
\|r_t(x)-\tilde r_t(x)\|_2 \le \varepsilon_{\mathrm{trunc}}.
\]
\end{assumption}

\begin{assumption}[LM-head operator norm]\label{assump:head_op_norm_corrected}
$\|W_{\lmhead}\|_{op}=\sigma_1(W_{\lmhead})$, hence
$\|\tilde g_t(x)\|_2=\|W_{\lmhead}^\top\tilde r_t(x)\|_2\le \sigma_1(W_{\lmhead})\|\tilde r_t(x)\|_2\le \sigma_1(W_{\lmhead})B_{\tilde r}$.
\end{assumption}

\begin{assumption}[Energy concentration into LM head (optional, for comparing to full TracIn)]\label{assump:energy_fraction_corrected}
There exists $\gamma\in(0,1]$ such that for all $x$,
\[
\|\nabla_{W_{\lmhead}}\ell(x)\|_F^2 \le \gamma \|\nabla_\theta \ell(x)\|_2^2 .
\]
\end{assumption}

\subsection{Supporting Lemmas}

\begin{lemma}[Random projection inner-product variance (Gaussian)]\label{lem:rp_var_corrected}
Let $R\in\mathbb{R}^{K\times |\theta|}$ have i.i.d.\ entries $\mathcal{N}(0,1/K)$. For any $u,v\in\mathbb{R}^{|\theta|}$, define $\widehat Z=\langle Ru,Rv\rangle$. Then
\[
\mathbb{E}[\widehat Z]=u^\top v,\qquad
\mathrm{Var}[\widehat Z]=\frac{1}{K}\Big(\|u\|_2^2\|v\|_2^2+(u^\top v)^2\Big).
\]
(For Rademacher $\pm 1/\sqrt{K}$, the same scaling holds with an absolute-constant factor.)
\end{lemma}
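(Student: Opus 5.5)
We write $\widehat Z=\langle Ru,Rv\rangle=\sum_{k=1}^K (R_k^\top u)(R_k^\top v)$, where $R_k\in\mathbb{R}^{|\theta|}$ is the $k$-th row of $R$. Since the rows are i.i.d., $\widehat Z$ is a sum of $K$ i.i.d. terms $Y_k:=(R_k^\top u)(R_k^\top v)$. It therefore suffices to compute $\mathbb{E}[Y_k]$ and $\mathrm{Var}[Y_k]$ for a single row and then use additivity of mean and variance over independent summands.

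For a single row, set $a:=R_k^\top u$ and $b:=R_k^\top v$. Because $R_k\sim\mathcal N(0,\tfrac1K I)$, the pair $(a,b)$ is jointly Gaussian with mean zero and covariance matrix
\[
\tfrac1K\begin{pmatrix}\|u\|_2^2 & u^\top v\\ u^\top v & \|v\|_2^2\end{pmatrix}.
\]
The mean is immediate: $\mathbb{E}[ab]=\tfrac1K u^\top v$, and summing over $k$ gives $\mathbb{E}[\widehat Z]=u^\top v$.

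For the second moment we apply Isserlis' (Wick's) theorem to the four zero-mean jointly Gaussian variables $a,a,b,b$:
\[
\mathbb{E}[a^2b^2]=\mathbb{E}[a^2]\,\mathbb{E}[b^2]+2\,(\mathbb{E}[ab])^2=\tfrac1{K^2}\big(\|u\|_2^2\|v\|_2^2+2(u^\top v)^2\big).
\]
Subtracting $(\mathbb{E}[ab])^2=\tfrac1{K^2}(u^\top v)^2$ gives
\[
\mathrm{Var}[Y_k]=\tfrac1{K^2}\big(\|u\|_2^2\|v\|_2^2+(u^\top v)^2\big).
\]
Multiplying by $K$ gives the stated variance for $\widehat Z$.

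The only step with real content is the fourth-moment computation. To avoid invoking Wick's theorem as a black box, one can instead expand $\mathbb{E}\big[\sum_{j,j',l,l'}R_{kj}R_{kj'}R_{kl}R_{kl'}\,u_ju_{j'}v_lv_{l'}\big]$. Only index pairings survive, and the diagonal term uses $\mathbb{E}[R_{kj}^4]=3/K^2$, which matches the three pairings exactly. This bookkeeping is where care is needed.

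For the Rademacher remark, the same expansion applies with $\mathbb{E}[R_{kj}^4]=1/K^2$ instead of $3/K^2$. This subtracts the nonnegative term $\tfrac{2}{K}\sum_j u_j^2v_j^2$ from the variance, so the Gaussian expression is an upper bound for the Rademacher case; in particular the same scaling holds with constant at most $1$.
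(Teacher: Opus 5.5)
Your proof is correct and follows the paper's argument step for step: you split $\widehat Z$ into $K$ i.i.d.\ row terms $(R_k^\top u)(R_k^\top v)$, get the mean from isotropy, evaluate the fourth moment with Isserlis' theorem, and multiply the per-row variance by $K$. Your fourth-moment bookkeeping for the Rademacher case, which yields the correction $-\tfrac{2}{K}\sum_j u_j^2v_j^2$, goes beyond the paper (it states that remark without proof), and you are right to read the remark only as an upper bound, since for example $u=v=e_1$ gives zero Rademacher variance.
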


\begin{proof}
Write $R$ by rows: $R=[r_1^\top;\ldots;r_K^\top]$ with $r_k\sim\mathcal{N}(0,\frac{1}{K}I)$ independent.
Let $X_k=(r_k^\top u)(r_k^\top v)$. Then $\widehat Z=\sum_{k=1}^K X_k$ and the $X_k$ are i.i.d.
By isotropy, $\mathbb{E}[X_k]=\frac{1}{K}u^\top v$.
By Isserlis' theorem for zero-mean Gaussians,
\[
\mathbb{E}[X_k^2]=\mathbb{E}[(r_k^\top u)^2(r_k^\top v)^2]
=\frac{1}{K^2}\Big(\|u\|_2^2\|v\|_2^2+2(u^\top v)^2\Big).
\]
Hence
\[
\mathrm{Var}(X_k)=\mathbb{E}[X_k^2]-\mathbb{E}[X_k]^2
=\frac{1}{K^2}\Big(\|u\|_2^2\|v\|_2^2+(u^\top v)^2\Big),
\]
and $\mathrm{Var}(\widehat Z)=K\,\mathrm{Var}(X_1)$ gives the claim.
\end{proof}

\begin{lemma}[CountSketch inner-product unbiasedness and variance]\label{lem:cs_ip_corrected}
Let $\mathrm{CS}:\mathbb{R}^D\to\mathbb{R}^K$ be CountSketch with a bucket hash $\eta$ and sign hash $s$ such that
$\mathbb{P}(\eta(i)=\eta(j))=1/K$ for $i\ne j$, and $\mathbb{E}[s(i)]=0$, $\mathbb{E}[s(i)s(j)]=0$ for $i\ne j$.
For any $x,y\in\mathbb{R}^D$, define $\widehat Z=\langle \mathrm{CS}(x),\mathrm{CS}(y)\rangle$. Then
\[
\mathbb{E}[\widehat Z]=x^\top y,\qquad
\mathrm{Var}[\widehat Z]\le \frac{1}{K}\|x\|_2^2\|y\|_2^2.
\]
\end{lemma}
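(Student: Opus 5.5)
The plan is a direct second-moment computation, expanding $\widehat Z$ as a bilinear form in the sign variables. Write $\delta_{ij}:=\mathbf 1[\eta(i)=\eta(j)]$, so that
\[
\widehat Z=\sum_{k}\Big(\sum_{i:\eta(i)=k}s(i)x_i\Big)\Big(\sum_{j:\eta(j)=k}s(j)y_j\Big)=\sum_{i,j}s(i)s(j)\,\delta_{ij}\,x_iy_j .
\]
I will split this into the diagonal part $i=j$ and the off-diagonal part $W:=\sum_{i\neq j}s(i)s(j)\delta_{ij}x_iy_j$. Since $s(i)^2=1$ and $\delta_{ii}=1$, the diagonal part equals $x^\top y$ deterministically.

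For the mean, I take the sign hash independent of the bucket hash, as the paper does when it treats $(\eta,s)$ as independent families. Then $\mathbb E[W]=\sum_{i\ne j}\mathbb E[s(i)s(j)]\,\mathbb E[\delta_{ij}]\,x_iy_j=0$ by the hypothesis $\mathbb E[s(i)s(j)]=0$. This gives $\mathbb E[\widehat Z]=x^\top y$, and it also gives $\mathrm{Var}[\widehat Z]=\mathbb E[W^2]$.

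For the variance, I expand
\[
\mathbb E[W^2]=\sum_{i\ne j}\sum_{k\ne l}\mathbb E[s(i)s(j)s(k)s(l)]\,\mathbb E[\delta_{ij}\delta_{kl}]\,x_iy_jx_ky_l .
\]
The stated hypotheses only control pairwise sign moments. I will therefore use the fully independent Rademacher signs listed in the notation table for the fourth moments. Under that reading the sign expectation is nonzero exactly when $\{k,l\}=\{i,j\}$. In both surviving cases $\mathbb E[\delta_{ij}^2]=\mathbb P(\eta(i)=\eta(j))=1/K$. The two pairings $(k,l)=(i,j)$ and $(k,l)=(j,i)$ then contribute
\[
\mathbb E[W^2]=\frac1K\sum_{i\ne j}\big(x_i^2y_j^2+x_iy_ix_jy_j\big)
=\frac1K\Big(\|x\|_2^2\|y\|_2^2+(x^\top y)^2-2\sum_i x_i^2y_i^2\Big).
\]

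The last step is the main obstacle: comparing this exact expression with the target $\frac1K\|x\|_2^2\|y\|_2^2$. The target is reached exactly when the cross term satisfies $(x^\top y)^2\le 2\sum_i x_i^2y_i^2$. This holds, for instance, when $x$ and $y$ have disjoint supports, since then both sides vanish, or when $x\perp y$. It does not follow from Cauchy--Schwarz in general. Cauchy--Schwarz on the cross term, $(x^\top y)^2\le\|x\|_2^2\|y\|_2^2$, yields only $\mathrm{Var}[\widehat Z]\le\frac{2}{K}\|x\|_2^2\|y\|_2^2$. This factor-$2$ bound is what Proposition~\ref{prop:countsketch} effectively records, and it is what the downstream comparison in Section~\ref{sec:variance_yide} needs up to universal constants.

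Going further, the exact formula shows the constant $1$ cannot hold for all $x,y$. Take $x=y=\mathbf 1_D$ with $D\ge 3$. Then $(x^\top y)^2=D^2$ exceeds $2\sum_i x_i^2y_i^2=2D$, so the exact variance $\frac1K\big(2D^2-2D\big)$ is strictly larger than $\frac1K\|x\|_2^2\|y\|_2^2=\frac{D^2}{K}$.

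I would therefore carry the proof through the exact identity, which settles unbiasedness. I would then state the variance bound with the factor-$2$ constant (equivalently, with the extra $(x^\top y)^2$ term). This matches the Gaussian case in Lemma~\ref{lem:rp_var_corrected} and suffices for the "up to universal constants" use of the lemma. The constant-$1$ form of the bound would be flagged as valid only under the cross-term condition above, for example orthogonal or disjointly supported inputs.
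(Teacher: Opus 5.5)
Your computation is correct, and the discrepancy it exposes is an error in the paper, not a gap in your argument.

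\paragraph{Where the paper's proof goes wrong.} The paper's proof takes the same route as yours. It writes $\widehat Z = x^\top y + E$ with $E=\sum_{i\ne j}x_iy_j\,s(i)s(j)\,\mathbf{1}[\eta(i)=\eta(j)]$ and obtains unbiasedness from $\mathbb{E}[s(i)s(j)]=0$. It then says that ``cross terms vanish unless indices pair up'' and concludes $\mathbb{E}[E^2]\le \frac1K\sum_{i\ne j}x_i^2y_j^2$. That step keeps only the pairing $(k,l)=(i,j)$. It silently drops the transposed pairing $(k,l)=(j,i)$, which contributes $\frac1K\sum_{i\ne j}x_iy_ix_jy_j=\frac1K\big((x^\top y)^2-\sum_i x_i^2y_i^2\big)$, a quantity of no fixed sign. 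Keeping both pairings gives your exact identity
\[
\mathrm{Var}[\widehat Z]=\frac1K\Big(\|x\|_2^2\|y\|_2^2+(x^\top y)^2-2\sum_i x_i^2y_i^2\Big).
\]
So the paper's ``$\le$'' is unjustified. Your example $x=y=\mathbf{1}_D$ with $D\ge 3$ is a genuine counterexample to the constant-$1$ bound. For instance, $D=3$, $K=2$ with fully random hashes gives variance $6$ against the claimed $9/2$, which you can confirm by enumerating the eight bucket assignments. The ratio $2-2/D$ in your example also shows that the factor $2$ is asymptotically sharp. The unbiasedness half of the lemma is fine; the variance half cannot be proved as stated.

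\paragraph{The repair and its consequences.} Your repair is the right one:
\[
\mathrm{Var}[\widehat Z]\le \frac1K\big(\|x\|_2^2\|y\|_2^2+(x^\top y)^2\big)\le \frac2K\|x\|_2^2\|y\|_2^2 .
\]
This is exactly the form already recorded in Proposition~\ref{prop:countsketch}, and it mirrors the Gaussian case in Lemma~\ref{lem:rp_var_corrected}. The lemma as written is therefore inconsistent with that proposition, and the error propagates. Lemma~\ref{lem:factorized_var_corrected} uses $\mathrm{Var}(A)\le\|\tilde r\|_2^2\|\tilde r'\|_2^2/K_r$ and $\mathrm{Var}(B)\le\|h\|_2^2\|h'\|_2^2/K_h$; parts (ii)--(iii) of Theorem~\ref{thm:variance_main_corrected} inherit those bounds, and part (iv) applies the uncorrected lemma directly to bound $\mathrm{Var}(B_t)$. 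Under the corrected bound, the product term grows by up to a factor $4$ and each cross term by up to $2$. The TracIn variance carries the same $(u^\top v)^2$ term, so the ``up to universal constants'' comparison still survives.

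\paragraph{Hypotheses.} Your remark here is also well taken. Pairwise sign moments do not control $\mathbb{E}[s(i)s(j)s(k)s(l)]$ for four distinct indices. The argument, both yours and the paper's appeal to ``sign-independence,'' therefore needs $4$-wise independent signs, independent of $\eta$. Given that, a pairwise collision probability of $1/K$ for $\eta$ suffices, because only terms involving $\mathbf{1}[\eta(i)=\eta(j)]^2$ survive.
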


\begin{proof}
Expand
\[
\widehat Z=\sum_{k=1}^K \Big(\sum_{i:\eta(i)=k}s(i)x_i\Big)\Big(\sum_{j:\eta(j)=k}s(j)y_j\Big)
=\sum_{i,j} x_i y_j\, s(i)s(j)\,\mathbf{1}[\eta(i)=\eta(j)].
\]
Taking expectation over $s$, the terms with $i\ne j$ vanish because $\mathbb{E}[s(i)s(j)]=0$, leaving
$\mathbb{E}[\widehat Z]=\sum_i x_i y_i=x^\top y$.

Let $E:=\widehat Z-x^\top y=\sum_{i\ne j} x_i y_j\, s(i)s(j)\,\mathbf{1}[\eta(i)=\eta(j)]$.
Then $\mathbb{E}[E]=0$ and $\mathrm{Var}(\widehat Z)=\mathbb{E}[E^2]$.
Using sign-independence, cross terms vanish unless indices pair up, which yields the standard bound
\[
\mathbb{E}[E^2]\le \sum_{i\ne j} x_i^2 y_j^2 \,\mathbb{P}(\eta(i)=\eta(j))
=\frac{1}{K}\sum_{i\ne j} x_i^2 y_j^2
\le \frac{1}{K}\Big(\sum_i x_i^2\Big)\Big(\sum_j y_j^2\Big)=\frac{1}{K}\|x\|_2^2\|y\|_2^2.
\]
\end{proof}

\begin{lemma}[Variance of a product of independent CountSketch inner products]\label{lem:factorized_var_corrected}
Let
\[
A=\langle \mathrm{CS}_r(\tilde r),\mathrm{CS}_r(\tilde r')\rangle,\quad
B=\langle \mathrm{CS}_h(h),\mathrm{CS}_h(h')\rangle,
\]
where $\mathrm{CS}_r$ and $\mathrm{CS}_h$ use independent hash families with output sizes $K_r$ and $K_h$.
Define $\alpha=\langle \tilde r,\tilde r'\rangle$ and $\beta=\langle h,h'\rangle$.
Then $\mathbb{E}[AB]=\alpha\beta$ and
\[
\mathrm{Var}[AB]\le
\frac{\|\tilde r\|_2^2\|\tilde r'\|_2^2\|h\|_2^2\|h'\|_2^2}{K_rK_h}
+\frac{\alpha^2\|h\|_2^2\|h'\|_2^2}{K_h}
+\frac{\beta^2\|\tilde r\|_2^2\|\tilde r'\|_2^2}{K_r}.
\]
\end{lemma}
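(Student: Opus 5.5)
The plan is to use independence of the two hash families. By Lemma~\ref{lem:cs_ip_corrected}, $A$ and $B$ are unbiased for $\alpha$ and $\beta$ respectively. Because $A$ is a function only of $(\eta_r,s_r)$ and $B$ only of $(\eta_h,s_h)$, and these families are independent, $A$ and $B$ are independent random variables. Hence $\mathbb{E}[AB]=\mathbb{E}[A]\,\mathbb{E}[B]=\alpha\beta$.

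For the variance, I would use the standard identity for independent variables,
\[
\mathrm{Var}[AB]=\mathbb{E}[A^2]\,\mathbb{E}[B^2]-\alpha^2\beta^2 .
\]
Substituting $\mathbb{E}[A^2]=\mathrm{Var}[A]+\alpha^2$ and $\mathbb{E}[B^2]=\mathrm{Var}[B]+\beta^2$ and expanding gives
\[
\mathrm{Var}[AB]=\mathrm{Var}[A]\,\mathrm{Var}[B]+\alpha^2\,\mathrm{Var}[B]+\beta^2\,\mathrm{Var}[A].
\]
Each of the three terms is nonnegative.

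It remains to bound each term using Lemma~\ref{lem:cs_ip_corrected}, which gives $\mathrm{Var}[A]\le \|\tilde r\|_2^2\|\tilde r'\|_2^2/K_r$ and $\mathrm{Var}[B]\le \|h\|_2^2\|h'\|_2^2/K_h$.
\begin{itemize}
\item The product term $\mathrm{Var}[A]\,\mathrm{Var}[B]$ is at most $\|\tilde r\|_2^2\|\tilde r'\|_2^2\|h\|_2^2\|h'\|_2^2/(K_rK_h)$.
\item The term $\alpha^2\,\mathrm{Var}[B]$ is at most $\alpha^2\|h\|_2^2\|h'\|_2^2/K_h$.
\item The term $\beta^2\,\mathrm{Var}[A]$ is at most $\beta^2\|\tilde r\|_2^2\|\tilde r'\|_2^2/K_r$.
\end{itemize}
Summing these three bounds gives exactly the claimed inequality.

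The only point needing care is the independence of $A$ and $B$. It requires that the residual and hidden sketches share no randomness, which is exactly the stated assumption of independent hash families. The remaining steps are routine algebra.
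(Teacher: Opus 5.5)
Your route is exactly the paper's: independence of the two hash families gives $\mathbb{E}[AB]=\alpha\beta$. The identity $\mathrm{Var}[AB]=\mathrm{Var}[A]\,\mathrm{Var}[B]+\alpha^2\mathrm{Var}[B]+\beta^2\mathrm{Var}[A]$ is exact and correctly derived, and then Lemma~\ref{lem:cs_ip_corrected} is substituted.

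\textbf{The gap is in that last substitution, and it is inherited from the paper.} The imported bound $\mathrm{Var}[\langle \mathrm{CS}(x),\mathrm{CS}(y)\rangle]\le \|x\|_2^2\|y\|_2^2/K$ is false. Write $E=\sum_{i\neq j}x_iy_j\,s(i)s(j)\,\mathbf{1}[\eta(i)=\eta(j)]$. The sign moments in $\mathbb{E}[E^2]$ survive for \emph{both} pairings $(k,l)=(i,j)$ and $(k,l)=(j,i)$. (Getting this clean form already needs 4-wise independent signs; the pairwise-uncorrelated hypothesis of that lemma is not enough.) Hence
\[
\mathrm{Var}\big[\langle \mathrm{CS}(x),\mathrm{CS}(y)\rangle\big]=\frac{1}{K}\sum_{i\neq j}\big(x_i^2y_j^2+x_iy_ix_jy_j\big)=\frac{1}{K}\Big(\|x\|_2^2\|y\|_2^2+(x^\top y)^2-2\sum_i x_i^2y_i^2\Big).
\]
The proof of Lemma~\ref{lem:cs_ip_corrected} drops the second pairing. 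The paper's own Proposition~\ref{prop:countsketch} keeps the $(x^\top y)^2$ term.

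As a result, the stated inequality is actually false. Take $\tilde r=\tilde r'$ and $h=h'$ to be unit vectors whose nonzero entries all have equal magnitude, with $m_r\ge 3$ and $m_h\ge 3$ nonzero entries. Then $\alpha=\beta=1$, $\mathrm{Var}[A]=(2-2/m_r)/K_r$ and $\mathrm{Var}[B]=(2-2/m_h)/K_h$. Your exact identity gives
\[
\mathrm{Var}[AB]=\frac{(2-2/m_r)(2-2/m_h)}{K_rK_h}+\frac{2-2/m_h}{K_h}+\frac{2-2/m_r}{K_r}.
\]
Each term strictly exceeds its counterpart in the claimed bound $\frac{1}{K_rK_h}+\frac{1}{K_h}+\frac{1}{K_r}$, for every $K_r,K_h$. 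Dense, strongly aligned hidden states are exactly this regime.

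\textbf{The fix.} Keep your identity and plug in the correct per-factor bounds $\mathrm{Var}[A]\le(\|\tilde r\|_2^2\|\tilde r'\|_2^2+\alpha^2)/K_r$ and $\mathrm{Var}[B]\le(\|h\|_2^2\|h'\|_2^2+\beta^2)/K_h$. This gives
\[
\mathrm{Var}[AB]\le\frac{(\|\tilde r\|_2^2\|\tilde r'\|_2^2+\alpha^2)(\|h\|_2^2\|h'\|_2^2+\beta^2)}{K_rK_h}+\frac{\alpha^2(\|h\|_2^2\|h'\|_2^2+\beta^2)}{K_h}+\frac{\beta^2(\|\tilde r\|_2^2\|\tilde r'\|_2^2+\alpha^2)}{K_r}.
\]
By Cauchy--Schwarz ($\alpha^2\le\|\tilde r\|_2^2\|\tilde r'\|_2^2$, $\beta^2\le\|h\|_2^2\|h'\|_2^2$), this is at most $4$ times the first term and $2$ times each of the other two terms of the stated bound. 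So the lemma holds only up to these constants, and the worst-case bounds derived from it in Theorem~\ref{thm:variance_main_corrected} need the same adjustment.
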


\begin{proof}
By Lemma~\ref{lem:cs_ip_corrected}, $\mathbb{E}[A]=\alpha$ and $\mathbb{E}[B]=\beta$.
Independence of the sketch randomness implies $A$ and $B$ are independent, hence $\mathbb{E}[AB]=\mathbb{E}[A]\mathbb{E}[B]=\alpha\beta$.

For independent $A,B$,

\begin{align*}
\mathrm{Var}(AB)
&= \mathbb{E}[A^2]\mathbb{E}[B^2]-(\mathbb{E}[A]\mathbb{E}[B])^2 \\
&= (\mathrm{Var}(A)+\alpha^2)(\mathrm{Var}(B)+\beta^2)-\alpha^2\beta^2 \\
&= \mathrm{Var}(A)\mathrm{Var}(B)+\alpha^2\mathrm{Var}(B)+\beta^2\mathrm{Var}(A).
\end{align*}
Using Lemma~\ref{lem:cs_ip_corrected},
\[
\mathrm{Var}(A)\le \frac{\|\tilde r\|_2^2\|\tilde r'\|_2^2}{K_r},
\qquad
\mathrm{Var}(B)\le \frac{\|h\|_2^2\|h'\|_2^2}{K_h},
\]
and substituting yields the stated bound.
\end{proof}

\begin{lemma}[Renormalized top-mass truncation implies an explicit $\ell_2$ error bound]\label{lem:renorm_trunc_bound}
Let $p_t\in\Delta^{V}$ and $\mathcal{S}_t\subset[V]$ with mass $\rho=\sum_{v\in \mathcal{S}_t}p_t(v)$. Define $\tilde p_t(v)=p_t(v)/\rho$ for $v\in \mathcal{S}_t$ and $0$ otherwise. Then
\[
\|p_t-\tilde p_t\|_1=2(1-\rho),\qquad \|p_t-\tilde p_t\|_2\le 2(1-\rho).
\]
Consequently, for residuals $r_t=p_t-\mathbf{1}_{y_t}$ and $\tilde r_t=\tilde p_t-\mathbf{1}_{y_t}$ we have $\|r_t-\tilde r_t\|_2\le 2(1-\rho)$.
\end{lemma}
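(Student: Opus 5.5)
The argument splits the vector $p_t-\tilde p_t$ according to $\mathcal{S}_t$ and its complement, computes the $\ell_1$ norm exactly, and then passes to $\ell_2$. For $v\in\mathcal{S}_t$ we have $\tilde p_t(v)=p_t(v)/\rho$, hence
\[
p_t(v)-\tilde p_t(v)=p_t(v)\Big(1-\tfrac{1}{\rho}\Big),
\]
which is nonpositive because $\rho\le 1$. For $v\notin\mathcal{S}_t$ we have $\tilde p_t(v)=0$, so the difference is $p_t(v)\ge 0$. Summing absolute values over the two pieces gives
\[
\|p_t-\tilde p_t\|_1=\Big(\tfrac{1}{\rho}-1\Big)\rho+(1-\rho)=2(1-\rho).
\]
This is exact; the only inputs are the sign of each piece and the identities $\sum_{v\in\mathcal{S}_t}p_t(v)=\rho$ and $\sum_{v\notin\mathcal{S}_t}p_t(v)=1-\rho$.

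For the $\ell_2$ bound I use the general inequality $\|w\|_2\le\|w\|_1$, which holds because
\[
\|w\|_1^2=\sum_i w_i^2+\sum_{i\ne j}|w_i||w_j|\ge\|w\|_2^2 .
\]
Combined with the $\ell_1$ identity, this gives $\|p_t-\tilde p_t\|_2\le 2(1-\rho)$. One could sharpen this by bounding the two pieces separately (for instance $\le\sqrt{2}\,(1-\rho)$), but the crude bound is enough for the statement.

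The residual consequence follows directly from the definitions: $r_t-\tilde r_t=(p_t-\mathbf{1}_{y_t})-(\tilde p_t-\mathbf{1}_{y_t})=p_t-\tilde p_t$, so the same bound holds for $\|r_t-\tilde r_t\|_2$. The only point to state explicitly is that both residuals use the same one-hot $\mathbf{1}_{y_t}$. This matches the construction of $\mathcal{S}_t$ (Alg.~\ref{alg:sketch_detail}), which always includes $y_t$, so the ground-truth coordinate is never discarded and the one-hots cancel.

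I expect no step to be a genuine obstacle; the whole argument is a direct computation. The one place needing care is keeping the signs of the two pieces straight in the $\ell_1$ computation. I should also note that the implementation renormalizes over the capped set $\mathcal{C}_t$ rather than over all of $[V]$. The lemma as stated uses the full distribution $p_t$ and the mass $\rho$ relative to it, and that is the version I prove.
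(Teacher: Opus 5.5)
Your proof is correct and matches the paper's argument: you split $p_t-\tilde p_t$ over $\mathcal{S}_t$ and its complement so that each piece contributes $1-\rho$ to the $\ell_1$ norm, pass to $\ell_2$ via $\|w\|_2\le\|w\|_1$, and note that $r_t-\tilde r_t=p_t-\tilde p_t$. One small remark is that the one-hots cancel purely from the lemma's definitions, so this step does not need $y_t\in\mathcal{S}_t$; that inclusion matters only for consistency with the implementation, where $\tilde r_t$ is defined only on $\mathcal{S}_t$.
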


\begin{proof}
Outside $\mathcal{S}_t$, $\tilde p_t=0$, so $\|p_t-\tilde p_t\|_1$ contributes $\sum_{v\notin \mathcal{S}_t}p_t(v)=1-\rho$.
Inside $\mathcal{S}_t$, $p_t-\tilde p_t=p_t-p_t/\rho=p_t(1-1/\rho)$ and $|1-1/\rho|=(1-\rho)/\rho$, hence the $\ell_1$ contribution is
$\sum_{v\in \mathcal{S}_t} p_t(v)\cdot (1-\rho)/\rho = (1-\rho)$.
Thus $\|p_t-\tilde p_t\|_1=2(1-\rho)$ and $\|p_t-\tilde p_t\|_2\le \|p_t-\tilde p_t\|_1$.
Finally $r_t-\tilde r_t=p_t-\tilde p_t$.
\end{proof}

\subsection{Main Variance Theorem}

\begin{theorem}[Variance bounds for RISE vs.\ TracIn (self-consistent form)]\label{thm:variance_main_corrected}
Fix a training-query pair $(x_i,x_q)$ and a token position $t$ (token-level analysis). Under
Assumptions~\ref{assump:bounded_norms_corrected}--\ref{assump:head_op_norm_corrected}:

\textbf{(i) TracIn variance (full gradient).}
Let $u=\nabla_\theta \ell(x_q)$ and $v=\nabla_\theta \ell(x_i)$, and let $R$ be Gaussian as in Lemma~\ref{lem:rp_var_corrected}. Then
\[
\mathrm{Var}\!\left[\widehat{\mathcal{I}}^{\mathrm{TracIn}}\right]
=\frac{1}{K}\Big(\|u\|_2^2\|v\|_2^2+(u^\top v)^2\Big)
\le \frac{2B_\theta^4}{K}.
\]

\textbf{(ii) RH channel variance (token-level).}
Let $\tilde r^q=\tilde r_t(x_q)$, $\tilde r^i=\tilde r_t(x_i)$, $h^q=h_t(x_q)$, $h^i=h_t(x_i)$ and define
$\alpha=\langle \tilde r^q,\tilde r^i\rangle$, $\beta=\langle h^q,h^i\rangle$.
Then
\[
\mathrm{Var}\!\left[\widehat{\mathcal{I}}^{\mathrm{RISE}}_{RH,t}\right]
\le
\frac{\|\tilde r^q\|_2^2\|\tilde r^i\|_2^2\|h^q\|_2^2\|h^i\|_2^2}{K_rK_h}
+\frac{\alpha^2\|h^q\|_2^2\|h^i\|_2^2}{K_h}
+\frac{\beta^2\|\tilde r^q\|_2^2\|\tilde r^i\|_2^2}{K_r}.
\]
In particular, using $\|\tilde r\|\le B_{\tilde r}$ and $\|h\|\le B_h$ gives the worst-case bound
\[
\mathrm{Var}\!\left[\widehat{\mathcal{I}}^{\mathrm{RISE}}_{RH,t}\right]
\le B_{\tilde r}^4B_h^4\Big(\frac{1}{K_rK_h}+\frac{1}{K_h}+\frac{1}{K_r}\Big).
\]

\textbf{(iii) GH channel variance (token-level).}
Let $\tilde g^q=W_{\lmhead}^\top\tilde r^q$, $\tilde g^i=W_{\lmhead}^\top\tilde r^i$, and $\gamma_g=\langle \tilde g^q,\tilde g^i\rangle$.
Then
\[
\mathrm{Var}\!\left[\widehat{\mathcal{I}}^{\mathrm{RISE}}_{GH,t}\right]
\le
\frac{\|\tilde g^q\|_2^2\|\tilde g^i\|_2^2\|h^q\|_2^2\|h^i\|_2^2}{K_gK_h}
+\frac{\gamma_g^2\|h^q\|_2^2\|h^i\|_2^2}{K_h}
+\frac{\beta^2\|\tilde g^q\|_2^2\|\tilde g^i\|_2^2}{K_g}.
\]
Moreover, $\|\tilde g\|_2\le \sigma_1(W_{\lmhead})B_{\tilde r}$ implies the worst-case bound
\[
\mathrm{Var}\!\left[\widehat{\mathcal{I}}^{\mathrm{RISE}}_{GH,t}\right]
\le \sigma_1(W_{\lmhead})^4B_{\tilde r}^4B_h^4\Big(\frac{1}{K_gK_h}+\frac{1}{K_h}+\frac{1}{K_g}\Big).
\]

\textbf{(iv) Two-channel fusion and covariance.}
Let $\widehat{\mathcal{I}}^{\mathrm{RISE}}_{t}=\lambda_{rh}\widehat{\mathcal{I}}^{\mathrm{RISE}}_{RH,t}+\lambda_{gh}\widehat{\mathcal{I}}^{\mathrm{RISE}}_{GH,t}$.
If the RH and GH channels share the same hidden-state sketch $B_t$ but use independent residual/projection sketches ($\mathrm{CS}_r$ independent of $\mathrm{CS}_g$), then
\[
\mathrm{Var}\!\left[\widehat{\mathcal{I}}^{\mathrm{RISE}}_{t}\right]
=
\lambda_{rh}^2\mathrm{Var}[\widehat{\mathcal{I}}^{\mathrm{RISE}}_{RH,t}]
+\lambda_{gh}^2\mathrm{Var}[\widehat{\mathcal{I}}^{\mathrm{RISE}}_{GH,t}]
+2\lambda_{rh}\lambda_{gh}\,\mathrm{Cov}\!\left[A_tB_t,C_tB_t\right],
\]
and the covariance admits the exact identity
\[
\mathrm{Cov}\!\left[A_tB_t,C_tB_t\right]=\mathbb{E}[A_tC_t]\mathrm{Var}(B_t)=\alpha\,\gamma_g\,\mathrm{Var}(B_t),
\]
where $\mathrm{Var}(B_t)\le \|h^q\|_2^2\|h^i\|_2^2/K_h$ by Lemma~\ref{lem:cs_ip_corrected}.
(If RH and GH use independent hidden sketches, then the covariance term is zero.)

\textbf{(v) Variance reduction relative to TracIn.}
\begin{itemize}[leftmargin=*]
\item \textit{Estimator Target:} The bounds above quantify the sketching variance relative to the truncated LM-head influence $\tilde{\mathcal{I}}^{\mathrm{head}}$. Note that $\widehat{\mathcal{I}}^{\mathrm{RISE}}$ is an unbiased estimator of $\tilde{\mathcal{I}}^{\mathrm{head}}$, but biased with respect to $\mathcal{I}^{\mathrm{true}}$ due to head-restriction and truncation.
\item \textit{Dimensionality-Adjusted Comparison:} Consider a matched memory budget where the explicit feature dimension is fixed to $K$ (i.e., set $K_r K_h = K$). The variance of the dominant product term in $\widehat{\mathcal{I}}^{\mathrm{RISE}}_{RH,t}$ scales as:
\[
\mathcal{O}\left( \frac{\|\tilde r^q\|^2\|h^q\|^2\|\tilde r^i\|^2\|h^i\|^2}{K} \right).
\]
By contrast, TracIn on the full gradient has leading scaling $\|\nabla_\theta \ell(x_q)\|^2\|\nabla_\theta \ell(x_i)\|^2/K$.
Under the (optional) energy fraction assumption $\|\nabla_{W_{\lmhead}}\ell(x)\|_F^2\le\gamma\|\nabla_\theta \ell(x)\|_2^2$ and if truncation does not increase the head-gradient energy substantially, the head-based estimator can have substantially smaller variance due to $\gamma\ll 1$.
\item \textit{Effect of Truncation:} Furthermore, under a non-expansive truncation condition (where $\|\tilde r_t\|_2 \le \|r_t\|_2$, satisfied by zero-masking truncation), the product-term variance is strictly reduced compared to dense head-gradient sketching:
\[
\frac{\|\tilde r^q\|_2^2\|\tilde r^i\|_2^2}{\|r^q\|_2^2\|r^i\|_2^2}\le 1,
\]
and is strictly $<1$ if at least one truncation is strict.
\end{itemize}
\end{theorem}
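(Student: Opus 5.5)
The proof goes part by part, each time reducing to one of the supporting lemmas. For part (i), I would apply Lemma~\ref{lem:rp_var_corrected} with $u=\nabla_\theta\ell(x_q)$ and $v=\nabla_\theta\ell(x_i)$. This gives the exact variance formula. For the worst-case bound I would use Cauchy--Schwarz, $(u^\top v)^2\le\|u\|_2^2\|v\|_2^2$, together with Assumption~\ref{assump:bounded_full_grad_corrected}, which gives $2B_\theta^4/K$.

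For part (ii), I would identify $\widehat{\mathcal{I}}^{\mathrm{RISE}}_{RH,t}=A_tB_t$ and note that $\mathrm{CS}_r$ and $\mathrm{CS}_h$ use independent hash families. Lemma~\ref{lem:factorized_var_corrected} then applies verbatim with $(\tilde r,\tilde r',h,h')=(\tilde r^q,\tilde r^i,h^q,h^i)$. For the worst-case form I would bound $\alpha^2\le\|\tilde r^q\|_2^2\|\tilde r^i\|_2^2\le B_{\tilde r}^4$ and $\beta^2\le B_h^4$ by Cauchy--Schwarz, and then use Assumption~\ref{assump:bounded_norms_corrected}.

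Part (iii) is identical with $\mathrm{CS}_g$ in place of $\mathrm{CS}_r$, since Lemma~\ref{lem:factorized_var_corrected} never used where the first factor came from. The worst-case form additionally uses Assumption~\ref{assump:head_op_norm_corrected}, namely $\|\tilde g\|_2\le\sigma_1(W_{\lmhead})B_{\tilde r}$.

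Part (iv) is where actual computation happens, and it is the step I expect to need the most care. The variance of a weighted sum expands into the two variances plus $2\lambda_{rh}\lambda_{gh}\mathrm{Cov}$, so only the covariance term needs work. Since $A_t$, $C_t$, $B_t$ are mutually independent (independent hash families), I would compute
\[
\mathbb{E}[A_tB_tC_tB_t]=\mathbb{E}[A_t]\,\mathbb{E}[C_t]\,\mathbb{E}[B_t^2]=\alpha\gamma_g\big(\mathrm{Var}(B_t)+\beta^2\big),
\]
using unbiasedness from Lemma~\ref{lem:cs_ip_corrected}. Subtracting $\mathbb{E}[A_tB_t]\,\mathbb{E}[C_tB_t]=\alpha\beta\cdot\gamma_g\beta$ leaves exactly $\alpha\gamma_g\mathrm{Var}(B_t)$. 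This also equals $\mathbb{E}[A_tC_t]\mathrm{Var}(B_t)$, because $\mathbb{E}[A_tC_t]=\alpha\gamma_g$ by independence. The bound on $\mathrm{Var}(B_t)$ is Lemma~\ref{lem:cs_ip_corrected}. If the two channels use independent hidden sketches $B_t,B'_t$, the same computation gives $\mathbb{E}[B_t]\mathbb{E}[B'_t]-\beta^2=0$, so the covariance vanishes. The subtle point is making sure the independence assumptions stated (distinct hash families for $r$, $h$, $g$) justify the factorization of the fourth moment. The $\tilde g$ vectors are deterministic given the data, so they are.

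Part (v) is mostly interpretive.
\begin{itemize}
\item \emph{Unbiasedness.} Unbiasedness for $\tilde{\mathcal{I}}^{\mathrm{head}}$ at token level follows from $\mathbb{E}[A_tB_t]=\alpha\beta=\langle\tilde r^q\otimes h^q,\tilde r^i\otimes h^i\rangle_F$, by the outer-product identity behind Lemma~\ref{lem:outer-product}.
\item \emph{Matched-budget comparison.} Setting $K_rK_h=K$, I would note that the leading term of (ii) is the product of the squared Frobenius norms of the truncated head gradients $\tilde r\otimes h$, divided by $K$. When truncation does not increase energy, Assumption~\ref{assump:energy_fraction_corrected} bounds each factor by $\gamma\|\nabla_\theta\ell\|_2^2$. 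This gives a leading term at most $\gamma^2$ times the TracIn one.
\item \emph{Truncation ratio.} Here $\|\tilde r_t\|_2\le\|r_t\|_2$ is immediate for zero-masking, since the retained coordinates are unchanged and the others are zeroed. The ratio is then at most $1$, and strictly less than $1$ if some discarded coordinate is nonzero.
\end{itemize}

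I would keep the claims in (v) explicitly scaling-level statements about the dominant term, without claiming a bound on the full variance. The cross terms $1/K_r$ and $1/K_h$ are not controlled by the budget $K$.
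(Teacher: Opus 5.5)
\textbf{Verdict: genuine gap.} For (i)--(iv) you follow the paper's proof step for step: Lemma~\ref{lem:rp_var_corrected} plus Cauchy--Schwarz, Lemma~\ref{lem:factorized_var_corrected} for both channels, and the independence computation $\mathrm{Cov}(A_tB_t,C_tB_t)=\alpha\gamma_g\mathrm{Var}(B_t)$. On (v), where the paper gives no argument, your caveats are better than the paper's, in particular that the $1/K_r$ and $1/K_h$ cross terms are not controlled by $K_rK_h=K$.

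\textbf{The gap.} The step you take on trust, shared with the paper, fails. The bound $\mathrm{Var}[\widehat Z]\le\|x\|_2^2\|y\|_2^2/K$ in Lemma~\ref{lem:cs_ip_corrected} is false. When $\mathbb{E}[E^2]$ is written as a double sum over coordinate pairs $(i,j)$ and $(k,l)$, the sign expectation keeps both pairings $(k,l)=(i,j)$ and $(k,l)=(j,i)$, and the lemma's proof drops the second. With independent Rademacher signs and buckets the exact value is
\[
\mathrm{Var}[\widehat Z]=\frac{1}{K}\Big(\|x\|_2^2\|y\|_2^2+(x^\top y)^2-2\sum_j x_j^2y_j^2\Big).
\]
For example, $K=1$ and $x=y=(1,1,1)$ give $\widehat Z=(s_1+s_2+s_3)^2$, whose variance is $12>9$.

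\textbf{Consequence.} The bounds in (ii) and (iii), and $\mathrm{Var}(B_t)\le\|h^q\|_2^2\|h^i\|_2^2/K_h$ in (iv), do not follow, and they are in fact false. Take identical token-level inputs ($h^q=h^i=h$, $\tilde g^q=\tilde g^i=g$). Then $\mathrm{Var}(B_t)=2(\|h\|_2^4-\sum_j h_j^4)/K_h\approx 2\|h\|_2^4/K_h$ for spread-out $h$. Substitute this into the exact identity $\mathrm{Var}(AB)=\mathrm{Var}(A)\mathrm{Var}(B)+\alpha^2\mathrm{Var}(B)+\beta^2\mathrm{Var}(A)$:
\begin{itemize}
\item it exceeds the (iii) bound by a factor between $2$ and $4$ when $g$ is also spread out;
\item it exceeds the (ii) bound by about $4\|h\|_2^4\big(1/K_h+1/(K_rK_h)\big)$ when $\tilde r^q=\tilde r^i\approx\mathbf{1}_v-\mathbf{1}_y$.
\end{itemize}

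\textbf{Repair.} Your argument survives with different constants. Use the correct per-sketch bound $\mathrm{Var}(A_t)\le(\|\tilde r^q\|_2^2\|\tilde r^i\|_2^2+\alpha^2)/K_r\le 2\|\tilde r^q\|_2^2\|\tilde r^i\|_2^2/K_r$, which is the form already stated in Proposition~\ref{prop:countsketch}. Do the same for $B_t$ and $C_t$ in the identity above. For example, the RH worst case becomes $B_{\tilde r}^4B_h^4\big(4/(K_rK_h)+2/K_h+2/K_r\big)$. Part (i) and the exact identities in (iv) are unaffected.

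\textbf{Two smaller points on (v).}
\begin{itemize}
\item $\mathbb{E}[A_tB_t]=\alpha\beta$ gives unbiasedness only for the RH channel. Since $\mathbb{E}[C_tB_t]=\gamma_g\beta=(\tilde r^q)^\top W_{\lmhead}W_{\lmhead}^\top\tilde r^i\,\beta$, the fused estimator targets $\tilde{\mathcal I}^{\mathrm{head}}$ only when $(\lambda_{rh},\lambda_{gh})=(1,0)$.
\item Bounding the single-token quantity $\|\tilde r_t\|_2^2\|h_t\|_2^2$ by $\gamma\|\nabla_\theta\ell\|_2^2$ requires reading Assumption~\ref{assump:energy_fraction_corrected}, and the TracIn gradients, at the token level. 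The sequence-level head gradient $\sum_t r_t\otimes h_t$ need not dominate any single token's term.
\end{itemize}
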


\begin{proof}
Part (i) is Lemma~\ref{lem:rp_var_corrected} applied to $u=\nabla_\theta \ell(x_q)$ and $v=\nabla_\theta \ell(x_i)$ plus Cauchy--Schwarz.

Parts (ii) and (iii) follow by applying Lemma~\ref{lem:factorized_var_corrected} to
$(\tilde r,\tilde r')=(\tilde r^q,\tilde r^i)$ and $(h,h')=(h^q,h^i)$ for RH, and
$(\tilde r,\tilde r')=(\tilde g^q,\tilde g^i)$ and $(h,h')=(h^q,h^i)$ for GH, together with Assumption~\ref{assump:head_op_norm_corrected}.

For (iv), write $\widehat{\mathcal{I}}_{RH,t}=A_tB_t$ and $\widehat{\mathcal{I}}_{GH,t}=C_tB_t$.
Since $(A_t,C_t)$ are independent of $B_t$ and $A_t$ is independent of $C_t$ (independent sketch families),
\[
\mathrm{Cov}(A_tB_t,C_tB_t)=\mathbb{E}[A_tC_tB_t^2]-\mathbb{E}[A_tB_t]\mathbb{E}[C_tB_t]
=\mathbb{E}[A_tC_t]\big(\mathbb{E}[B_t^2]-\mathbb{E}[B_t]^2\big)
=\mathbb{E}[A_tC_t]\mathrm{Var}(B_t).
\]
Moreover $\mathbb{E}[A_tC_t]=\mathbb{E}[A_t]\mathbb{E}[C_t]=\alpha\gamma_g$.
\end{proof}

\subsection{Bias-Variance Decomposition}

While the analysis above focuses on sketching variance, the total error of the \textit{RISE} estimator with respect to the true influence $\mathcal{I}^{\mathrm{true}}$ admits the following decomposition:
\[
\widehat{\mathcal{I}}^{\mathrm{RISE}} - \mathcal{I}^{\mathrm{true}}
=
\underbrace{\big(\widehat{\mathcal{I}}^{\mathrm{RISE}}-\tilde{\mathcal{I}}^{\mathrm{head}}\big)}_{\text{Sketching Variance (Stochastic)}}
+
\underbrace{\big(\tilde{\mathcal{I}}^{\mathrm{head}}-\mathcal{I}^{\mathrm{head}}\big)}_{\text{Truncation Bias (Deterministic)}}
+
\underbrace{\big(\mathcal{I}^{\mathrm{head}}-\mathcal{I}^{\mathrm{true}}\big)}_{\text{Structural Bias (Deterministic)}}.
\]
The sketching variance is controlled by Theorem~\ref{thm:variance_main_corrected}. The truncation bias is bounded by the residual mass discarded via Assumption~\ref{assump:trunc_error_corrected} and Lemma~\ref{lem:renorm_trunc_bound}. The structural bias depends on the gradient energy concentration in the LM head (Observation 1), representing the irreducible error from excluding the model body parameters.

\section{Hyperparameters}
\label{sec:appendix_hyperparameters}
We present the hyperparameters in this paper in Table~\ref{tab:hyperparameters}.

\begin{table*}[htbp]
\centering
\caption{Hyperparameters used in all experiments.}
\label{tab:hyperparameters}
\scriptsize
\begin{tabular}{llp{6cm}}
\toprule
\textbf{Parameter} & \textbf{Value} & \textbf{Description} \\
\midrule
\multicolumn{3}{l}{\textbf{\textit{RISE Configuration}}} \\
\midrule
\multicolumn{3}{l}{\textit{Channel Configuration}} \\
\texttt{fusion\_mode} & \texttt{"rh+gh"} & Active channels (RH + GH) \\
$\lambda_{rh}$ & $0.7$ & RH channel weight \\
$\lambda_{gh}$ & $1.0$ & GH channel weight \\
\midrule
\multicolumn{3}{l}{\textit{CountSketch Dimensions (Pythia-1B)}} \\
$K_r$ & $128$ / $64$ & Residual sketch dimension (58MB / 14MB config) \\
$K_h$ & $24$ / $12$ & Hidden state sketch dimension \\
$K_g$ & $128$ / $64$ & Gradient sketch dimension \\
\midrule
\multicolumn{3}{l}{\textit{CountSketch Dimensions (OLMo-3-32B)}} \\
$K_r$ & $48$ / $24$ / $16$ & Residual sketch dimension (39MB / 4.5MB / 3.4MB config) \\
$K_h$ & $64$ / $8$ / $8$ & Hidden state sketch dimension \\
$K_g$ & $16$ / $35$ / $28$ & Gradient sketch dimension \\
\midrule
\multicolumn{3}{l}{\textit{Adaptive Top-L}} \\
\texttt{adaptive\_topL} & True & Enable cumulative-probability truncation \\
$\theta_{cum}$ & $0.92$ & Cumulative probability threshold \\
\texttt{min\_topL} & $4$ & Minimum kept candidates \\
\texttt{max\_topL\_cap} & $256$ & Top-L upper bound \\
\midrule
\multicolumn{3}{l}{\textit{Other}} \\
\texttt{seq\_len} & $512$ & Maximum sequence length \\
\texttt{normalize\_sample} & True & L2 normalize final vector \\
\texttt{seed} & $42$ & Random seed for reproducibility \\
\midrule
\multicolumn{3}{l}{\textbf{\textit{TrackStar Configuration}}} \\
\midrule
\texttt{projection\_dim} & $16$ & Random projection dimension \\
\texttt{projection\_type} & \texttt{rademacher} & Projection matrix type \\
\texttt{loss\_fn} & \texttt{ce} & Cross-entropy loss \\
\texttt{loss\_reduction} & \texttt{mean} & Loss reduction method \\
\texttt{include\_bias} & False & Exclude bias gradients \\
\texttt{precision} & \texttt{bf16} & Model precision \\
\midrule
\multicolumn{3}{l}{\textbf{\textit{RapidIn Configuration}}} \\
\midrule
\texttt{RapidGrad\_K} & $65536$ & Compressed gradient dimension \\
\texttt{shuffle\_lambda} & $20$ & Shuffle parameter for RapidGrad \\
\texttt{use\_zo} & False & Use backpropagation gradients \\
\texttt{max\_length} & $512$ & Maximum sequence length \\
\texttt{seed} & $42$ & Random seed \\
\midrule
\multicolumn{3}{l}{\textbf{\textit{ZO-Inf Configuration}}} \\
\midrule
\texttt{RapidGrad\_K} & $65536$ & Compressed gradient dimension \\
\texttt{shuffle\_lambda} & $20$ & Shuffle parameter for RapidGrad \\
\texttt{use\_zo} & True & Use zeroth-order gradients \\
\texttt{zo\_eps} & $0.001$ & Perturbation scale ($\epsilon$) \\
\texttt{zo\_sample} & $4$ & Number of ZO samples per gradient \\
\texttt{max\_length} & $512$ & Maximum sequence length \\
\texttt{seed} & $42$ & Random seed \\
\bottomrule
\end{tabular}
\end{table*}

\begin{table*}[htbp]
\centering
\caption{Dataset composition for the three evaluation tasks.}
\label{tab:task_data_distribution}
\scriptsize
\setlength{\tabcolsep}{6pt}
\renewcommand{\arraystretch}{1.15}
\begin{tabular}{@{}lcccl@{}}
\toprule
\textbf{Task} & \textbf{Pool Size} & \textbf{Positives} & \textbf{Pos \%} & \textbf{Description} \\
\midrule
Howdy! Backdoor & 5,000 & 438 & 8.8\% & 
Alpaca-based pool; positives contain trigger \texttt{howdy!}~\cite{lin2024tokenwiseinfluentialtrainingdata} \\
\addlinespace[2pt]
Finance–Medical & 5,000 & 500 & 10.0\% & 
Medical QA~\cite{lavita_medical_qa_datasets_2023} vs Finance~\cite{gaurang_bharti_2024} \\
\addlinespace[2pt]
Brain Rot & 5,000 & 500 & 10.0\% & 
High-quality control vs junk data~\cite{xing2025llmsbrainrot} \\
\bottomrule
\end{tabular}

\vspace{0.5em}
\parbox{\textwidth}{\footnotesize \textbf{Note:} For Howdy!, we use 100 WebQuestions test generations containing the trigger. Positives for Finance–Medical are medical samples; for Brain Rot, positives are high-quality control samples.}
\end{table*}
\section{Additional Experimental Results}
\label{sec:appendix_experiments}

\paragraph{Testbed.}
All experiments are run on servers with the following configurations: H200 setup: 8 NVIDIA H200 GPUs (each with 141GB GPU memory); GH200 setup: 1 NVIDIA GH200 GPU (96GB GPU memory).

\subsection{Complete Per-Layer Energy Distribution}
\label{sec:appendix_obs1}

This subsection provides comprehensive results on per-layer gradient energy distribution across different models and tasks. Figure~\ref{fig:obs1_pretrain_finetune_same_energy} demonstrates that gradient-energy profiles remain largely stable before and after fine-tuning. Table~\ref{tab:full-gradient-storage} reports the full storage requirements for last-layer gradients, while Table~\ref{tab:grad-baseline} presents an ablation comparing LM-Head-Full-gradient sketch with \textit{RISE}.

\begin{figure*}[htbp]
\centering
\includegraphics[width=\linewidth]{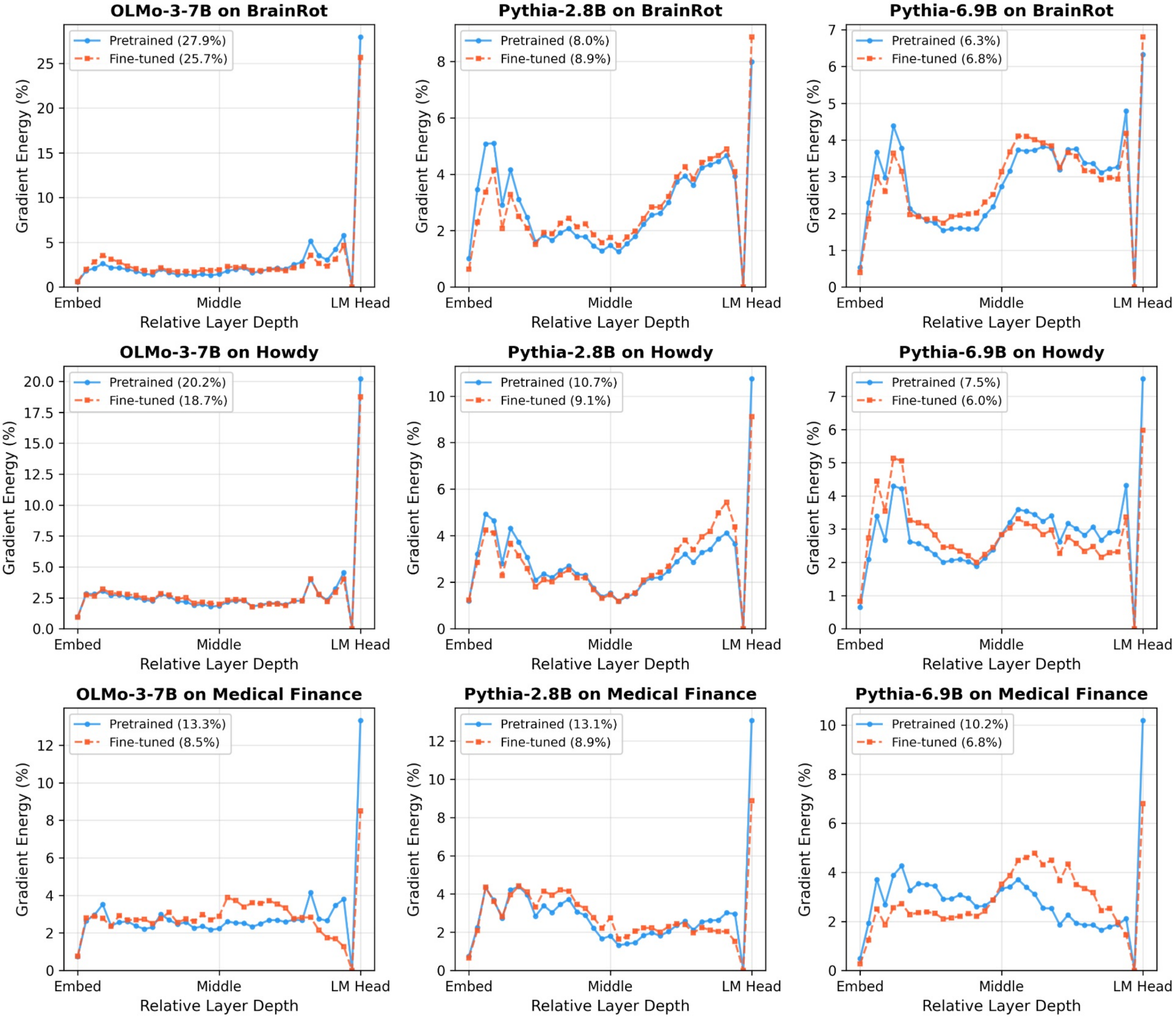}
\caption{\textbf{Per-layer gradient-energy profiles are largely preserved after fine-tuning.}
For three tasks (BrainRot, Howdy, Finance--Medical) and three models (OLMo-3-7B, Pythia-2.8B, Pythia-6.9B), we plot the fraction of total gradient energy attributed to each layer (Embed $\rightarrow$ LM Head) for the pretrained checkpoint (blue) and the fine-tuned checkpoint (orange).
The curves nearly overlap; numbers in legends denote the LM-head energy share, which changes only mildly after fine-tuning.
This stability motivates estimating layer importance (and selecting readout-side signals) from the pretrained model without task-specific per-layer gradient profiling.}
\label{fig:obs1_pretrain_finetune_same_energy}
\end{figure*}

\begin{table}[t]
\centering
\caption{Full Last-Layer Gradient Storage Requirements ($N$=5,000 samples, fp16)}
\label{tab:full-gradient-storage}
\setlength{\tabcolsep}{6pt}
\begin{tabular}{lrrrc}
\toprule
\textbf{Model} & \textbf{Vocab} & \textbf{Hidden} & \textbf{V$\times$D} & \textbf{Index Size} \\
\midrule
Pythia-14M & 50,304 & 128 & 6.4M & 60 GB \\
Pythia-1B & 50,304 & 2,048 & 103M & 959 GB \\
OLMo-7B & 100,352 & 4,096 & 411M & 3.8 TB \\
OLMo-32B & 100,352 & 5,120 & 514M & 4.8 TB \\
\bottomrule
\end{tabular}
\end{table}

\begin{table*}[t]
\centering
\caption{Ablation: LM-Head-Full-gradient sketch vs.\ \textit{RISE} on three tasks (Pythia-1B). Grad-Sketch applies CountSketch to LM Head gradients; \textit{RISE} restricts to the LM head.}
\label{tab:grad-baseline}
\begin{tabular}{@{}llc cccc@{}}
\toprule
\multirow{2}{*}{Dataset} & \multirow{2}{*}{Method} & \multirow{2}{*}{Index} & \multicolumn{2}{c}{Top-5} & \multicolumn{2}{c}{Top-50} \\
\cmidrule(lr){4-5} \cmidrule(lr){6-7}
& & & auPRC & auROC & auPRC & auROC \\
\midrule
\multirow{2}{*}{Fin--Med} 
  & Grad-Sketch & 156 MB & \textbf{1.000} & \textbf{1.000} & \textbf{0.992} & 0.986 \\
  & \textit{RISE} & \textbf{58.6 MB} & 0.989 & 0.994 & 0.981 & \textbf{0.990} \\
\midrule
\multirow{2}{*}{BrainRot} 
  & Grad-Sketch & 156 MB & \textbf{1.000} & \textbf{1.000} & 0.724 & 0.872 \\
  & \textit{RISE} & \textbf{58.6 MB} & 0.847 & 0.949 & \textbf{0.829} & \textbf{0.930} \\
\midrule
\multirow{2}{*}{Howdy} 
  & Grad-Sketch & 156 MB & 0.679 & 0.833 & 0.890 & 0.960 \\
  & \textit{RISE} & \textbf{58.6 MB} & \textbf{0.996} & \textbf{0.997} & \textbf{0.939} & \textbf{0.967} \\
\bottomrule
\end{tabular}
\end{table*}

\subsection{Complete Per-Layer Discriminativeness}
\label{sec:appendix_obs2}

This subsection analyzes the layer-wise discriminativeness of hidden-state representations. Figure~\ref{fig:obs2_discriminativeness} shows the U-shaped discriminativeness pattern on C4 across model scales. Figure~\ref{fig:obs2_pretrain_finetune_same_ushape} further shows that this pattern persists both before and after fine-tuning across multiple models and tasks.

\begin{figure*}[htbp]
\centering
\includegraphics[width=0.78\linewidth]{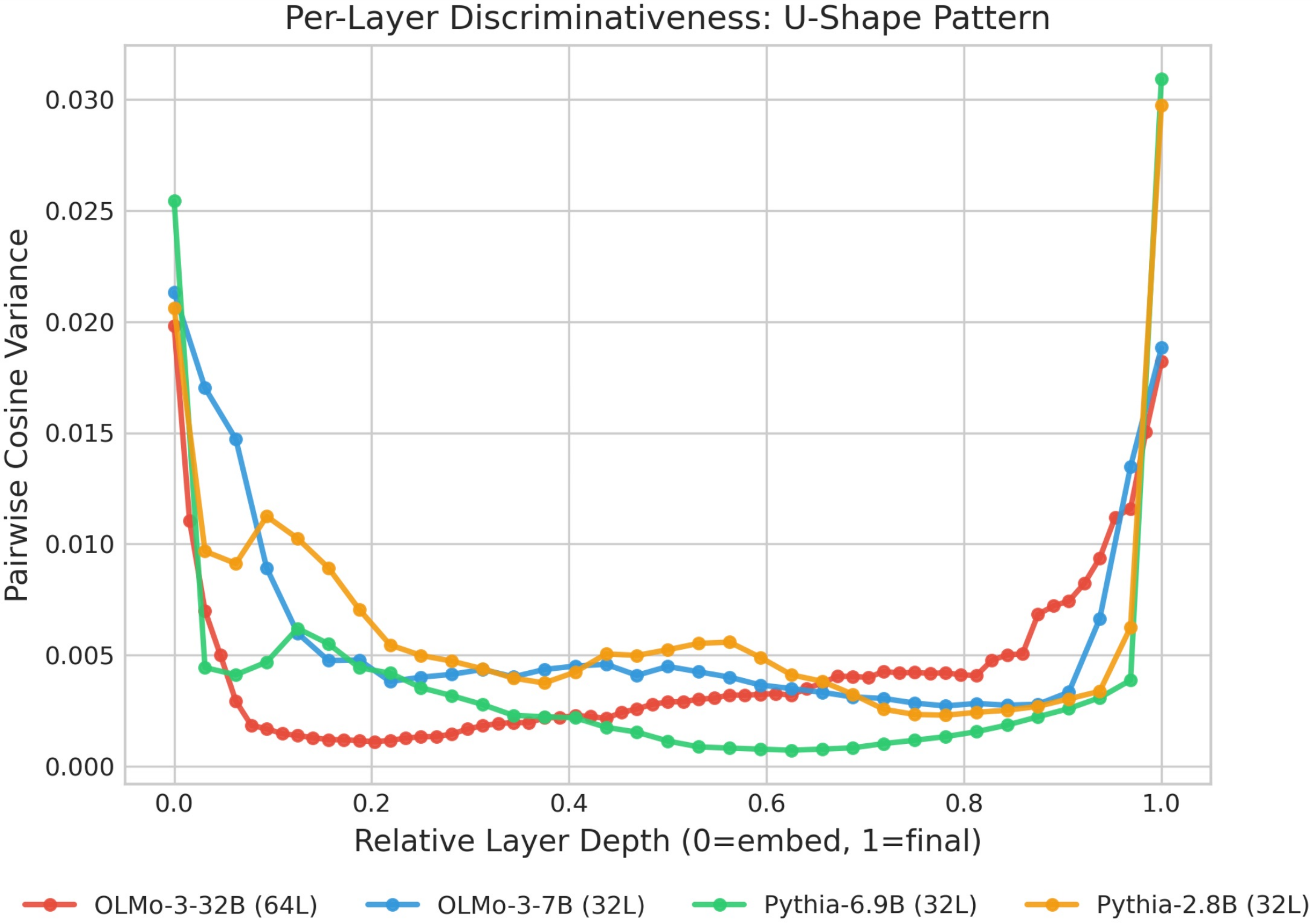}
\caption{\textbf{Final-layer discriminativeness across model scales.}
Per-layer pairwise cosine variance for four models on C4.
A clear U-shape emerges: variance drops in middle layers due to hidden collapse and recovers near the final layer.}
\label{fig:obs2_discriminativeness}
\end{figure*}

\begin{figure*}[htbp]
\centering
\includegraphics[width=\linewidth]{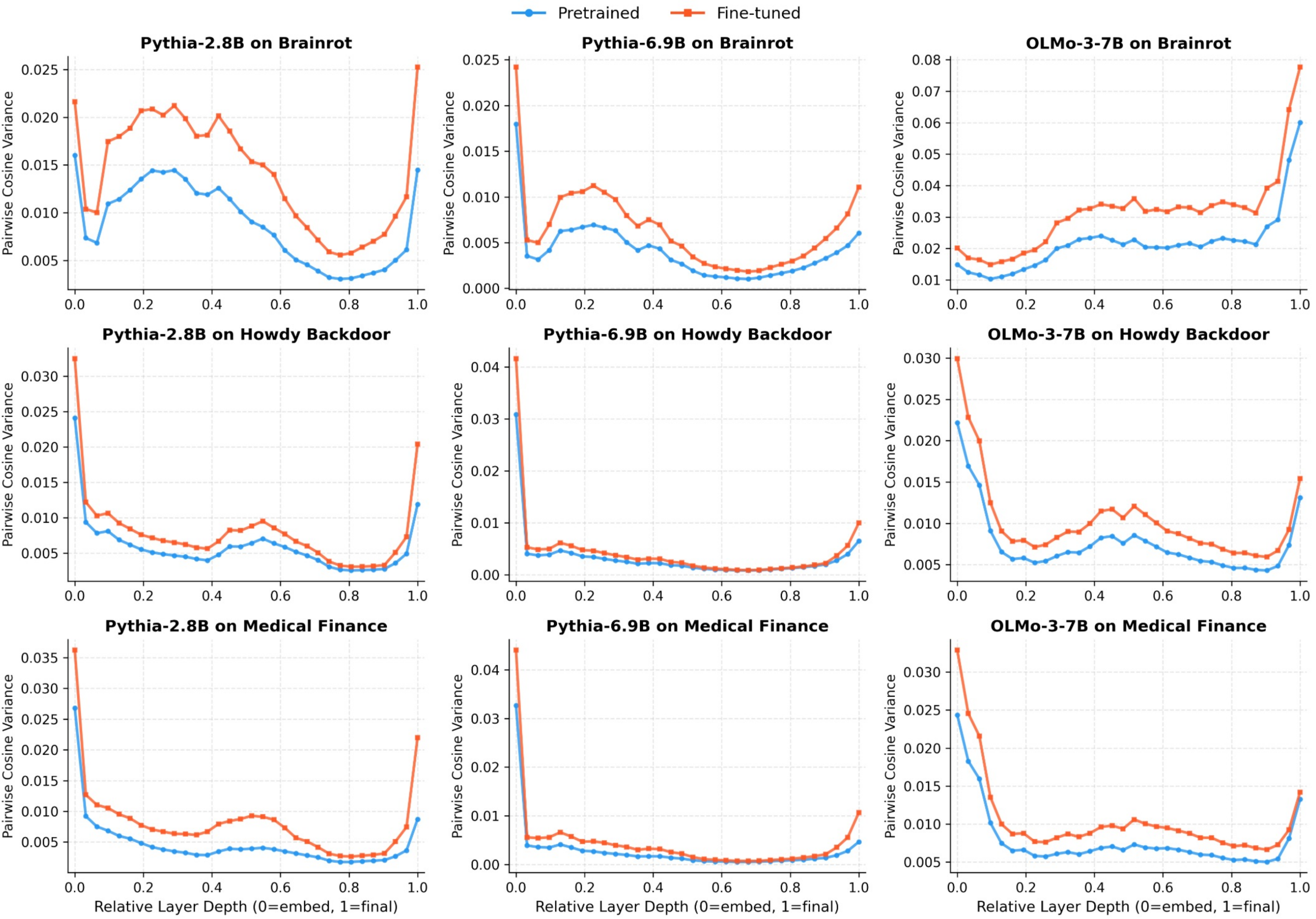}
\caption{\textbf{Hidden-state discriminativeness follows a stable U-shape before and after fine-tuning.} For each layer, we compute the variance of pairwise cosine similarities among token hidden states (higher variance more discriminative representations).
Across three fine-tuning tasks (BrainRot, Howdy Backdoor, Medical--Finance) and three models (Pythia-2.8B, Pythia-6.9B, OLMo-3-7B), pretrained (blue) and fine-tuned (orange) curves are highly consistent: variance is low in middle layers and recovers near the final layer (relative depth $0$=embed, $1$=final).
This consistent U-shape suggests that the layer-wise discriminative structure is largely preserved under fine-tuning, motivating readout-side representations as a reliable hotspot for scalable influence estimation.}
\label{fig:obs2_pretrain_finetune_same_ushape}
\end{figure*}

\begin{figure*}[htbp]
\centering
\includegraphics[width=\linewidth]{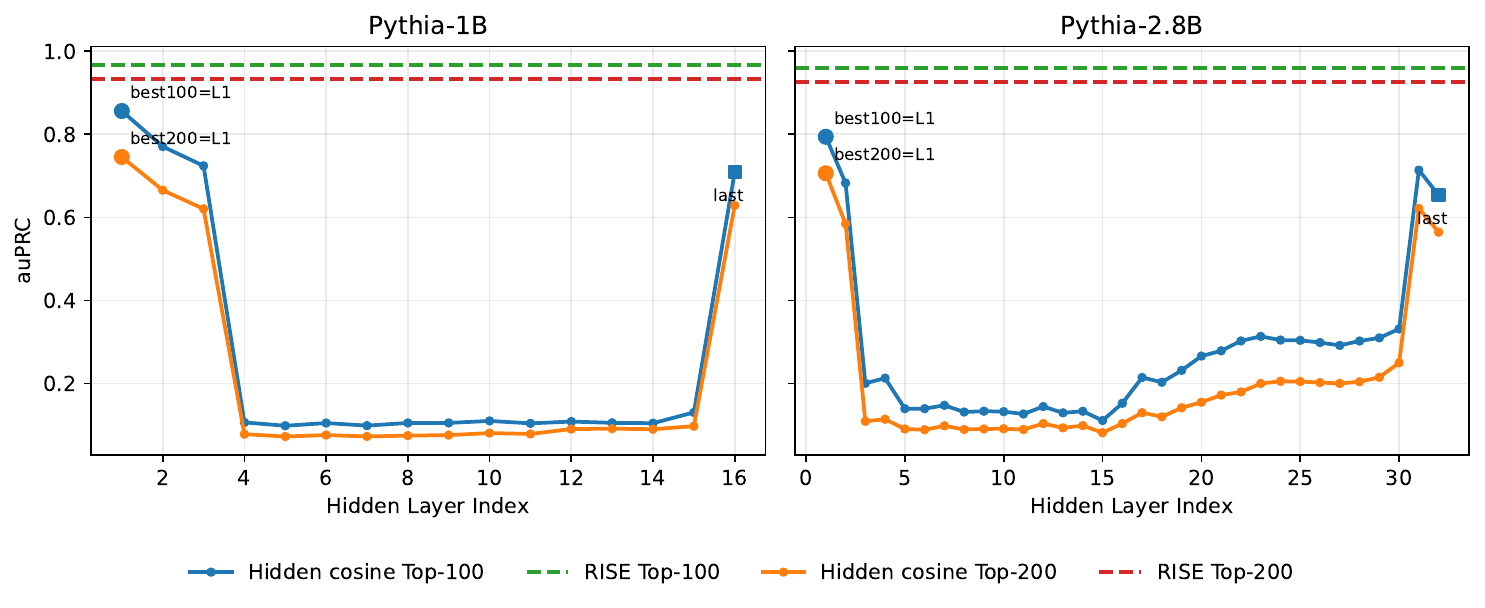}
\caption{\textbf{Full hidden-layer retrieval sweep on Howdy.}
We evaluate hidden-state cosine retrieval at every layer for Pythia-1B and Pythia-2.8B, and compare against \textit{RISE} at Top-100 and Top-200.
The strongest hidden-only layer is layer 1 in both models, not the final layer.
However, even this best hidden-layer baseline remains below \textit{RISE}, indicating that \textit{RISE}'s gains are not merely due to selecting a stronger hidden representation layer.}
\label{fig:hidden_layer_sweep}
\end{figure*}

\subsection{Additional Evidence for Sparse Active Tokens}
\label{sec:appendix_sparse_active_tokens}

This subsection provides the supporting evidence for Observation~\ref{sec:obs3}.
The main text reports the headline result under a fixed-control: fixed $\tau$ when sweeping the number of candidate tokens, and fixed $K$ when sweeping $\tau$.
Here we make the measurements explicit, separate probability mass from residual energy, and report the stability of sparse RH/GH fidelity across tasks, model scales, and checkpoints.

\paragraph{Metrics.}
For a token position $t$, let $p_t=\mathrm{softmax}(z_t/\tau)$ and $r_t=p_t-\mathbf{1}_{y_t}$.
The active support $\mathcal{S}_t$ always includes the ground-truth next token $y_t$.
We report:
\begin{align*}
M_t(\mathcal{S}_t)
&= \sum_{v\in\mathcal{S}_t} p_t(v)
&&\text{(probability mass)},\\
E_{\mathrm{full}}(\mathcal{S}_t)
&= \frac{\sum_{v\in\mathcal{S}_t} r_t(v)^2}{\sum_{v\in[V]} r_t(v)^2}
&&\text{(full residual $\ell_2$ energy)},\\
E_{\mathrm{GT}}
&= \frac{(1-p_t(y_t))^2}{\sum_{v\in[V]} r_t(v)^2}
&&\text{(ground-truth coordinate energy)},\\
E_{\mathrm{tail}}(\mathcal{S}_t)
&= \frac{\sum_{v\in\mathcal{S}_t\setminus\{y_t\}} p_t(v)^2}
{\sum_{v\ne y_t}p_t(v)^2}
&&\text{(non-GT residual-tail energy)}.
\end{align*}
The distinction is important: probability mass is an $\ell_1$ quantity, while influence through the LM-head residual uses inner products and is therefore governed by squared residual energy.
Thus a support can cover moderate probability mass while preserving nearly all influence-relevant energy.
We report $E_{\mathrm{GT}}$ separately because the forced ground-truth coordinate often dominates the full residual norm; the non-GT tail measures whether the selected active prediction tokens carry additional signal beyond GT-only.

\paragraph{Fixed-control protocol.}
Unless otherwise stated, the sparse support is
$\mathcal{S}_t=\mathrm{TopK}(z_t/\tau,K)\cup\{y_t\}$, with logits renormalized only over $\mathcal{S}_t$.
For the Top-$K$ sweep we fix $\tau=1.0$ and vary $K$.
For the temperature sweep we fix $K=128$ and vary $\tau$.
This avoids coupling temperature, candidate count, and adaptive cumulative-probability thresholds.

The main fixed-control diagnostics are shown in Figure~\ref{fig:sparse_active_tokens}.
The RH panel shows that full residual energy is nearly lossless once the ground-truth coordinate is included, while the non-GT tail curve measures the candidate-token contribution against the full non-GT vocabulary tail.
The GH panel shows that the same sparse candidate support preserves the dense semantic error direction.
Figure~\ref{fig:sparse_active_tokens_appendix} adds the complementary task/model matrix and fixed-$K$ temperature sweep.

\begin{figure*}[t]
\centering
\includegraphics[width=0.85\linewidth]{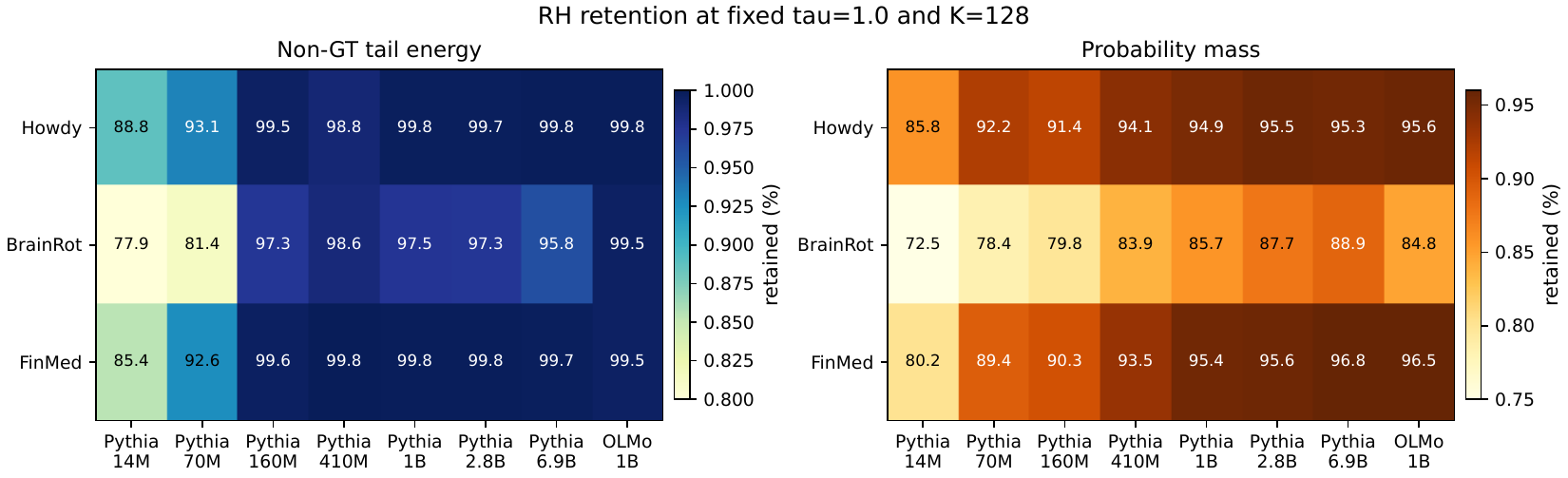}
\vspace{0.5em}
\includegraphics[width=0.85\linewidth]{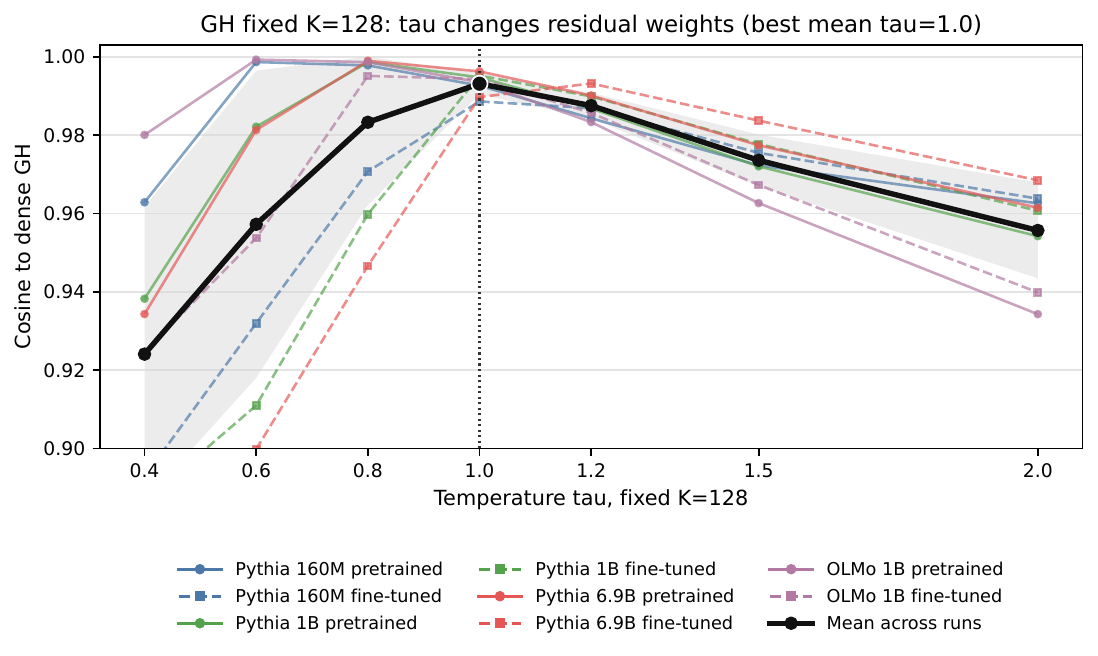}
\caption{\textbf{Additional fixed-control sparse active-token diagnostics.}
Top: at fixed $\tau=1.0$ and $K=128$, RH residual-tail retention remains stable across tasks, model scales, and pretrained/fine-tuned checkpoints.
Bottom: with fixed $K=128$, the GH temperature sweep changes only residual weights, showing that $\tau=1.0$ gives the strongest average fidelity under a constant candidate count.}
\label{fig:sparse_active_tokens_appendix}
\vspace{-0.5em}
\end{figure*}

\begin{table}[t]
\centering
\scriptsize
\setlength{\tabcolsep}{5pt}
\caption{\textbf{Fixed $\tau=1.0$ Top-$K$ sweep for sparse GH.}
Rows are averaged over eight BrainRot pretrained/fine-tuned Pythia and OLMo runs.}
\label{tab:fixed_topk_gh_sweep}
\begin{tabular}{cccc}
\toprule
\textbf{$K$} & \textbf{GH cosine} & \textbf{Prob. mass (\%)} & \textbf{Support/V (\%)} \\
\midrule
8 & 0.954 & 61.3 & 0.015 \\
16 & 0.973 & 68.4 & 0.029 \\
32 & 0.984 & 75.8 & 0.056 \\
64 & 0.990 & 81.8 & 0.112 \\
128 & 0.993 & 85.6 & 0.223 \\
256 & 0.995 & 89.1 & 0.445 \\
\bottomrule
\end{tabular}
\end{table}

\paragraph{Fixed Top-$K$ GH fidelity.}
Because the GH channel uses $g_t=W_{\lmhead}^{\top}r_t$, truncating the residual could distort the semantic error direction even when RH remains accurate.
We therefore compare dense GH with sparse GH using cosine similarity.
At fixed $\tau=1.0$ and $K=128$, sparse GH reaches mean cosine $0.993$ against dense GH across the eight BrainRot runs.

\begin{table}[t]
\centering
\scriptsize
\setlength{\tabcolsep}{5pt}
\caption{\textbf{Sparse GH fidelity by BrainRot run at fixed $\tau=1.0$, $K=128$.}
The support is fixed-size rather than adaptively selected.}
\label{tab:fixed_gh_fidelity_by_run}
\begin{tabular}{llccc}
\toprule
\textbf{Model} & \textbf{State} & \textbf{GH cosine} & \textbf{Prob. mass (\%)} & \textbf{Support/V (\%)} \\
\midrule
OLMo-1B & Fine-tuned & 0.994 & 87.3 & 0.128 \\
OLMo-1B & Pretrained & 0.993 & 85.2 & 0.128 \\
Pythia-160M & Fine-tuned & 0.989 & 81.9 & 0.255 \\
Pythia-160M & Pretrained & 0.993 & 78.0 & 0.255 \\
Pythia-1B & Fine-tuned & 0.995 & 88.2 & 0.255 \\
Pythia-1B & Pretrained & 0.995 & 85.6 & 0.255 \\
Pythia-6.9B & Fine-tuned & 0.990 & 91.1 & 0.254 \\
Pythia-6.9B & Pretrained & 0.996 & 87.6 & 0.254 \\
\bottomrule
\end{tabular}
\end{table}

\begin{figure*}[t]
\centering
\includegraphics[width=0.78\linewidth]{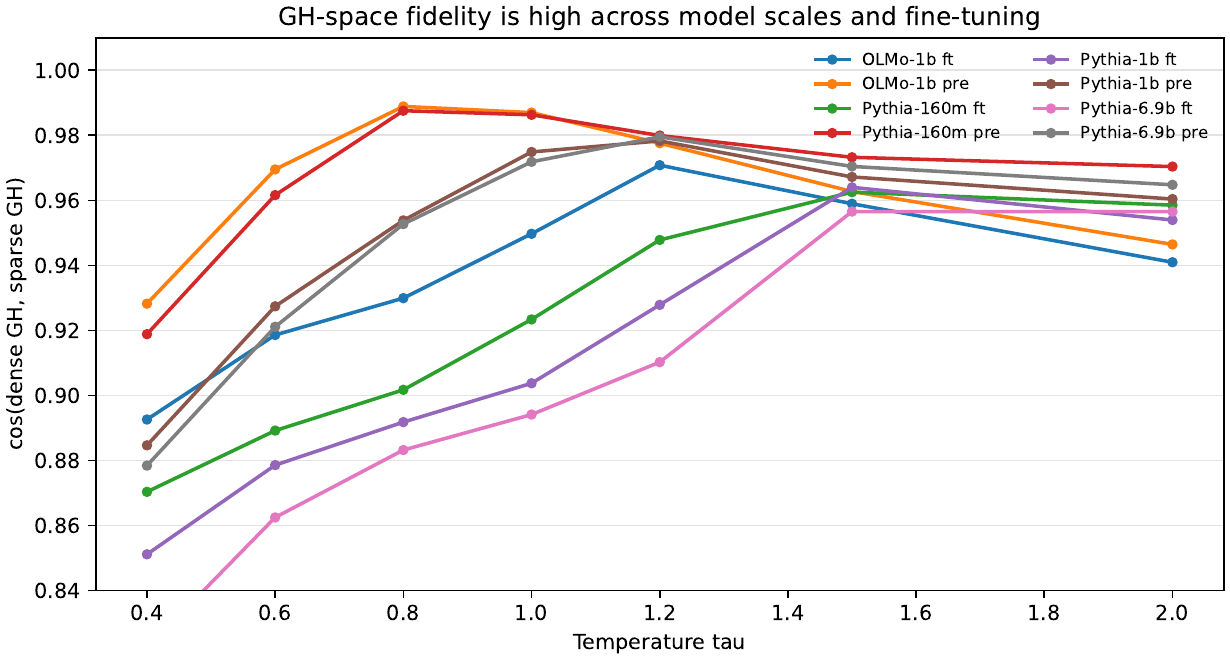}
\caption{\textbf{Sparse GH fidelity across BrainRot model runs.}
This visualizes the by-run stability of sparse GH against dense GH, complementing Table~\ref{tab:fixed_gh_fidelity_by_run}.
Across pretrained and fine-tuned Pythia/OLMo checkpoints, the sparse GH direction remains close to the dense semantic error direction.}
\label{fig:sparse_gh_fidelity_by_run}
\vspace{-0.5em}
\end{figure*}

\paragraph{Fixed-$K$ temperature sensitivity.}
Temperature controls how diffuse the softmax tail is before truncation.
With $K$ fixed at $128$, the support size is constant; changing $\tau$ only changes the residual weights assigned inside the same-size candidate set.
Table~\ref{tab:fixed_gh_tau_sweep} shows that $\tau=1.0$ gives the best average GH cosine in this sweep.

\begin{table}[t]
\centering
\scriptsize
\setlength{\tabcolsep}{4pt}
\caption{\textbf{Fixed $K=128$ temperature sweep for sparse GH.}
Rows are averaged over eight BrainRot pretrained/fine-tuned model runs.}
\label{tab:fixed_gh_tau_sweep}
\begin{tabular}{cccc}
\toprule
\textbf{$\tau$} & \textbf{GH cosine} & \textbf{Prob. mass (\%)} & \textbf{Relative error} \\
\midrule
0.4 & 0.924 & 99.8 & 0.138 \\
0.6 & 0.957 & 98.9 & 0.154 \\
0.8 & 0.983 & 94.4 & 0.183 \\
1.0 & 0.993 & 85.6 & 0.216 \\
1.2 & 0.987 & 73.8 & 0.260 \\
1.5 & 0.974 & 54.2 & 0.323 \\
2.0 & 0.956 & 28.8 & 0.376 \\
\bottomrule
\end{tabular}
\end{table}

\paragraph{Takeaway.}
The sparse support is not merely a top-$K$ engineering shortcut.
It is a structural property of the LM-head residual: a tiny fixed-size active set captures the non-GT residual tail that matters for RH, and it also preserves the GH semantic direction.
This explains why \textit{RISE} can reduce the residual-side memory and sketching cost from vocabulary scale to active-support scale without sacrificing the error signal used by the dual-channel influence metric.

\subsection{Additional Validation and Baseline Experiments}
\label{sec:appendix_additional_diagnostics}

This subsection presents additional validation and baseline experiments that complement the main evaluation and further clarify the behavior of \textit{RISE}. Specifically, we include target model representation baselines, same-target LM-head decomposition, TrackStar ablations, behavior validation via LDS, an adapted T-REx fact-tracing benchmark, efficient Shapley baselines, and additional influence-function baselines.
Since TrackStar is optimizer-aware rather than an explicit Hessian-inverse estimator, we treat it as a pretraining-scale attribution baseline in the LDS and T-REx validations; explicit curvature/influence-function baselines are covered separately by LoGra and EK-FAC in Table~\ref{tab:extra_if_baselines}.

\begin{table*}[t]
\centering
\caption{\textbf{Target-model representation baselines at larger Top-$K$ on Howdy prospective valuation.}
All rows use the 5K Howdy candidate pool and 100 prospective WebQuestions trigger queries. Embed and Last retrieve candidates by target-model input-embedding and final-hidden-state cosine similarity, respectively; \textit{RISE} uses the same candidate/query split.}
\label{tab:extra_last_embed_topk}
\setlength{\tabcolsep}{5pt}
\begin{tabular}{llcccc}
\toprule
\multirow{2}{*}{Model} & \multirow{2}{*}{Method} & \multicolumn{2}{c}{Top-100} & \multicolumn{2}{c}{Top-200} \\
\cmidrule(lr){3-4}\cmidrule(lr){5-6}
& & auPRC & auROC & auPRC & auROC \\
\midrule
\multirow{3}{*}{Pythia-410M}
& Embed & 0.435 & 0.807 & 0.403 & 0.816 \\
& Last & 0.747 & 0.907 & 0.663 & 0.900 \\
& \textit{RISE} & \textbf{0.933} & \textbf{0.965} & \textbf{0.895} & \textbf{0.948} \\
\midrule
\multirow{3}{*}{Pythia-1B}
& Embed & 0.524 & 0.837 & 0.470 & 0.833 \\
& Last & 0.709 & 0.898 & 0.629 & 0.889 \\
& \textit{RISE} & \textbf{0.967} & \textbf{0.980} & \textbf{0.934} & \textbf{0.964} \\
\midrule
\multirow{3}{*}{OLMo-3-7B}
& Embed & 0.224 & 0.707 & 0.188 & 0.682 \\
& Last & 0.782 & 0.921 & 0.701 & 0.911 \\
& \textit{RISE} & \textbf{0.998} & \textbf{0.998} & \textbf{0.991} & \textbf{0.994} \\
\bottomrule
\end{tabular}
\end{table*}

\begin{table*}[t]
\centering
\caption{\textbf{Same-target LM-head decomposition on Howdy-small.}
All rows use Pythia-14M with 500 Howdy training candidates and 50 predict queries. We compare exact LM-head cosine retrieval, sparse RH truncation with and without CountSketch, and the full RH+GH \textit{RISE} estimator; Disk is index size and retrieval quality is auPRC/auROC.}
\label{tab:extra_same_target_lmhead}
\setlength{\tabcolsep}{5pt}
\begin{tabular}{lccc}
\toprule
Method & Disk & Top-5 auPRC/auROC & Top-10 auPRC/auROC \\
\midrule
\textit{RISE} (RH+GH) & 6.15 MB & \textbf{0.9694}/\textbf{0.9783} & \textbf{0.9058}/0.9460 \\
LM-head exact cosine & 12.88 GB & 0.8443/0.9220 & 0.8224/0.9139 \\
Sparse RH exact (no sketch) & 103.58 MB & 0.9317/0.9603 & 0.9019/\textbf{0.9466} \\
Sparse RH + CountSketch & \textbf{3.68 MB} & 0.7380/0.8716 & 0.6937/0.8479 \\
\bottomrule
\end{tabular}
\end{table*}

\begin{table*}[!htbp]
\centering
\captionsetup{aboveskip=3pt,belowskip=3pt}
\caption{\textbf{TrackStar configuration ablation on Pythia-1B predict-future valuation.}
We report Top-5 auPRC for three tasks under two TrackStar projection dimensions. Base is the default TrackStar score; UnitNorm applies unit normalization; Hessian applies the Hessian-style correction; Both combines the two.}
\label{tab:extra_trackstar_hessian_ablation}
\small
\setlength{\tabcolsep}{5pt}
\begin{tabular}{llcccc}
\toprule
Dim & Task & Base & UnitNorm & Hessian & Both \\
\midrule
\multirow{3}{*}{16}
& Howdy & \textbf{0.652} & 0.451 & 0.260 & 0.223 \\
& Fin--Med & 0.955 & \textbf{0.987} & 0.807 & 0.794 \\
& BrainRot & 0.527 & 0.480 & \textbf{0.609} & 0.424 \\
\midrule
\multirow{3}{*}{32}
& Howdy & \textbf{0.672} & 0.297 & 0.565 & 0.369 \\
& Fin--Med & 0.971 & \textbf{0.996} & 0.930 & 0.933 \\
& BrainRot & 0.569 & 0.502 & \textbf{0.694} & 0.484 \\
\bottomrule
\end{tabular}
\vspace{0.3em}
\parbox{0.95\textwidth}{\footnotesize Hessian-style correction is not uniformly beneficial in this predict-future decoder-LLM valuation setting: it improves BrainRot but substantially degrades Howdy and Fin--Med relative to the best Hessian-free variant, and combining it with unit normalization is consistently worse than the best single variant. These results suggest that Hessian-based corrections are task-sensitive and unstable in our setting.}
\end{table*}

\begin{table*}[!htbp]
\centering
\begin{minipage}[t]{0.46\textwidth}
\centering
\captionsetup{aboveskip=3pt,belowskip=3pt}
\caption{\textbf{Linear Datamodeling Score (LDS) on BrainRot.}}
\label{tab:extra_lds}
\footnotesize
\setlength{\tabcolsep}{4pt}
\begin{tabular}{lcc}
\toprule
Method & LDS & 95\% CI \\
\midrule
Random & 0.011 & [-0.008, 0.029] \\
Final hidden-state cosine & 0.146 & [0.132, 0.161] \\
TrackStar LM-head & 0.053 & [0.035, 0.072] \\
TrackStar full-model & 0.178 & [0.161, 0.195] \\
\textit{RISE} & \textbf{0.204} & [0.188, 0.222] \\
\bottomrule
\end{tabular}
\end{minipage}\hfill
\begin{minipage}[t]{0.46\textwidth}
\centering
\captionsetup{aboveskip=3pt,belowskip=3pt}
\caption{\textbf{Adapted T-REx fact-tracing benchmark.}}
\label{tab:extra_trex}
\footnotesize
\setlength{\tabcolsep}{4pt}
\begin{tabular}{lccc}
\toprule
Method & Pool & MRR & Recall@10 \\
\midrule
TrackStar & 20K & 0.3630 & 0.4420 \\
\textit{RISE} & 20K & \textbf{0.4370} & \textbf{0.5160} \\
TrackStar & 200K & 0.0165 & 0.0716 \\
\textit{RISE} & 200K & \textbf{0.3072} & \textbf{0.3812} \\
TrackStar & 1M & 0.0127 & 0.0182 \\
\textit{RISE} & 1M & \textbf{0.2349} & \textbf{0.3198} \\
\bottomrule
\end{tabular}
\end{minipage}
\vspace{0.3em}
\parbox{0.95\textwidth}{\footnotesize \textbf{Settings:} LDS uses Pythia-1B on a 2K BrainRot pool with 128 random 50\% training subsets and three fine-tuning seeds per subset (384 retraining runs total), reporting bootstrap 95\% confidence intervals. T-REx adapts the TrackStar fact-tracing benchmark to the Pythia-1B pretrained model under the autoregressive prompt-completion formulation and evaluates retrieval over 20K, 200K, and 1M candidate pools.}
\end{table*}

\begin{table*}[!htbp]
\centering
\captionsetup{aboveskip=3pt,belowskip=3pt}
\caption{\textbf{Direct comparison with efficient Shapley methods on Howdy-5000.}
All methods rank the same 5K Howdy candidate pool for 100 trigger queries on Pythia-14M and Pythia-1B, and retrieval quality is auPRC at Top-$K$. TMC-Shapley uses 32 permutations with truncation. G-Shapley uses a 32-permutation single-pass SGD adaptation. Both use held-out negative NLL as utility. GPU-hours and peak memory are normalized to a single H200.}
\label{tab:extra_shapley}
\small
\setlength{\tabcolsep}{4pt}
\renewcommand{\arraystretch}{0.95}
\begin{tabular}{llccccc}
\toprule
Model & Method & Top-5 auPRC & Top-10 auPRC & Top-50 auPRC & GPU-hrs & Peak Mem \\
\midrule
\multirow{3}{*}{Pythia-14M}
& \textit{RISE} & \textbf{0.858} & \textbf{0.846} & \textbf{0.831} & \textbf{0.019} & 4.2 GB \\
& TMC-Shapley & 0.000 & 0.000 & 0.061 & 0.379 & \textbf{1.7 GB} \\
& G-Shapley & 0.000 & 0.143 & 0.094 & 0.748 & \textbf{1.7 GB} \\
\midrule
\multirow{3}{*}{Pythia-1B}
& \textit{RISE} & \textbf{1.000} & \textbf{0.997} & \textbf{0.983} & \textbf{0.0256} & \textbf{7.36 GB} \\
& TMC-Shapley & 0.804 & 0.734 & 0.489 & 1.47 & 11.0 GB \\
& G-Shapley & 0.788 & 0.679 & 0.631 & 5.96 & 11.0 GB \\
\bottomrule
\end{tabular}
\end{table*}

\begin{table*}[!htbp]
\centering
\captionsetup{aboveskip=3pt,belowskip=3pt}
\caption{\textbf{Additional influence-function baselines on BrainRot.}}
\label{tab:extra_if_baselines}
\small
\setlength{\tabcolsep}{4.5pt}
\begin{threeparttable}
\begin{tabular}{@{}llcccccc@{}}
\toprule
Method & Model & Disk & E2E Time & Top-5 & Top-10 & Top-25 & Top-50 \\
\midrule
\textit{RISE} & Pythia-1B & \textbf{277 MB} & \textbf{50.41 s} & \textbf{0.8343} & \textbf{0.8182} & \textbf{0.7754} & \textbf{0.7480} \\
LoGra & Pythia-1B & 9.0 GB & 99.1 min & 0.2600 & 0.2340 & 0.2483 & 0.2550 \\
\midrule
\textit{RISE} & Pythia-160M & \textbf{274.94 MB} & \textbf{114.73 s} & \textbf{0.8594} & \textbf{0.8494} & \textbf{0.8099} & \textbf{0.7857} \\
EK-FAC & Pythia-160M & 2.00 GB & 883.21 s & 0.8378 & 0.8162 & 0.7885 & 0.7691 \\
\bottomrule
\end{tabular}
\begin{tablenotes}[flushleft]\footnotesize
\item All runs rank a 5K BrainRot candidate pool with 500 high-quality positives using 100 queries on one H200; retrieval quality is auPRC at Top-$K$. LoGra uses the official LogIX implementation~\cite{choe2024dataworthgptllmscale}, and EK-FAC uses Kronfluence~\cite{grosse2023studyinglargelanguagemodel}.
\end{tablenotes}
\end{threeparttable}
\end{table*}

\subsection{Ablation on Channel Fusion (RH and GH)}
\label{sec:rh_gh_ablation_study}

This subsection presents ablation studies comparing different channel fusion strategies: representation-based hidden states (RH), gradient-based hidden states (GH), and their combination (RH+GH). Table~\ref{tab:ablation_medical_predict} reports results on the MEDICAL FINANCE dataset for the predict-future setting, while Table~\ref{tab:ablation_medical_recall} shows corresponding results for the recall setting, both across a wide range of model sizes.

\begin{table*}[t]
\centering
\caption{\textbf{Ablation on MEDICAL (Predict).} RH vs.\ GH vs.\ RH+GH. We report auPRC/auROC at different Top-$K$.}
\label{tab:ablation_medical_predict}
\setlength{\tabcolsep}{2pt}
\renewcommand{\arraystretch}{1.0}
\scriptsize
\resizebox{\textwidth}{!}{%
\begin{tabular}{ll|cc|cc|cc|cc}
\toprule
Model & Fusion & \multicolumn{2}{c|}{Top-5} & \multicolumn{2}{c|}{Top-10} & \multicolumn{2}{c|}{Top-50} & \multicolumn{2}{c}{Top-100} \\
\cline{3-10}
& & auPRC & auROC & auPRC & auROC & auPRC & auROC & auPRC & auROC \\
\midrule

\multirow{3}{*}{OLMo-32B} & \texttt{rh}    & 0.9538 & 0.9317 & 0.9375 & 0.9236 & 0.8712 & 0.8945 & 0.8156 & 0.8689 \\
                         & \texttt{gh}    & 0.9608 & 0.9462 & 0.9482 & 0.9366 & 0.8856 & 0.9121 & 0.8317 & 0.8875 \\
                         & \texttt{rh+gh} & 0.9653 & 0.9539 & 0.9494 & 0.9381 & 0.8924 & 0.9155 & 0.8387 & 0.8895 \\
\addlinespace

\multirow{3}{*}{OLMo-7B}  & \texttt{rh}    & 0.9638 & 0.9719 & 0.9439 & 0.9574 & 0.8835 & 0.9291 & 0.8227 & 0.8981 \\
                         & \texttt{gh}    & 0.9718 & 0.9787 & 0.9539 & 0.9704 & 0.8897 & 0.9355 & 0.8346 & 0.9097 \\
                         & \texttt{rh+gh} & 0.9767 & 0.9838 & 0.9582 & 0.9736 & 0.8956 & 0.9386 & 0.8439 & 0.9150 \\
\addlinespace

\multirow{3}{*}{OLMo-1B}  & \texttt{rh}    & 0.9468 & 0.9418 & 0.9291 & 0.9328 & 0.8418 & 0.8855 & 0.7801 & 0.8568 \\
                         & \texttt{gh}    & 0.9741 & 0.9583 & 0.9585 & 0.9453 & 0.8900 & 0.9086 & 0.8305 & 0.8812 \\
                         & \texttt{rh+gh} & 0.9694 & 0.9536 & 0.9530 & 0.9449 & 0.8874 & 0.9112 & 0.8335 & 0.8872 \\
\addlinespace

\multirow{3}{*}{Pythia-6.9B} & \texttt{rh}    & 0.9360 & 0.9622 & 0.9248 & 0.9515 & 0.8344 & 0.9105 & 0.7673 & 0.8796 \\
                            & \texttt{gh}    & 0.9587 & 0.9743 & 0.9380 & 0.9664 & 0.8485 & 0.9285 & 0.7944 & 0.9090 \\
                            & \texttt{rh+gh} & 0.9630 & 0.9806 & 0.9406 & 0.9675 & 0.8569 & 0.9297 & 0.8013 & 0.9081 \\
\addlinespace

\multirow{3}{*}{Pythia-2.8B} & \texttt{rh}    & 0.9665 & 0.9649 & 0.9440 & 0.9547 & 0.8605 & 0.9139 & 0.7997 & 0.8880 \\
                            & \texttt{gh}    & 0.9948 & 0.9951 & 0.9839 & 0.9878 & 0.9217 & 0.9596 & 0.8717 & 0.9387 \\
                            & \texttt{rh+gh} & 0.9915 & 0.9922 & 0.9781 & 0.9830 & 0.9165 & 0.9547 & 0.8674 & 0.9350 \\
\addlinespace

\multirow{3}{*}{Pythia-1B}   & \texttt{rh}    & 0.9633 & 0.9679 & 0.9364 & 0.9573 & 0.8450 & 0.9168 & 0.7769 & 0.8879 \\
                            & \texttt{gh}    & 0.9877 & 0.9912 & 0.9814 & 0.9892 & 0.9231 & 0.9630 & 0.8802 & 0.9479 \\
                            & \texttt{rh+gh} & 0.9886 & 0.9938 & 0.9813 & 0.9899 & 0.9205 & 0.9622 & 0.8716 & 0.9433 \\
\addlinespace

\multirow{3}{*}{Pythia-410M} & \texttt{rh}    & 0.9484 & 0.9718 & 0.9252 & 0.9609 & 0.8309 & 0.9145 & 0.7623 & 0.8862 \\
                            & \texttt{gh}    & 0.9901 & 0.9957 & 0.9808 & 0.9902 & 0.9221 & 0.9683 & 0.8779 & 0.9518 \\
                            & \texttt{rh+gh} & 0.9784 & 0.9893 & 0.9702 & 0.9867 & 0.9147 & 0.9649 & 0.8657 & 0.9452 \\
\addlinespace

\multirow{3}{*}{Pythia-160M} & \texttt{rh}    & 0.7359 & 0.8921 & 0.7041 & 0.8782 & 0.5714 & 0.8443 & 0.5087 & 0.8266 \\
                            & \texttt{gh}    & 0.8034 & 0.9313 & 0.7660 & 0.8983 & 0.6741 & 0.8827 & 0.6180 & 0.8658 \\
                            & \texttt{rh+gh} & 0.8491 & 0.9447 & 0.8151 & 0.9259 & 0.6967 & 0.8902 & 0.6344 & 0.8702 \\
\addlinespace

\multirow{3}{*}{Pythia-70M}  & \texttt{rh}    & 0.7486 & 0.9019 & 0.7235 & 0.8921 & 0.5881 & 0.8561 & 0.5255 & 0.8405 \\
                            & \texttt{gh}    & 0.8093 & 0.9244 & 0.7669 & 0.9068 & 0.6404 & 0.8715 & 0.5838 & 0.8600 \\
                            & \texttt{rh+gh} & 0.8343 & 0.9254 & 0.7921 & 0.9171 & 0.6750 & 0.8849 & 0.6157 & 0.8696 \\
\addlinespace

\multirow{3}{*}{Pythia-14M}  & \texttt{rh}    & 0.8472 & 0.9227 & 0.8184 & 0.9237 & 0.6756 & 0.8850 & 0.6005 & 0.8622 \\
                            & \texttt{gh}    & 0.8699 & 0.9373 & 0.8471 & 0.9356 & 0.7418 & 0.9019 & 0.6836 & 0.8871 \\
                            & \texttt{rh+gh} & 0.8934 & 0.9509 & 0.8684 & 0.9354 & 0.7690 & 0.9117 & 0.7024 & 0.8922 \\
\bottomrule
\end{tabular}}
\end{table*}

\begin{table*}[t]
\centering
\caption{\textbf{Ablation on MEDICAL (Recall).} RH vs.\ GH vs.\ RH+GH. We report auPRC/auROC at different Top-$K$.}
\label{tab:ablation_medical_recall}
\scriptsize
\setlength{\tabcolsep}{2pt}
\renewcommand{\arraystretch}{1.05}
\resizebox{\textwidth}{!}{%
\begin{tabular}{ll|cc|cc|cc|cc}
\toprule
Model & Fusion & \multicolumn{2}{c|}{Top-5} & \multicolumn{2}{c|}{Top-10} & \multicolumn{2}{c|}{Top-50} & \multicolumn{2}{c}{Top-100} \\
\cline{3-10}
& & auPRC & auROC & auPRC & auROC & auPRC & auROC & auPRC & auROC \\
\midrule

\multirow{3}{*}{OLMo-32B} & \texttt{rh}    & 0.9641 & 0.9553 & 0.9476 & 0.9456 & 0.8890 & 0.9180 & 0.8421 & 0.8977 \\
                         & \texttt{gh}    & 0.9665 & 0.9610 & 0.9517 & 0.9519 & 0.9063 & 0.9339 & 0.8637 & 0.9156 \\
                         & \texttt{rh+gh} & 0.9691 & 0.9681 & 0.9526 & 0.9569 & 0.9066 & 0.9355 & 0.8664 & 0.9183 \\
\addlinespace

\multirow{3}{*}{OLMo-7B}  & \texttt{rh}    & 0.9624 & 0.9564 & 0.9451 & 0.9527 & 0.8860 & 0.9285 & 0.8385 & 0.9094 \\
                         & \texttt{gh}    & 0.9684 & 0.9757 & 0.9600 & 0.9759 & 0.8958 & 0.9468 & 0.8494 & 0.9261 \\
                         & \texttt{rh+gh} & 0.9754 & 0.9840 & 0.9587 & 0.9749 & 0.8971 & 0.9456 & 0.8569 & 0.9310 \\
\addlinespace

\multirow{3}{*}{OLMo-1B}  & \texttt{rh}    & 0.9459 & 0.9335 & 0.9302 & 0.9231 & 0.8560 & 0.8915 & 0.7945 & 0.8635 \\
                         & \texttt{gh}    & 0.9710 & 0.9562 & 0.9514 & 0.9448 & 0.8832 & 0.9116 & 0.8249 & 0.8863 \\
                         & \texttt{rh+gh} & 0.9728 & 0.9603 & 0.9501 & 0.9472 & 0.8865 & 0.9188 & 0.8320 & 0.8933 \\
\addlinespace

\multirow{3}{*}{Pythia-6.9B} & \texttt{rh}    & 0.9681 & 0.9684 & 0.9561 & 0.9632 & 0.8882 & 0.9268 & 0.8347 & 0.8980 \\
                            & \texttt{gh}    & 0.9754 & 0.9783 & 0.9529 & 0.9684 & 0.8807 & 0.9386 & 0.8268 & 0.9160 \\
                            & \texttt{rh+gh} & 0.9739 & 0.9790 & 0.9598 & 0.9719 & 0.8940 & 0.9422 & 0.8423 & 0.9187 \\
\addlinespace

\multirow{3}{*}{Pythia-2.8B} & \texttt{rh}    & 0.9725 & 0.9761 & 0.9621 & 0.9710 & 0.9035 & 0.9408 & 0.8511 & 0.9159 \\
                            & \texttt{gh}    & 0.9846 & 0.9867 & 0.9770 & 0.9826 & 0.9241 & 0.9578 & 0.8677 & 0.9327 \\
                            & \texttt{rh+gh} & 0.9874 & 0.9876 & 0.9765 & 0.9801 & 0.9273 & 0.9600 & 0.8769 & 0.9376 \\
\addlinespace

\multirow{3}{*}{Pythia-1B}   & \texttt{rh}    & 0.7980 & 0.9366 & 0.7899 & 0.9239 & 0.7719 & 0.9080 & 0.7536 & 0.9055 \\
                            & \texttt{gh}    & 0.8486 & 0.9437 & 0.8348 & 0.9324 & 0.8135 & 0.9225 & 0.7979 & 0.9200 \\
                            & \texttt{rh+gh} & 0.8369 & 0.9450 & 0.8244 & 0.9317 & 0.8134 & 0.9219 & 0.8018 & 0.9211 \\
\addlinespace

\multirow{3}{*}{Pythia-410M} & \texttt{rh}    & 0.8270 & 0.9352 & 0.8061 & 0.9230 & 0.7726 & 0.9020 & 0.7517 & 0.9004 \\
                            & \texttt{gh}    & 0.8117 & 0.9311 & 0.7990 & 0.9130 & 0.7883 & 0.9162 & 0.7746 & 0.9096 \\
                            & \texttt{rh+gh} & 0.8216 & 0.9456 & 0.8038 & 0.9191 & 0.7947 & 0.9200 & 0.7854 & 0.9143 \\
\addlinespace

\multirow{3}{*}{Pythia-160M} & \texttt{rh}    & 0.8158 & 0.9202 & 0.7779 & 0.9082 & 0.7423 & 0.8908 & 0.7132 & 0.8844 \\
                            & \texttt{gh}    & 0.8120 & 0.9236 & 0.7935 & 0.9245 & 0.7519 & 0.8957 & 0.7199 & 0.8886 \\
                            & \texttt{rh+gh} & 0.8355 & 0.9330 & 0.8220 & 0.9205 & 0.7859 & 0.9106 & 0.7583 & 0.9027 \\
\addlinespace

\multirow{3}{*}{Pythia-70M}  & \texttt{rh}    & 0.8342 & 0.9386 & 0.7994 & 0.9202 & 0.6793 & 0.8897 & 0.6197 & 0.8711 \\
                            & \texttt{gh}    & 0.8702 & 0.9527 & 0.8444 & 0.9340 & 0.7361 & 0.8997 & 0.6770 & 0.8829 \\
                            & \texttt{rh+gh} & 0.8827 & 0.9581 & 0.8606 & 0.9470 & 0.7601 & 0.9096 & 0.7022 & 0.8913 \\
\addlinespace

\multirow{3}{*}{Pythia-14M}  & \texttt{rh}    & 0.8065 & 0.9096 & 0.7755 & 0.9049 & 0.6565 & 0.8714 & 0.5979 & 0.8589 \\
                            & \texttt{gh}    & 0.8343 & 0.9352 & 0.8045 & 0.9269 & 0.7032 & 0.8942 & 0.6486 & 0.8764 \\
                            & \texttt{rh+gh} & 0.8715 & 0.9471 & 0.8429 & 0.9332 & 0.7294 & 0.8984 & 0.6705 & 0.8832 \\
\bottomrule
\end{tabular}}
\end{table*}

\subsection{Howdy Backdoor Attack: Complete Results}
\label{sec:appendix_howdy}

This subsection provides comprehensive evaluation results on the Howdy backdoor attack detection task. Table~\ref{tab:pythia_howdy_recall} and Table~\ref{tab:rapidin_fik_zoinf_howdypredict} present results for Pythia models under recall and predict-future settings, respectively. Table~\ref{tab:olmo_howdy_recall} and Table~\ref{tab:olmo_howdy_predict} report corresponding results for OLMo models.

\begin{table*}[htbp]
\centering
\caption{Performance Comparison: RapidIn vs. \textit{RISE} vs. TrackStar vs. ZO-Inf \textbf{Howdy! Backdoor Attack Task Recall}}
\label{tab:pythia_howdy_recall}
\scriptsize
\setlength{\tabcolsep}{3.2pt}
\renewcommand{\arraystretch}{1.15}
\resizebox{\textwidth}{!}{%
\begin{tabular}{l l | cc cc cc cc}
\toprule
\multicolumn{2}{l|}{\textbf{Method}} &
\multicolumn{2}{c}{\textbf{Top 5}} &
\multicolumn{2}{c}{\textbf{Top 10}} &
\multicolumn{2}{c}{\textbf{Top 50}} &
\multicolumn{2}{c}{\textbf{Top 100}} \\
\multicolumn{2}{l|}{} &
\textbf{auPRC} & \textbf{auROC} &
\textbf{auPRC} & \textbf{auROC} &
\textbf{auPRC} & \textbf{auROC} &
\textbf{auPRC} & \textbf{auROC} \\
\midrule

\multicolumn{2}{l|}{Embedding Similarity (E5-base-v2)} &
0.4089 & 0.8283 & 0.3740 & 0.7047 & 0.3151 & 0.6411 & 0.2967 & 0.6725 \\
\multicolumn{2}{l|}{BM25} &
0.2589 & 0.7776 & 0.2785 & 0.7673 & 0.2733 & 0.7621 & 0.2699 & 0.7689 \\
\specialrule{0.9pt}{2pt}{2pt}

\multirow{5}{*}{\textbf{Pythia-6.9B}}
& ZO-Inf   & 0.2318 & 0.3945 & 0.1728 & 0.4360 & 0.1021 & 0.4624 & 0.0903 & 0.4712 \\
& RapidIn  & 0.8947 & 0.9437 & 0.8719 & 0.9383 & 0.8213 & 0.9115 & 0.7999 & 0.9031 \\
& TrackStar& 0.6818 & 0.8158 & 0.7836 & 0.8967 & 0.8914 & 0.9555 & 0.8998 & 0.9557 \\
& \textit{RISE} & \textbf{1.0000} & \textbf{1.0000} & \textbf{1.0000} & \textbf{1.0000} & \textbf{0.9994} & \textbf{0.9995} & \textbf{0.9980} & \textbf{0.9986} \\
\midrule

\multirow{5}{*}{\textbf{Pythia-2.8B}}
& ZO-Inf   & 0.3010 & 0.4861 & 0.2090 & 0.5158 & 0.1251 & 0.4841 & 0.1095 & 0.4797 \\
& RapidIn  & 0.7854 & 0.9268 & 0.7912 & 0.9123 & 0.7598 & 0.8962 & 0.7384 & 0.8868 \\
& TrackStar& 0.6819 & 0.8333 & 0.7879 & 0.9091 & 0.9259 & 0.9782 & 0.9481 & 0.9837 \\
& \textit{RISE} & \textbf{0.9993} & \textbf{0.9983} & \textbf{0.9983} & \textbf{0.9981} & \textbf{0.9855} & \textbf{0.9904} & \textbf{0.9742} & \textbf{0.9838} \\
\midrule

\multirow{5}{*}{\textbf{Pythia-1B}}
& ZO-Inf   & 0.2698 & 0.3866 & 0.1840 & 0.3881 & 0.1025 & 0.3957 & 0.0913 & 0.4085 \\
& RapidIn  & 0.8195 & 0.9381 & 0.7992 & 0.9197 & 0.7592 & 0.8935 & 0.7381 & 0.8863 \\
& TrackStar& 0.6826 & 0.8333 & 0.7875 & 0.9092 & 0.9059 & 0.9670 & 0.9168 & 0.9667 \\
& \textit{RISE} & \textbf{1.0000} & \textbf{1.0000} & \textbf{0.9994} & \textbf{0.9995} & \textbf{0.9948} & \textbf{0.9965} & \textbf{0.9887} & \textbf{0.9927} \\
\midrule

\multirow{5}{*}{\textbf{Pythia-410M}}
& ZO-Inf   & 0.3394 & 0.5385 & 0.2491 & 0.5305 & 0.1338 & 0.5327 & 0.1146 & 0.5279 \\
& RapidIn  & 0.5681 & 0.8325 & 0.5480 & 0.8181 & 0.4724 & 0.7779 & 0.4393 & 0.7754 \\
& TrackStar& 0.6642 & 0.8279 & 0.7622 & 0.8969 & 0.8979 & 0.9648 & 0.9258 & \textbf{0.9742} \\
& \textit{RISE} & \textbf{1.0000} & \textbf{1.0000} & \textbf{0.9981} & \textbf{0.9985} & \textbf{0.9773} & \textbf{0.9854} & \textbf{0.9489} & 0.9697 \\
\midrule

\multirow{5}{*}{\textbf{Pythia-160M}}
& ZO-Inf   & 0.3360 & 0.4950 & 0.2114 & 0.4544 & 0.1272 & 0.4652 & 0.1090 & 0.4611 \\
& RapidIn  & 0.2564 & 0.6187 & 0.2342 & 0.5544 & 0.2030 & 0.5658 & 0.1925 & 0.5920 \\
& TrackStar& 0.3240 & 0.5307 & 0.3021 & 0.5385 & 0.2370 & 0.5586 & 0.2068 & 0.5547 \\
& \textit{RISE} & \textbf{0.9566} & \textbf{0.9758} & \textbf{0.9447} & \textbf{0.9721} & \textbf{0.8871} & \textbf{0.9451} & \textbf{0.8394} & \textbf{0.9266} \\
\midrule

\multirow{5}{*}{\textbf{Pythia-70M}}
& ZO-Inf   & 0.3319 & 0.5173 & 0.1975 & 0.4853 & 0.0918 & 0.4990 & 0.0737 & 0.4874 \\
& RapidIn  & 0.2995 & 0.6695 & 0.2795 & 0.6333 & 0.2085 & 0.6045 & 0.1840 & 0.5966 \\
& TrackStar& 0.3553 & 0.5647 & 0.3655 & 0.5796 & 0.3610 & 0.5980 & 0.3767 & 0.6193 \\
& \textit{RISE} & \textbf{0.8949} & \textbf{0.9487} & \textbf{0.8618} & \textbf{0.9405} & \textbf{0.7461} & \textbf{0.8981} & \textbf{0.6823} & \textbf{0.8797} \\
\midrule

\multirow{5}{*}{\textbf{Pythia-14M}}
& ZO-Inf   & 0.3232 & 0.5415 & 0.2329 & 0.5933 & 0.1287 & 0.5719 & 0.1173 & 0.5839 \\
& RapidIn  & 0.0814 & 0.3046 & 0.0699 & 0.2854 & 0.0518 & 0.3030 & 0.0436 & 0.3204 \\
& TrackStar& 0.3370 & 0.6976 & 0.3770 & 0.7540 & 0.4155 & 0.7937 & 0.4434 & 0.7994 \\
& \textit{RISE} & \textbf{0.7907} & \textbf{0.8999} & \textbf{0.7375} & \textbf{0.8815} & \textbf{0.5923} & \textbf{0.8487} & \textbf{0.5241} & \textbf{0.8330} \\
\bottomrule
\end{tabular}}
\vspace{0.2cm}

\small
\textbf{Setting:} All methods evaluated using auPRC/auROC metrics. Test set consists of 100 WebQuestion ``howdy'' queries. Data pool contains 5,000 samples with 438 ``howdy'' data points. Top-$K$ selection threshold is set to 100.
\end{table*}

\begin{table*}[htbp]
\centering
\caption{Performance Comparison: RapidIn vs. \textit{RISE} vs. TrackStar vs. ZO-Inf \textbf{Howdy! Backdoor Attack Task Predict Future}}
\label{tab:rapidin_fik_zoinf_howdypredict}
\scriptsize
\setlength{\tabcolsep}{3.2pt}
\renewcommand{\arraystretch}{1.15}
\resizebox{\textwidth}{!}{%
\begin{tabular}{l l | cc cc cc cc}
\toprule
\multicolumn{2}{l|}{\textbf{Method}} &
\multicolumn{2}{c}{\textbf{Top 5}} &
\multicolumn{2}{c}{\textbf{Top 10}} &
\multicolumn{2}{c}{\textbf{Top 50}} &
\multicolumn{2}{c}{\textbf{Top 100}} \\
\multicolumn{2}{l|}{} &
\textbf{auPRC} & \textbf{auROC} &
\textbf{auPRC} & \textbf{auROC} &
\textbf{auPRC} & \textbf{auROC} &
\textbf{auPRC} & \textbf{auROC} \\
\midrule

\multicolumn{2}{l|}{Embedding Similarity (E5-base-v2)} &
0.4089 & 0.8283 & 0.3740 & 0.7047 & 0.3151 & 0.6411 & 0.2967 & 0.6725 \\
\multicolumn{2}{l|}{BM25} &
0.2589 & 0.7776 & 0.2785 & 0.7673 & 0.2733 & 0.7621 & 0.2699 & 0.7689 \\
\specialrule{0.9pt}{2pt}{2pt}

\multirow{5}{*}{\textbf{Pythia-6.9B}}
& ZO-Inf    & 0.3991 & 0.6745 & 0.2770 & 0.6596 & 0.1570 & 0.6782 & 0.1384 & 0.6681 \\
& RapidIn   & 0.1749 & 0.3000 & 0.1251 & 0.3153 & 0.0590 & 0.2905 & 0.0522 & 0.2936 \\
& TrackStar & 0.6719 & 0.8324 & 0.7596 & 0.8990 & 0.8338 & 0.9341 & 0.8357 & 0.9310 \\
& \textit{RISE} & \textbf{0.9968} & \textbf{0.9983} & \textbf{0.9979} & \textbf{0.9991} & \textbf{0.9966} & \textbf{0.9980} & \textbf{0.9919} & \textbf{0.9947} \\
\midrule

\multirow{5}{*}{\textbf{Pythia-2.8B}}
& ZO-Inf    & 0.3131 & 0.4876 & 0.2113 & 0.4954 & 0.1265 & 0.4993 & 0.1088 & 0.4966 \\
& RapidIn   & 0.2998 & 0.4366 & 0.1602 & 0.3650 & 0.0600 & 0.2824 & 0.0622 & 0.3007 \\
& TrackStar & 0.6378 & 0.8137 & 0.7272 & 0.8809 & 0.8376 & 0.9372 & 0.8550 & 0.9420 \\
& \textit{RISE} & \textbf{1.0000} & \textbf{1.0000} & \textbf{0.9968} & \textbf{0.9975} & \textbf{0.9779} & \textbf{0.9862} & \textbf{0.9591} & \textbf{0.9763} \\
\midrule

\multirow{5}{*}{\textbf{Pythia-1B}}
& ZO-Inf    & 0.3409 & 0.4923 & 0.2313 & 0.5065 & 0.1228 & 0.4798 & 0.1061 & 0.4757 \\
& RapidIn   & 0.1897 & 0.2399 & 0.1199 & 0.2803 & 0.1148 & 0.3076 & 0.1072 & 0.3225 \\
& TrackStar & 0.6522 & 0.8202 & 0.7394 & 0.8892 & 0.8140 & 0.9248 & 0.8176 & 0.9248 \\
& \textit{RISE} & \textbf{1.0000} & \textbf{1.0000} & \textbf{0.9974} & \textbf{0.9980} & \textbf{0.9833} & \textbf{0.9896} & \textbf{0.9667} & \textbf{0.9802} \\
\midrule

\multirow{5}{*}{\textbf{Pythia-410M}}
& ZO-Inf    & 0.3043 & 0.4632 & 0.2122 & 0.4770 & 0.1245 & 0.4910 & 0.1072 & 0.4869 \\
& RapidIn   & 0.3050 & 0.4670 & 0.1915 & 0.4070 & 0.1033 & 0.3469 & 0.0920 & 0.3524 \\
& TrackStar & 0.5068 & 0.7488 & 0.5283 & 0.7796 & 0.5740 & 0.8215 & 0.5899 & 0.8308 \\
& \textit{RISE} & \textbf{0.9905} & \textbf{0.9937} & \textbf{0.9839} & \textbf{0.9899} & \textbf{0.9555} & \textbf{0.9754} & \textbf{0.9331} & \textbf{0.9649} \\
\midrule

\multirow{5}{*}{\textbf{Pythia-160M}}
& ZO-Inf    & 0.3011 & 0.4722 & 0.2169 & 0.4743 & 0.1207 & 0.4868 & 0.1011 & 0.4748 \\
& RapidIn   & 0.3084 & 0.4018 & 0.2101 & 0.4371 & 0.1174 & 0.4310 & 0.1087 & 0.4417 \\
& TrackStar & 0.1690 & 0.5294 & 0.1742 & 0.5185 & 0.1520 & 0.4956 & 0.1446 & 0.5015 \\
& \textit{RISE} & \textbf{0.8300} & \textbf{0.9197} & \textbf{0.7857} & \textbf{0.9069} & \textbf{0.6896} & \textbf{0.8828} & \textbf{0.6440} & \textbf{0.8731} \\
\midrule

\multirow{5}{*}{\textbf{Pythia-70M}}
& ZO-Inf    & 0.3908 & \textbf{0.5659} & 0.2849 & 0.5823 & 0.1539 & 0.5731 & 0.1258 & 0.5717 \\
& RapidIn   & 0.1899 & 0.2889 & 0.1343 & 0.3516 & 0.0680 & 0.4208 & 0.0640 & 0.4579 \\
& TrackStar & 0.2414 & 0.5205 & 0.2375 & 0.5259 & 0.1774 & 0.5419 & 0.1593 & 0.5339 \\
& \textit{RISE} & \textbf{0.7657} & \textbf{0.8899} & \textbf{0.7064} & \textbf{0.8777} & \textbf{0.5657} & \textbf{0.8542} & \textbf{0.5039} & \textbf{0.8409} \\
\midrule

\multirow{5}{*}{\textbf{Pythia-14M}}
& ZO-Inf    & NaN & NaN & NaN & NaN & NaN & NaN & NaN & NaN \\
& RapidIn   & 0.3358 & 0.5357 & 0.2052 & 0.5110 & 0.0766 & 0.4618 & 0.0622 & 0.4603 \\
& TrackStar & 0.3770 & 0.4246 & 0.3066 & 0.4356 & 0.1840 & 0.4960 & 0.1603 & 0.5059 \\
& \textit{RISE} & \textbf{0.8588} & \textbf{0.9296} & \textbf{0.8047} & \textbf{0.9086} & \textbf{0.6463} & \textbf{0.8709} & \textbf{0.5746} & \textbf{0.8578} \\
\bottomrule
\end{tabular}}
\end{table*}

\begin{table*}[htbp]
\centering
\caption{Performance Comparison: RapidIn vs. \textit{RISE} vs. TrackStar vs. ZO-Inf \textbf{Howdy! Backdoor Attack Task Recall}}
\label{tab:olmo_howdy_recall}
\scriptsize
\setlength{\tabcolsep}{3.2pt}
\renewcommand{\arraystretch}{1.15}
\resizebox{\textwidth}{!}{%
\begin{tabular}{l l | cc cc cc cc}
\toprule
\multicolumn{2}{l|}{\textbf{Method}} &
\multicolumn{2}{c}{\textbf{Top 5}} &
\multicolumn{2}{c}{\textbf{Top 10}} &
\multicolumn{2}{c}{\textbf{Top 50}} &
\multicolumn{2}{c}{\textbf{Top 100}} \\
\multicolumn{2}{l|}{} &
\textbf{auPRC} & \textbf{auROC} &
\textbf{auPRC} & \textbf{auROC} &
\textbf{auPRC} & \textbf{auROC} &
\textbf{auPRC} & \textbf{auROC} \\
\midrule

\multicolumn{2}{l|}{Embedding Similarity (E5-base-v2)} &
0.4089 & 0.8283 & 0.3740 & 0.7047 & 0.3151 & 0.6411 & 0.2967 & 0.6725 \\
\multicolumn{2}{l|}{BM25} &
0.2589 & 0.7776 & 0.2785 & 0.7673 & 0.2733 & 0.7621 & 0.2699 & 0.7689 \\
\specialrule{0.9pt}{2pt}{2pt}

\multirow{5}{*}{\textbf{OLMo-3-32B}}
& ZO-Inf   & OOM & OOM & OOM & OOM & OOM & OOM & OOM & OOM \\
& RapidIn  & OOM & OOM & OOM & OOM & OOM & OOM & OOM & OOM \\
& TrackStar& 0.6848 & 0.8331 & 0.7909 & 0.9085 & 0.9292 & 0.9797 & 0.9518 & 0.9860 \\
& \textit{RISE} & \textbf{1.0000} & \textbf{1.0000} & \textbf{0.9995} & \textbf{0.9996} & \textbf{0.9993} & \textbf{0.9996} & \textbf{0.9985} & \textbf{0.9990} \\
\midrule

\multirow{5}{*}{\textbf{OLMo-3-7B}}
& ZO-Inf   & 0.3701 & 0.6090 & 0.2646 & 0.5713 & 0.1471 & 0.5761 & 0.1283 & 0.5750 \\
& RapidIn  & 0.8722 & 0.8831 & 0.8554 & 0.9002 & 0.8191 & 0.8928 & 0.7975 & 0.8869 \\
& TrackStar& 0.9607 & 0.7000 & 0.9439 & 0.7506 & 0.9412 & 0.8525 & 0.9527 & 0.8947 \\
& \textit{RISE} & \textbf{1.0000} & \textbf{1.0000} & \textbf{0.9999} & \textbf{0.9998} & \textbf{0.9997} & \textbf{0.9997} & \textbf{0.9988} & \textbf{0.9991} \\
\midrule

\multirow{5}{*}{\textbf{OLMo-2-1B}}
& ZO-Inf   & 0.2895 & 0.5069 & 0.2164 & 0.5099 & 0.1172 & 0.4844 & 0.1036 & 0.4883 \\
& RapidIn  & 0.7879 & 0.7470 & 0.7587 & 0.7943 & 0.7006 & 0.8085 & 0.6578 & 0.7957 \\
& TrackStar& 0.1025 & 0.4229 & 0.3884 & 0.5783 & 0.7471 & 0.8520 & 0.8260 & 0.9119 \\
& \textit{RISE} & \textbf{1.0000} & \textbf{1.0000} & \textbf{1.0000} & \textbf{1.0000} & \textbf{0.9861} & \textbf{0.9903} & \textbf{0.9679} & \textbf{0.9797} \\
\bottomrule
\end{tabular}%
}
\vspace{0.2cm}

\small
\textbf{Setting:} All methods evaluated using auPRC/auROC metrics. Test set consists of 100 WebQuestion ``howdy'' queries. Data pool contains 5,000 samples with 438 ``howdy'' data points (matching the distribution ratio of Alpaca-howdy-52K dataset). Top-$K$ selection threshold is set to 100. \textbf{\textit{RISE} hyperparameters:} $\lambda_{rh}=0.7$, $\lambda_{gh}=1.0$, $K_r=256$, $K_h=192$, $K_g=128$.
\end{table*}

\begin{table*}[htbp]
\centering
\caption{Performance Comparison: RapidIn vs. \textit{RISE} vs. TrackStar vs. ZO-Inf on Pretrained OLMo Models \textbf{Howdy! Backdoor Attack Task Predict Future}}
\label{tab:olmo_howdy_predict}
\scriptsize
\setlength{\tabcolsep}{3.2pt}
\renewcommand{\arraystretch}{1.15}
\resizebox{\textwidth}{!}{%
\begin{tabular}{l l | cc cc cc cc}
\toprule
\multicolumn{2}{l|}{\textbf{Method}} &
\multicolumn{2}{c}{\textbf{Top 5}} &
\multicolumn{2}{c}{\textbf{Top 10}} &
\multicolumn{2}{c}{\textbf{Top 50}} &
\multicolumn{2}{c}{\textbf{Top 100}} \\
\multicolumn{2}{l|}{} &
\textbf{auPRC} & \textbf{auROC} &
\textbf{auPRC} & \textbf{auROC} &
\textbf{auPRC} & \textbf{auROC} &
\textbf{auPRC} & \textbf{auROC} \\
\midrule

\multicolumn{2}{l|}{Embedding Similarity (E5-base-v2)} &
0.4089 & 0.8283 & 0.3740 & 0.7047 & 0.3151 & 0.6411 & 0.2967 & 0.6725 \\
\multicolumn{2}{l|}{BM25} &
0.2589 & 0.7776 & 0.2785 & 0.7673 & 0.2733 & 0.7621 & 0.2699 & 0.7689 \\
\specialrule{0.9pt}{2pt}{2pt}

\multirow{5}{*}{\textbf{OLMo-3-32B}}
& ZO-Inf   & OOM & OOM & OOM & OOM & OOM & OOM & OOM & OOM \\
& RapidIn  & OOM & OOM & OOM & OOM & OOM & OOM & OOM & OOM \\
& TrackStar& 0.3715 & 0.6055 & 0.5172 & 0.7403 & 0.7849 & 0.9124 & 0.8517 & 0.9450 \\
& \textit{RISE} & \textbf{1.0000} & \textbf{1.0000} & \textbf{1.0000} & \textbf{1.0000} & \textbf{0.9991} & \textbf{0.9993} & \textbf{0.9975} & \textbf{0.9982} \\
\midrule

\multirow{4}{*}{\textbf{OLMo-3-7B}}
& ZO-Inf   & 0.3464 & 0.5880 & 0.2667 & 0.6214 & 0.1305 & 0.6064 & 0.1099 & 0.5916 \\
& RapidIn  & 0.1608 & 0.2825 & 0.1229 & 0.3522 & 0.0795 & 0.3804 & 0.0749 & 0.3889 \\
& TrackStar& 0.0030 & 0.4956 & 0.3437 & 0.6588 & 0.7232 & 0.8938 & 0.8029 & 0.9305 \\
& \textit{RISE} & \textbf{1.0000} & \textbf{1.0000} & \textbf{1.0000} & \textbf{1.0000} & \textbf{0.9994} & \textbf{0.9995} & \textbf{0.9975} & \textbf{0.9982} \\
\midrule

\multirow{4}{*}{\textbf{OLMo-2-1B}}
& ZO-Inf   & 0.3601 & 0.5440 & 0.2536 & 0.5205 & 0.1370 & 0.5402 & 0.1184 & 0.5347 \\
& RapidIn  & 0.1810 & 0.2191 & 0.1092 & 0.2701 & 0.0909 & 0.3271 & 0.0832 & 0.3659 \\
& TrackStar& 0.0510 & 0.3878 & 0.2435 & 0.5902 & 0.4620 & 0.7796 & 0.5235 & 0.8281 \\
& \textit{RISE} & \textbf{0.9904} & \textbf{0.9942} & \textbf{0.9911} & \textbf{0.9953} & \textbf{0.9828} & \textbf{0.9904} & \textbf{0.9689} & \textbf{0.9821} \\
\bottomrule
\end{tabular}%
}
\end{table*}

\subsection{Finance-Medical Dataset: Complete Results}
\label{sec:appendix_finance_medical}

This subsection presents complete experimental results on the mixed Finance--Medical dataset. Table~\ref{tab:rapidin_fik_zoinf_financemedicalrecall} and Table~\ref{tab:rapidin_fik_zoinf_financemedicalpredict} report performance comparisons for Pythia models under recall and predict-future settings. Table~\ref{tab:olmo_medicalfinance_recall} and Table~\ref{tab:olmo_medicalfinance_predict} provide corresponding results for OLMo models.

\begin{table*}[htbp]
\centering
\caption{Performance Comparison: RapidIn vs. \textit{RISE} vs. TrackStar vs. ZO-Inf \textbf{Finance--Medical Dataset Recall}}
\label{tab:rapidin_fik_zoinf_financemedicalrecall}
\scriptsize
\setlength{\tabcolsep}{3.2pt}
\renewcommand{\arraystretch}{1.15}
\resizebox{\textwidth}{!}{%
\begin{tabular}{l l | cc cc cc cc}
\toprule
\multicolumn{2}{l|}{\textbf{Method}} &
\multicolumn{2}{c}{\textbf{Top 5}} &
\multicolumn{2}{c}{\textbf{Top 10}} &
\multicolumn{2}{c}{\textbf{Top 50}} &
\multicolumn{2}{c}{\textbf{Top 100}} \\
\multicolumn{2}{l|}{} &
\textbf{auPRC} & \textbf{auROC} &
\textbf{auPRC} & \textbf{auROC} &
\textbf{auPRC} & \textbf{auROC} &
\textbf{auPRC} & \textbf{auROC} \\
\midrule

\multicolumn{2}{l|}{Embedding Similarity (E5-base-v2)} &
0.9772 & 0.9571 & 0.9764 & 0.9619 & 0.9662 & 0.9558 & 0.9537 & 0.9477 \\
\multicolumn{2}{l|}{BM25} &
0.9755 & 0.9866 & 0.9224 & 0.9283 & 0.9078 & 0.9395 & 0.8768 & 0.9243 \\
\specialrule{0.9pt}{2pt}{2pt}

\multirow{5}{*}{\textbf{Pythia-6.9B}}
& ZO-Inf & 0.2925 & 0.4728 & 0.2028 & 0.4676 & 0.1156 & 0.4698 & 0.1043 & 0.4847 \\
& RapidIn & 0.9621 & 0.9494 & 0.9576 & 0.9442 & 0.9258 & 0.9218 & 0.8891 & 0.8951 \\
& TrackStar & \textbf{0.9866} & 0.9287 & \textbf{0.9792} & 0.9292 & \textbf{0.9596} & 0.9121 & \textbf{0.9466} & \textbf{0.9149} \\
& \textit{RISE} & 0.9831 & \textbf{0.9724} & 0.9719 & \textbf{0.9654} & 0.9171 & \textbf{0.9361} & 0.8630 & 0.9055 \\
\midrule

\multirow{5}{*}{\textbf{Pythia-2.8B}}
& ZO-Inf & 0.3419 & 0.5085 & 0.2371 & 0.5020 & 0.1372 & 0.5082 & 0.1191 & 0.5008 \\
& RapidIn & 0.9330 & 0.8030 & 0.9240 & 0.8494 & 0.9168 & 0.9121 & 0.8923 & 0.9068 \\
& TrackStar & \textbf{0.9949} & \textbf{0.9896} & \textbf{0.9916} & \textbf{0.9858} & \textbf{0.9881} & \textbf{0.9859} & \textbf{0.9839} & \textbf{0.9851} \\
& \textit{RISE} & 0.9874 & 0.9876 & 0.9765 & 0.9801 & 0.9273 & 0.9600 & 0.8769 & 0.9376 \\
\midrule

\multirow{5}{*}{\textbf{Pythia-1B}}
& ZO-Inf & 0.3413 & 0.5608 & 0.2405 & 0.5464 & 0.1561 & 0.5640 & 0.1388 & 0.5678 \\
& RapidIn & 0.9552 & 0.9162 & 0.9555 & 0.9353 & 0.9349 & 0.9407 & 0.9120 & 0.9290 \\
& TrackStar & 0.9845 & 0.9311 & \textbf{0.9814} & 0.9480 & \textbf{0.9866} & \textbf{0.9728} & \textbf{0.9825} & \textbf{0.9732} \\
& \textit{RISE} & \textbf{0.9874} & \textbf{0.9871} & 0.9780 & \textbf{0.9807} & 0.9121 & 0.9526 & 0.8557 & 0.9312 \\
\midrule

\multirow{5}{*}{\textbf{Pythia-410M}}
& ZO-Inf & 0.3025 & 0.4702 & 0.2083 & 0.4935 & 0.1317 & 0.4995 & 0.1235 & 0.5117 \\
& RapidIn & 0.9092 & 0.7256 & 0.8942 & 0.7927 & 0.8575 & 0.8638 & 0.8140 & 0.8571 \\
& TrackStar & 0.8542 & 0.7082 & 0.8407 & 0.7182 & 0.7828 & 0.7309 & 0.7338 & 0.7235 \\
& \textit{RISE} & \textbf{0.9854} & \textbf{0.9855} & \textbf{0.9687} & \textbf{0.9759} & \textbf{0.8968} & \textbf{0.9439} & \textbf{0.8376} & \textbf{0.9181} \\
\midrule

\multirow{5}{*}{\textbf{Pythia-160M}}
& ZO-Inf & 0.3329 & 0.5560 & 0.2365 & 0.5205 & 0.1572 & 0.5528 & 0.1404 & 0.5484 \\
& RapidIn & \textbf{0.8743} & 0.8560 & \textbf{0.8431} & 0.8541 & \textbf{0.7743} & \textbf{0.8548} & \textbf{0.7227} & \textbf{0.8313} \\
& TrackStar & 0.5583 & 0.5400 & 0.5053 & 0.5252 & 0.4056 & 0.5070 & 0.3653 & 0.5050 \\
& \textit{RISE} & 0.7437 & \textbf{0.9062} & 0.7198 & \textbf{0.8887} & 0.5960 & 0.8514 & 0.5268 & 0.8258 \\
\midrule

\multirow{5}{*}{\textbf{Pythia-70M}}
& ZO-Inf & 0.3219 & 0.4979 & 0.2219 & 0.4690 & 0.1445 & 0.4363 & 0.1312 & 0.4331 \\
& RapidIn & 0.7926 & 0.7821 & 0.7159 & 0.7437 & 0.5815 & 0.7273 & 0.5159 & 0.7232 \\
& TrackStar & 0.6164 & 0.4971 & 0.5347 & 0.4879 & 0.4129 & 0.4941 & 0.3730 & 0.4949 \\
& \textit{RISE} & \textbf{0.8853} & \textbf{0.9503} & \textbf{0.8630} & \textbf{0.9443} & \textbf{0.7640} & \textbf{0.9059} & \textbf{0.7021} & \textbf{0.8831} \\
\midrule

\multirow{5}{*}{\textbf{Pythia-14M}}
& ZO-Inf & NaN & NaN & NaN & NaN & NaN & NaN & NaN & NaN \\
& RapidIn & 0.5402 & 0.7392 & 0.4564 & 0.6869 & 0.2935 & 0.6214 & 0.2384 & 0.5899 \\
& TrackStar & 0.4783 & 0.5581 & 0.4350 & 0.5508 & 0.3544 & 0.5367 & 0.3247 & 0.5309 \\
& \textit{RISE} & \textbf{0.8715} & \textbf{0.9471} & \textbf{0.8429} & \textbf{0.9332} & \textbf{0.7294} & \textbf{0.8984} & \textbf{0.6705} & \textbf{0.8832} \\
\bottomrule
\end{tabular}%
}
\vspace{0.2cm}

\small
\textbf{Setting:} Mixed finance and medical QA dataset (gbharti/finance-alpaca + lavita/medical-qa-datasets). Evaluated using auPRC/auROC metrics. Test set consists of 100 medical QA test queries. Data pool contains 5,000 samples with 500 medical data points. Top-K selection threshold is set to 100. \textbf{\textit{RISE} hyperparameters:} $\lambda_{rh}=0.7$, $\lambda_{gh}=1.0$, $K_r=256$, $K_h=192$, $K_g=128$.
\end{table*}

\begin{table*}[htbp]
\centering
\caption{Performance Comparison: RapidIn vs. \textit{RISE} vs. TrackStar vs. ZO-Inf vs. Text-only Baselines \textbf{Finance--Medical Dataset Predict Future}}
\label{tab:rapidin_fik_zoinf_financemedicalpredict}
\scriptsize
\setlength{\tabcolsep}{3.2pt}
\renewcommand{\arraystretch}{1.15}
\resizebox{\textwidth}{!}{%
\begin{tabular}{l l | cc cc cc cc}
\toprule
\multicolumn{2}{l|}{\textbf{Method}} &
\multicolumn{2}{c}{\textbf{Top 5}} &
\multicolumn{2}{c}{\textbf{Top 10}} &
\multicolumn{2}{c}{\textbf{Top 50}} &
\multicolumn{2}{c}{\textbf{Top 100}} \\
\multicolumn{2}{l|}{} &
\textbf{auPRC} & \textbf{auROC} &
\textbf{auPRC} & \textbf{auROC} &
\textbf{auPRC} & \textbf{auROC} &
\textbf{auPRC} & \textbf{auROC} \\
\midrule

\multicolumn{2}{l|}{Embedding Similarity (E5-base-v2)} &
0.9772 & 0.9571 & 0.9764 & 0.9619 & 0.9662 & 0.9558 & 0.9537 & 0.9477 \\
\multicolumn{2}{l|}{BM25} &
0.9755 & 0.9866 & 0.9224 & 0.9283 & 0.9078 & 0.9395 & 0.8768 & 0.9243 \\
\specialrule{0.9pt}{2pt}{2pt}

\multirow{5}{*}{\textbf{Pythia-6.9B}}
& ZO-Inf   & 0.2860 & 0.4303 & 0.2103 & 0.4473 & 0.1435 & 0.4539 & 0.1306 & 0.4563 \\
& RapidIn  & 0.8917 & 0.7561 & 0.8817 & 0.7508 & 0.7895 & 0.7977 & 0.7525 & 0.8113 \\
& TrackStar& 0.9288 & 0.9213 & 0.9145 & 0.9146 & \textbf{0.8624} & 0.8854 & \textbf{0.8215} & 0.8643 \\
& \textit{RISE} & \textbf{0.9669} & \textbf{0.9819} & \textbf{0.9453} & \textbf{0.9737} & 0.8450 & \textbf{0.9310} & 0.7836 & \textbf{0.9013} \\
\midrule

\multirow{4}{*}{\textbf{Pythia-2.8B}}
& ZO-Inf   & 0.2904 & 0.4550 & 0.2004 & 0.4601 & 0.1302 & 0.5065 & 0.1169 & 0.5029 \\
& RapidIn  & 0.9400 & 0.7646 & 0.8920 & 0.7165 & 0.8443 & 0.8434 & 0.8084 & 0.8520 \\
& TrackStar& 0.9537 & 0.9232 & 0.9395 & 0.9197 & 0.8924 & 0.8963 & 0.8606 & 0.8808 \\
& \textit{RISE} & \textbf{0.9915} & \textbf{0.9922} & \textbf{0.9781} & \textbf{0.9830} & \textbf{0.9165} & \textbf{0.9547} & \textbf{0.8674} & \textbf{0.9350} \\
\midrule

\multirow{5}{*}{\textbf{Pythia-1B}}
& ZO-Inf   & 0.3975 & 0.5762 & 0.2646 & 0.5579 & 0.1431 & 0.5557 & 0.1245 & 0.5501 \\
& RapidIn  & 0.9355 & 0.9088 & 0.9302 & 0.9206 & 0.8974 & 0.9198 & 0.8620 & 0.9041 \\
& TrackStar& 0.9545 & 0.9325 & 0.9358 & 0.9233 & 0.9007 & 0.9062 & \textbf{0.8775} & 0.8967 \\
& \textit{RISE} & \textbf{0.9886} & \textbf{0.9938} & \textbf{0.9813} & \textbf{0.9899} & \textbf{0.9205} & \textbf{0.9622} & 0.8716 & \textbf{0.9433} \\
\midrule

\multirow{5}{*}{\textbf{Pythia-410M}}
& ZO-Inf   & 0.3781 & 0.5896 & 0.2717 & 0.5671 & 0.1618 & 0.5339 & 0.1396 & 0.5320 \\
& RapidIn  & 0.8406 & 0.6046 & 0.7981 & 0.6619 & 0.6785 & 0.7243 & 0.6100 & 0.7183 \\
& TrackStar& 0.6163 & 0.5338 & 0.5503 & 0.5432 & 0.4623 & 0.5462 & 0.4201 & 0.5426 \\
& \textit{RISE} & \textbf{0.9876} & \textbf{0.9929} & \textbf{0.9702} & \textbf{0.9826} & \textbf{0.8877} & \textbf{0.9520} & \textbf{0.8352} & \textbf{0.9331} \\
\midrule

\multirow{5}{*}{\textbf{Pythia-160M}}
& ZO-Inf   & 0.3790 & 0.5388 & 0.2696 & 0.4943 & 0.1466 & 0.4622 & 0.1258 & 0.4596 \\
& RapidIn  & 0.8714 & 0.8696 & 0.8160 & 0.8553 & \textbf{0.7378} & 0.8406 & \textbf{0.6816} & 0.8125 \\
& TrackStar& 0.4165 & 0.5342 & 0.3402 & 0.5342 & 0.2688 & 0.5144 & 0.2646 & 0.5098 \\
& \textit{RISE} & 0.8570 & \textbf{0.9518} & \textbf{0.8403} & \textbf{0.9331} & 0.7295& \textbf{0.8996} & 0.6608 & \textbf{0.8721} \\
\midrule

\multirow{5}{*}{\textbf{Pythia-70M}}
& ZO-Inf   & 0.3447 & 0.5147 & 0.2505 & 0.4630 & 0.1740 & 0.4688 & 0.1603 & 0.4746 \\
& RapidIn  & 0.6717 & 0.6624 & 0.5930 & 0.6582 & 0.4405 & 0.6458 & 0.3798 & 0.6363 \\
& TrackStar& 0.3890 & 0.4619 & 0.3492 & 0.4555 & 0.3243 & 0.4687 & 0.3134 & 0.4730 \\
& \textit{RISE} & \textbf{0.8664} & \textbf{0.9500} & \textbf{0.8331} & \textbf{0.9363} & \textbf{0.7043} & \textbf{0.8839} & \textbf{0.6305} & \textbf{0.8556} \\
\midrule

\multirow{5}{*}{\textbf{Pythia-14M}}
& ZO-Inf   & 0.3391 & 0.5617 & 0.2425 & 0.5510 & 0.1333 & 0.4802 & 0.1164 & 0.4635 \\
& RapidIn  & 0.6516 & 0.6484 & 0.5316 & 0.6035 & 0.3596 & 0.6023 & 0.3025 & 0.5949 \\
& TrackStar& 0.4497 & 0.5064 & 0.4171 & 0.5044 & 0.3588 & 0.4935 & 0.3450 & 0.4895 \\
& \textit{RISE} & \textbf{0.9052} & \textbf{0.9595} & \textbf{0.8807} & \textbf{0.9506} & \textbf{0.7607} & \textbf{0.8976} & \textbf{0.6975} & \textbf{0.8735} \\
\bottomrule
\end{tabular}%
}
\end{table*}

\begin{table*}[htbp]
\centering
\caption{Performance Comparison: RapidIn vs. \textit{RISE} vs. TrackStar vs. ZO-Inf vs. Text-only Baselines Across Different OLMo Model Sizes \textbf{Finance--Medical Dataset Recall}}
\label{tab:olmo_medicalfinance_recall}
\scriptsize
\setlength{\tabcolsep}{2.5pt}
\renewcommand{\arraystretch}{1.05}
\resizebox{\textwidth}{!}{%
\begin{tabular}{l l | cc cc cc cc}
\toprule
\multicolumn{2}{l|}{\textbf{Method}} &
\multicolumn{2}{c}{\textbf{Top 5}} &
\multicolumn{2}{c}{\textbf{Top 10}} &
\multicolumn{2}{c}{\textbf{Top 50}} &
\multicolumn{2}{c}{\textbf{Top 100}} \\
\multicolumn{2}{l|}{} &
\textbf{auPRC} & \textbf{auROC} &
\textbf{auPRC} & \textbf{auROC} &
\textbf{auPRC} & \textbf{auROC} &
\textbf{auPRC} & \textbf{auROC} \\
\midrule

\multicolumn{2}{l|}{Embedding Similarity (E5-base-v2)} &
0.9772 & 0.9571 & 0.9764 & 0.9619 & 0.9662 & 0.9558 & 0.9537 & 0.9477 \\
\multicolumn{2}{l|}{BM25} &
0.9755 & 0.9866 & 0.9224 & 0.9283 & 0.9078 & 0.9395 & 0.8768 & 0.9243 \\
\specialrule{0.9pt}{2pt}{2pt}

\multirow{5}{*}{\textbf{OLMo-3-32B}}
& ZO-Inf   & OOM & OOM & OOM & OOM & OOM & OOM & OOM & OOM \\
& RapidIn  & OOM & OOM & OOM & OOM & OOM & OOM & OOM & OOM \\
& TrackStar& \textbf{0.9697} & 0.8936 & \textbf{0.9703} & 0.9184 & 0.9691 & 0.9307 & \textbf{0.9655} & \textbf{0.9315} \\
& \textit{RISE} & 0.9691 & \textbf{0.9681} & 0.9526 & \textbf{0.9569} & \textbf{0.9066} & \textbf{0.9355} & 0.8664 & 0.9183 \\
\midrule

\multirow{5}{*}{\textbf{OLMo-3-7B}}
& ZO-Inf   & 0.2865 & 0.4503 & 0.2066 & 0.4670 & 0.1329 & 0.4859 & 0.1211 & 0.4770 \\
& RapidIn  & 0.9361 & 0.8641 & 0.9269 & 0.8933 & 0.9197 & 0.9226 & 0.8914 & 0.9105 \\
& TrackStar& \textbf{0.9787} & 0.9517 & \textbf{0.9708} & 0.9476 & \textbf{0.9434} & 0.9381 & \textbf{0.9175} & 0.9251 \\
& \textit{RISE} & 0.9754 & \textbf{0.9840} & 0.9587 & \textbf{0.9749} & 0.8971 & \textbf{0.9456} & 0.8569 & \textbf{0.9310} \\

\bottomrule
\end{tabular}%
}
\vspace{-0.1cm}

\small
\textbf{Setting:} Mixed finance and medical QA dataset (gbharti/finance-alpaca + lavita/medical-qa-datasets). Evaluated using auPRC/auROC metrics. Test set consists of 100 medical QA test queries. Data pool contains 5,000 samples with 500 medical data points. Top-$K$ selection threshold is set to 100. \textbf{\textit{RISE} hyperparameters:} $\lambda_{rh}=0.7$, $\lambda_{gh}=1.0$, $K_r=256$, $K_h=192$, $K_g=128$.
\end{table*}
\vspace{-1em}
\begin{table*}[htbp]
\centering
\caption{Performance Comparison on Mixed Finance--Medical Dataset (Pretrained OLMo Models - Predicting Future)}
\label{tab:olmo_medicalfinance_predict}
\scriptsize
\setlength{\tabcolsep}{2.5pt}
\renewcommand{\arraystretch}{1.0}
\resizebox{\textwidth}{!}{%
\begin{tabular}{l l | cc cc cc cc}
\toprule
\multicolumn{2}{l|}{\textbf{Method}} &
\multicolumn{2}{c}{\textbf{Top 5}} &
\multicolumn{2}{c}{\textbf{Top 10}} &
\multicolumn{2}{c}{\textbf{Top 50}} &
\multicolumn{2}{c}{\textbf{Top 100}} \\
\multicolumn{2}{l|}{} &
\textbf{auPRC} & \textbf{auROC} &
\textbf{auPRC} & \textbf{auROC} &
\textbf{auPRC} & \textbf{auROC} &
\textbf{auPRC} & \textbf{auROC} \\
\midrule

\multicolumn{2}{l|}{Embedding Similarity (E5-base-v2)} &
0.9772 & 0.9571 & 0.9764 & 0.9619 & 0.9662 & 0.9558 & 0.9537 & 0.9477 \\
\multicolumn{2}{l|}{BM25} &
0.9755 & 0.9866 & 0.9224 & 0.9283 & 0.9078 & 0.9395 & 0.8768 & 0.9243 \\
\specialrule{0.9pt}{2pt}{2pt}

\multirow{5}{*}{\textbf{OLMo-3-32B}}
& ZO-Inf   & OOM & OOM & OOM & OOM & OOM & OOM & OOM & OOM \\
& RapidIn  & OOM & OOM & OOM & OOM & OOM & OOM & OOM & OOM \\
& TrackStar& \textbf{0.9691} & 0.9202 & \textbf{0.9619} & 0.9164 & \textbf{0.9262} & 0.8871 & \textbf{0.9046} & \textbf{0.8769} \\
& \textit{RISE} & 0.9549 & \textbf{0.9508} & 0.9423 & \textbf{0.9470} & 0.8605 & \textbf{0.9058} & 0.7926 & 0.8696 \\
\midrule

\multirow{5}{*}{\textbf{OLMo-3-7B}}
& ZO-Inf   & 0.3496 & 0.5709 & 0.2480 & 0.5713 & 0.1571 & 0.5607 & 0.1412 & 0.5593 \\
& RapidIn  & 0.8933 & 0.7334 & 0.8985 & 0.8215 & 0.9019 & 0.8995 & 0.8776 & 0.8974 \\
& TrackStar& \textbf{0.9780} & 0.9644 & \textbf{0.9691} & 0.9608 & \textbf{0.9480} & \textbf{0.9520} & \textbf{0.9353} & \textbf{0.9465} \\
& \textit{RISE} & 0.9767 & \textbf{0.9838} & 0.9582 & \textbf{0.9736} & 0.8956 & 0.9386& 0.8439 & 0.9150 \\

\bottomrule
\end{tabular}%
}
\textbf{Setting:} \textit{RISE} applied to \textbf{original pretrained OLMo models without fine-tuning} for predicting future data based on prior knowledge. Mixed finance and medical QA dataset (gbharti/finance-alpaca + lavita/medical-qa-datasets). Evaluated using auPRC/auROC metrics. Test set consists of 100 medical QA test queries. Data pool contains 5,000 samples with 500 medical data points. Top-$K$ selection threshold is set to 100.
\end{table*}

\subsection{Brain Rot High Quality Data Detection: Complete Results}
\label{sec:appendix_brainrot}

This subsection reports comprehensive results on the Brain Rot dataset for high-quality data detection from junk data. Table~\ref{tab:rapidin_fik_zoinf_brainrotrecall} and Table~\ref{tab:contribution5_brainrot_predict} present Pythia model results under recall and predict-future settings. Table~\ref{tab:olmo_brainrot_recall} and Table~\ref{tab:olmo_brainrot_predict} show corresponding results for OLMo models.

\begin{table*}[htbp]
\centering
\caption{Performance Comparison: RapidIn vs. \textit{RISE} vs. TrackStar vs. ZO-Inf vs. Text-only Baselines \textbf{Brain Rot Dataset Junk Data Detection Recall}}
\label{tab:rapidin_fik_zoinf_brainrotrecall}
\scriptsize
\setlength{\tabcolsep}{3.2pt}
\renewcommand{\arraystretch}{1.15}
\resizebox{\textwidth}{!}{%
\begin{tabular}{l l | cc cc cc cc}
\toprule
\multicolumn{2}{l|}{\textbf{Method}} &
\multicolumn{2}{c}{\textbf{Top 5}} &
\multicolumn{2}{c}{\textbf{Top 10}} &
\multicolumn{2}{c}{\textbf{Top 25}} &
\multicolumn{2}{c}{\textbf{Top 50}} \\
\multicolumn{2}{l|}{} &
\textbf{auPRC} & \textbf{auROC} &
\textbf{auPRC} & \textbf{auROC} &
\textbf{auPRC} & \textbf{auROC} &
\textbf{auPRC} & \textbf{auROC} \\
\midrule

\multicolumn{2}{l|}{Embedding Similarity (E5-base-v2)} &
0.7048 & 0.7825 & 0.6226 & 0.7474 & 0.5468 & 0.7231 & 0.4852 & 0.6950 \\
\multicolumn{2}{l|}{BM25} &
0.5627 & 0.6307 & 0.5336 & 0.6260 & 0.5007 & 0.6739 & 0.4747 & 0.6748 \\
\specialrule{0.9pt}{2pt}{2pt}

\multirow{5}{*}{\textbf{Pythia-6.9B}}
& ZO-Inf   & 0.4131 & 0.6560 & 0.2704 & 0.6340 & 0.2023 & 0.6294 & 0.1710 & 0.6264 \\
& RapidIn  & 0.8342 & 0.8418 & \textbf{0.8470} & 0.8806 & \textbf{0.8466} & 0.9014 & \textbf{0.8387} & 0.9077 \\
& TrackStar& 0.3510 & 0.5196 & 0.3959 & 0.5366 & 0.4297 & 0.5495 & 0.4329 & 0.5565 \\
& \textit{RISE} & \textbf{0.8460} & \textbf{0.9502} & 0.8337 & \textbf{0.9463} & 0.8114 & \textbf{0.9193} & 0.7931 & \textbf{0.9172} \\
\midrule

\multirow{5}{*}{\textbf{Pythia-2.8B}}
& ZO-Inf   & 0.3911 & 0.5692 & 0.2704 & 0.5259 & 0.1814 & 0.5353 & 0.1536 & 0.5369 \\
& RapidIn  & \textbf{0.8570} & 0.8586 & \textbf{0.8695} & 0.8966 & \textbf{0.8715} & 0.9205 & \textbf{0.8556} & \textbf{0.9137} \\
& TrackStar& 0.2918 & 0.5228 & 0.3635 & 0.5523 & 0.4214 & 0.5837 & 0.4273 & 0.5938 \\
& \textit{RISE} & 0.8222 & \textbf{0.9381} & 0.8195 & \textbf{0.9365} & 0.7993 & \textbf{0.9212} & 0.7763 & 0.9097 \\
\midrule

\multirow{5}{*}{\textbf{Pythia-1B}}
& ZO-Inf   & 0.3301 & 0.5025 & 0.2412 & 0.4944 & 0.1846 & 0.5031 & 0.1570 & 0.4944 \\
& RapidIn  & 0.8300 & 0.8207 & \textbf{0.8285} & 0.8569 & \textbf{0.8208} & 0.8794 & \textbf{0.8064} & 0.8836 \\
& TrackStar& 0.2736 & 0.5521 & 0.3461 & 0.5616 & 0.3904 & 0.5696 & 0.3952 & 0.5732 \\
& \textit{RISE} & \textbf{0.8343} & \textbf{0.9423} & 0.8182 & \textbf{0.9328} & 0.7754 & \textbf{0.9070} & 0.7480 & \textbf{0.9023} \\
\midrule

\multirow{5}{*}{\textbf{Pythia-410M}}
& ZO-Inf   & 0.4566 & 0.5210 & 0.3679 & 0.5215 & 0.2729 & 0.5151 & 0.2113 & 0.4980 \\
& RapidIn  & 0.7887 & 0.7861 & 0.7732 & 0.8082 & 0.7644 & 0.8463 & 0.7451 & 0.8446 \\
& TrackStar& 0.5847 & 0.5407 & 0.5232 & 0.5324 & 0.4657 & 0.5215 & 0.4348 & 0.5126 \\
& \textit{RISE} & \textbf{0.8558} & \textbf{0.9564} & \textbf{0.8387} & \textbf{0.9441} & \textbf{0.8209} & \textbf{0.9236} & \textbf{0.8059} & \textbf{0.9206} \\
\midrule

\multirow{5}{*}{\textbf{Pythia-160M}}
& ZO-Inf   & 0.3854 & 0.5087 & 0.2964 & 0.5214 & 0.2092 & 0.5068 & 0.1687 & 0.4910 \\
& RapidIn  & 0.6971 & 0.7446 & 0.6113 & 0.7205 & 0.5419 & 0.7205 & 0.4997 & 0.7202 \\
& TrackStar& 0.6840 & 0.4925 & 0.6247 & 0.4928 & 0.5567 & 0.5003 & 0.5221 & 0.5050 \\
& \textit{RISE} & \textbf{0.8614} & \textbf{0.9467} & \textbf{0.8458} & \textbf{0.9323} & \textbf{0.8069} & \textbf{0.9170} & \textbf{0.7831} & \textbf{0.9136} \\
\midrule

\multirow{5}{*}{\textbf{Pythia-70M}}
& ZO-Inf   & 0.4887 & 0.6393 & 0.4024 & 0.6510 & 0.3220 & 0.6690 & 0.2676 & 0.6707 \\
& RapidIn  & 0.6302 & 0.8116 & 0.5247 & 0.7838 & 0.4492 & 0.7808 & 0.4022 & 0.7564 \\
& TrackStar& 0.4709 & 0.4848 & 0.4313 & 0.4955 & 0.3995 & 0.4967 & 0.3822 & 0.5008 \\
& \textit{RISE} & \textbf{0.8327} & \textbf{0.9462} & \textbf{0.8117} & \textbf{0.9298} & \textbf{0.7846} & \textbf{0.9157} & \textbf{0.7603} & \textbf{0.9059} \\
\midrule

\multirow{5}{*}{\textbf{Pythia-14M}}
& ZO-Inf   & 0.3887 & 0.5406 & 0.2975 & 0.5418 & 0.2319 & 0.5444 & 0.1996 & 0.5344 \\
& RapidIn  & 0.6123 & 0.7902 & 0.5084 & 0.7564 & 0.3944 & 0.7014 & 0.3438 & 0.6687 \\
& TrackStar& 0.5311 & 0.4594 & 0.5020 & 0.4865 & 0.4805 & 0.4863 & 0.4679 & 0.4941 \\
& \textit{RISE} & \textbf{0.8554} & \textbf{0.9273} & \textbf{0.8315} & \textbf{0.9246} & \textbf{0.7835} & \textbf{0.9068} & \textbf{0.7470} & \textbf{0.8953} \\
\bottomrule
\end{tabular}%
}
\vspace{0.2cm}

\small
\textbf{Setting:} Brain Rot dataset for high-quality data selection from junk data. Evaluated using auPRC/auROC metrics. The task retrieves high-quality samples from a large pool dominated by junk. The test set consists of 100 queries. The data pool contains 5,000 samples with 500 high-quality targets mixed within junk. Top-K selection threshold is set to 50. \textbf{\textit{RISE} hyperparameters:} $\lambda_{rh}=0.7$, $\lambda_{gh}=1.0$, $K_r=128$, $K_h=128$, $K_g=96$.
\end{table*}

\begin{table*}[htbp]
\centering
\caption{Performance Comparison: RapidIn vs. \textit{RISE} vs. TrackStar vs. ZO-Inf vs. Text-only Baselines \textbf{Brain Rot Dataset Junk Data Detection Predict Future}}
\label{tab:contribution5_brainrot_predict}
\scriptsize
\setlength{\tabcolsep}{3.2pt}
\renewcommand{\arraystretch}{1.15}
\resizebox{\textwidth}{!}{%
\begin{tabular}{l l | cc cc cc cc}
\toprule
\multicolumn{2}{l|}{\textbf{Method}} &
\multicolumn{2}{c}{\textbf{Top 5}} &
\multicolumn{2}{c}{\textbf{Top 10}} &
\multicolumn{2}{c}{\textbf{Top 25}} &
\multicolumn{2}{c}{\textbf{Top 50}} \\
\multicolumn{2}{l|}{} &
\textbf{auPRC} & \textbf{auROC} &
\textbf{auPRC} & \textbf{auROC} &
\textbf{auPRC} & \textbf{auROC} &
\textbf{auPRC} & \textbf{auROC} \\
\midrule

\multicolumn{2}{l|}{Embedding Similarity (E5-base-v2)} &
0.7048 & 0.7825 & 0.6226 & 0.7474 & 0.5468 & 0.7231 & 0.4852 & 0.6950 \\
\multicolumn{2}{l|}{BM25} &
0.5627 & 0.6307 & 0.5336 & 0.6260 & 0.5007 & 0.6739 & 0.4747 & 0.6748 \\
\specialrule{0.9pt}{2pt}{2pt}

\multirow{5}{*}{\textbf{Pythia-6.9B}}
& ZO-Inf  & 0.3897 & 0.6346 & 0.2644 & 0.6250 & 0.1805 & 0.6348 & 0.1531 & 0.6200 \\
& RapidIn & \textbf{0.8769} & 0.8949 & \textbf{0.8647} & 0.9036 & 0.8652 & 0.9192 & \textbf{0.8609} & \textbf{0.9239} \\
& TrackStar & 0.5206 & 0.5385 & 0.5373 & 0.6017 & 0.5661 & 0.6562 & 0.5748 & 0.6859 \\
& \textit{RISE} & 0.8670 & \textbf{0.9620} & 0.8551 & \textbf{0.9471} & \textbf{0.8263} & \textbf{0.9283} & 0.8053 & 0.9202 \\
\midrule

\multirow{5}{*}{\textbf{Pythia-2.8B}}
& ZO-Inf  & 0.4100 & 0.6235 & 0.3121 & 0.6098 & 0.2103 & 0.5845 & 0.1764 & 0.5636 \\
& RapidIn & \textbf{0.8927} & 0.9009 & \textbf{0.8915} & 0.9198 & \textbf{0.8779} & \textbf{0.9300} & \textbf{0.8670} & \textbf{0.9293} \\
& TrackStar & 0.5694 & 0.7186 & 0.5884 & 0.7284 & 0.5995 & 0.7323 & 0.6028 & 0.7405 \\
& \textit{RISE} & 0.8513 & \textbf{0.9571} & 0.8368 & \textbf{0.9352} & 0.8113 & 0.9207 & 0.7933 & 0.9161 \\
\midrule

\multirow{5}{*}{\textbf{Pythia-1B}}
& ZO-Inf  & 0.3127 & 0.4595 & 0.2359 & 0.4794 & 0.1713 & 0.4865 & 0.1462 & 0.4980 \\
& RapidIn & 0.8230 & 0.8005 & 0.8113 & 0.8345 & \textbf{0.7944} & 0.8602 & \textbf{0.7777} & 0.8665 \\
& TrackStar & 0.5280 & 0.6990 & 0.5514 & 0.7114 & 0.5829 & 0.7255 & 0.5966 & 0.7390 \\
& \textit{RISE} & \textbf{0.8466} & \textbf{0.9494} & \textbf{0.8294} & \textbf{0.9296} & 0.7883 & \textbf{0.9180} & 0.7561 & \textbf{0.9034} \\
\midrule

\multirow{5}{*}{\textbf{Pythia-410M}}
& ZO-Inf  & 0.4604 & 0.5096 & 0.3632 & 0.5181 & 0.2623 & 0.5165 & 0.2071 & 0.4977 \\
& RapidIn & 0.7605 & 0.7796 & 0.7430 & 0.7944 & 0.7199 & 0.8070 & 0.7047 & 0.8230 \\
& TrackStar & 0.5518 & 0.5306 & 0.5135 & 0.5209 & 0.4752 & 0.5214 & 0.4515 & 0.5187 \\
& \textit{RISE} & \textbf{0.8615} & \textbf{0.9576} & \textbf{0.8525} & \textbf{0.9462} & \textbf{0.8330} & \textbf{0.9300} & \textbf{0.8205} & \textbf{0.9263} \\
\midrule

\multirow{5}{*}{\textbf{Pythia-160M}}
& ZO-Inf  & 0.4469 & 0.6303 & 0.3463 & 0.6071 & 0.2642 & 0.5689 & 0.2249 & 0.5469 \\
& RapidIn & 0.7291 & 0.7224 & 0.6839 & 0.7410 & 0.6302 & 0.7521 & 0.5931 & 0.7590 \\
& TrackStar & 0.6265 & 0.5273 & 0.5389 & 0.5257 & 0.4557 & 0.5159 & 0.3977 & 0.5066 \\
& \textit{RISE} & \textbf{0.8594} & \textbf{0.9384} & \textbf{0.8494} & \textbf{0.9344} & \textbf{0.8099} & \textbf{0.9193} & \textbf{0.7857} & \textbf{0.9154} \\
\midrule

\multirow{5}{*}{\textbf{Pythia-70M}}
& ZO-Inf  & 0.4594 & 0.6007 & 0.3628 & 0.6074 & 0.2758 & 0.6196 & 0.2276 & 0.6164 \\
& RapidIn & 0.7497 & 0.8337 & 0.6513 & 0.7964 & 0.5820 & 0.7886 & 0.5341 & 0.7783 \\
& TrackStar & 0.5325 & 0.4821 & 0.4919 & 0.4892 & 0.4524 & 0.4935 & 0.4241 & 0.4979 \\
& \textit{RISE} & \textbf{0.8458} & \textbf{0.9433} & \textbf{0.8153} & \textbf{0.9252} & \textbf{0.7844} & \textbf{0.9117} & \textbf{0.7616} & \textbf{0.9052} \\
\midrule

\multirow{5}{*}{\textbf{Pythia-14M}}
& ZO-Inf  & 0.3680 & 0.5696 & 0.2786 & 0.5609 & 0.2111 & 0.5366 & 0.1910 & 0.5351 \\
& RapidIn & 0.5582 & 0.8001 & 0.4156 & 0.7619 & 0.2921 & 0.7134 & 0.2437 & 0.6837 \\
& TrackStar & 0.4258 & 0.4819 & 0.4195 & 0.4811 & 0.4015 & 0.4922 & 0.3860 & 0.4989 \\
& \textit{RISE} & \textbf{0.8381} & \textbf{0.9352} & \textbf{0.8114} & \textbf{0.9244} & \textbf{0.7676} & \textbf{0.9077} & \textbf{0.7354} & \textbf{0.8936} \\
\bottomrule
\end{tabular}%
}
\vspace{0.2cm}

\small
\textbf{Setting:} Brain Rot dataset for high-quality data selection from junk data. Evaluated using auPRC/auROC metrics. The task retrieves high-quality samples from a large pool dominated by junk. The test set consists of 100 queries. Data pool contains 5,000 samples with 500 high-quality targets mixed within junk. Top-K selection threshold is set to 50. \textbf{\textit{RISE} hyperparameters:} $\lambda_{rh}=0.7$, $\lambda_{gh}=1.0$, $K_r=128$, $K_h=128$, $K_g=96$.
\end{table*}

\begin{table*}[htbp]
\centering
\caption{Performance Comparison on Brain Rot Dataset (Junk Data Detection - OLMo Models) Recall}
\label{tab:olmo_brainrot_recall}
\scriptsize
\setlength{\tabcolsep}{3.2pt}
\renewcommand{\arraystretch}{1.0}
\resizebox{\textwidth}{!}{%
\begin{tabular}{l l | cc cc cc cc}
\toprule
\multicolumn{2}{l|}{\textbf{Method}} &
\multicolumn{2}{c}{\textbf{Top 5}} &
\multicolumn{2}{c}{\textbf{Top 10}} &
\multicolumn{2}{c}{\textbf{Top 25}} &
\multicolumn{2}{c}{\textbf{Top 50}} \\
\multicolumn{2}{l|}{} &
\textbf{auPRC} & \textbf{auROC} &
\textbf{auPRC} & \textbf{auROC} &
\textbf{auPRC} & \textbf{auROC} &
\textbf{auPRC} & \textbf{auROC} \\
\midrule

\multicolumn{2}{l|}{Embedding Similarity (E5-base-v2)} &
0.7048 & 0.7825 & 0.6226 & 0.7474 & 0.5468 & 0.7231 & 0.4852 & 0.6950 \\
\multicolumn{2}{l|}{BM25} &
0.5627 & 0.6307 & 0.5336 & 0.6260 & 0.5007 & 0.6739 & 0.4747 & 0.6748 \\
\specialrule{0.9pt}{2pt}{2pt}

\multirow{5}{*}{\textbf{OLMo-3-32B}}
& ZO-Inf   & OOM & OOM & OOM & OOM & OOM & OOM & OOM & OOM \\
& RapidIn  & OOM & OOM & OOM & OOM & OOM & OOM & OOM & OOM \\
& TrackStar& 0.6941 & 0.4306 & 0.7019 & 0.4919 & 0.6960 & 0.5547 & 0.6864 & 0.5948 \\
& \textit{RISE} & \textbf{0.8637} & \textbf{0.9663} & \textbf{0.8501} & \textbf{0.9571} & \textbf{0.8333} & \textbf{0.9328} & \textbf{0.8145} & \textbf{0.9228} \\
\midrule

\multirow{5}{*}{\textbf{OLMo-3-7B}}
& ZO-Inf   & 0.4997 & 0.7228 & 0.3685 & 0.6977 & 0.2803 & 0.6791 & 0.2426 & 0.6726 \\
& RapidIn  & 0.8328 & 0.8414 & 0.8314 & 0.8686 & 0.8202 & 0.8874 & 0.7965 & 0.8815 \\
& TrackStar& 0.6966 & 0.5759 & 0.6665 & 0.5956 & 0.6200 & 0.6064 & 0.5817 & 0.6133 \\
& \textit{RISE} & \textbf{0.8727} & \textbf{0.9516} & \textbf{0.8586} & \textbf{0.9456} & \textbf{0.8371} & \textbf{0.9322} & \textbf{0.8252} & \textbf{0.9265} \\
\midrule

\multirow{5}{*}{\textbf{OLMo-2-1B}}
& ZO-Inf   & 0.4515 & 0.6225 & 0.3434 & 0.6248 & 0.3073 & 0.6489 & 0.2898 & 0.6558 \\
& RapidIn  & 0.8583 & 0.8766 & \textbf{0.8501} & 0.8929 & \textbf{0.8270} & 0.8989 & \textbf{0.8077} & 0.8940 \\
& TrackStar& 0.5244 & 0.5938 & 0.4761 & 0.5849 & 0.4238 & 0.5912 & 0.3917 & 0.5938 \\
& \textit{RISE} & \textbf{0.8603} & \textbf{0.9574} & 0.8489 & \textbf{0.9404} & 0.8244 & \textbf{0.9289} & 0.8070 & \textbf{0.9228} \\
\bottomrule
\end{tabular}%
}
\vspace{0.2cm}

\small
\textbf{Setting:} Brain Rot dataset for junk data detection. Evaluated using auPRC/auROC metrics. Test set consists of 100 M2 junk dataset test queries. Data pool contains 5,000 samples with 500 junk data points. Top-K selection threshold is set to 1,000. \textbf{\textit{RISE} hyperparameters:} $\lambda_{rh}=0.7$, $\lambda_{gh}=1.0$, $K_r=256$, $K_h=192$, $K_g=128$.
\end{table*}

\begin{table*}[htbp]
\centering
\caption{Performance Comparison on Brain Rot Dataset (Pretrained OLMo Models - Predicting Future)}
\label{tab:olmo_brainrot_predict}
\scriptsize
\setlength{\tabcolsep}{3.2pt}
\renewcommand{\arraystretch}{1.15}
\resizebox{\textwidth}{!}{%
\begin{tabular}{l l | cc cc cc cc}
\toprule
\multicolumn{2}{l|}{\textbf{Method}} &
\multicolumn{2}{c}{\textbf{Top 5}} &
\multicolumn{2}{c}{\textbf{Top 10}} &
\multicolumn{2}{c}{\textbf{Top 25}} &
\multicolumn{2}{c}{\textbf{Top 50}} \\
\multicolumn{2}{l|}{} &
\textbf{auPRC} & \textbf{auROC} &
\textbf{auPRC} & \textbf{auROC} &
\textbf{auPRC} & \textbf{auROC} &
\textbf{auPRC} & \textbf{auROC} \\
\midrule

\multicolumn{2}{l|}{Embedding Similarity (E5-base-v2)} &
0.7048 & 0.7825 & 0.6226 & 0.7474 & 0.5468 & 0.7231 & 0.4852 & 0.6950 \\
\multicolumn{2}{l|}{BM25} &
0.5627 & 0.6307 & 0.5336 & 0.6260 & 0.5007 & 0.6739 & 0.4747 & 0.6748 \\
\specialrule{0.9pt}{2pt}{2pt}

\multirow{5}{*}{\textbf{OLMo-3-32B}}
& ZO-Inf   & OOM & OOM & OOM & OOM & OOM & OOM & OOM & OOM \\
& RapidIn  & OOM & OOM & OOM & OOM & OOM & OOM & OOM & OOM \\
& TrackStar& 0.7330 & 0.7293 & 0.7217 & 0.7326 & 0.7115 & 0.7494 & 0.7035 & 0.7681 \\
& \textit{RISE} & \textbf{0.8776} & \textbf{0.9629} & \textbf{0.8589} & \textbf{0.9527} & \textbf{0.8326} & \textbf{0.9322} & \textbf{0.8072} & \textbf{0.9202} \\
\midrule

\multirow{5}{*}{\textbf{OLMo-3-7B}}
& ZO-Inf   & 0.4590 & 0.6413 & 0.3225 & 0.5918 & 0.2503 & 0.5885 & 0.2183 & 0.5832 \\
& RapidIn  & 0.8128 & 0.8092 & 0.8061 & 0.8416 & 0.7929 & 0.8651 & 0.7749 & 0.8686 \\
& TrackStar& 0.7200 & 0.7645 & 0.6988 & 0.7745 & 0.6686 & 0.7839 & 0.6573 & 0.7926 \\
& \textit{RISE} & \textbf{0.8915} & \textbf{0.9612} & \textbf{0.8732} & \textbf{0.9507} & \textbf{0.8513} & \textbf{0.9362} & \textbf{0.8371} & \textbf{0.9301} \\
\midrule

\multirow{5}{*}{\textbf{OLMo-2-1B}}
& ZO-Inf   & 0.4959 & 0.6709 & 0.4202 & 0.6720 & 0.3395 & 0.6716 & 0.3133 & 0.6777 \\
& RapidIn  & 0.8683 & 0.8750 & 0.8499 & 0.8874 & \textbf{0.8381} & 0.8998 & \textbf{0.8216} & 0.8977 \\
& TrackStar& 0.6243 & 0.7305 & 0.5724 & 0.7255 & 0.5133 & 0.7237 & 0.4820 & 0.7247 \\
& \textit{RISE} & \textbf{0.8669} & \textbf{0.9546} & \textbf{0.8534} & \textbf{0.9379} & 0.8271 & \textbf{0.9295} & 0.8080 & \textbf{0.9231} \\
\bottomrule
\end{tabular}%
}
\vspace{0.2cm}

\small
\textbf{Setting:} Brain Rot dataset for high-quality data selection from junk data. Evaluated with $k=2^{16}$ using auPRC/auROC metrics. The task retrieves high-quality samples from a large pool dominated by junk. The test set consists of 100 queries. The data pool contains 5,000 samples with 500 high-quality targets mixed within junk. Top-K selection threshold is set to 50. \textbf{\textit{RISE} hyperparameters:} $\lambda_{rh}=0.7$, $\lambda_{gh}=1.0$, $K_r=128$, $K_h=128$, $K_g=96$.
\end{table*}

\subsection{Olmo Three Tasks Comprehensive Results}
\label{sec:olmo_three_tasks_comprehensive_results}

This subsection provides a unified comparison across all three evaluation tasks on OLMo models. Table~\ref{tab:olmo_three_task_unified} summarizes the three-task unified scores (auPRC) for pretrained OLMo models. 

\begin{table*}[htbp]
\centering
\caption{Three-task unified score on pretrained OLMo models (auPRC).
For Howdy and Finance--Medical, we average over $K\in\{5,10,50,100\}$; for Brain Rot we average over $K\in\{5,10,25,50\}$.
Each cell reports $\mu\pm\delta$, where $\mu=\frac{\mu_R+\mu_P}{2}$ and $\delta=\frac{|\mu_R-\mu_P|}{2}$ (smaller $\delta$ means less Recall/Predict imbalance).}
\label{tab:olmo_three_task_unified}
\setlength{\tabcolsep}{5pt}
\renewcommand{\arraystretch}{1.1}
\begin{tabular}{ll cccc}
\toprule
\textbf{Model} & \textbf{Method} & \textbf{Howdy} & \textbf{Fin–Med} & \textbf{Brain Rot} & \textbf{Overall} \\
\midrule
\multirow{5}{*}{OLMo-3-32B} 
  & ZO-Inf    & \multicolumn{4}{c}{\textcolor{gray!60}{Out of Memory}} \\
  & RapidIn   & \multicolumn{4}{c}{\textcolor{gray!60}{Out of Memory}} \\
  & TrackStar & 0.735 {\scriptsize±0.104} & 0.955 {\scriptsize±0.014} & 0.706 {\scriptsize±0.011} & 0.799 {\scriptsize±0.043} \\
  & \cellcolor{blue!8}\textit{RISE} & \cellcolor{blue!8}0.999 {\scriptsize±0.000} & \cellcolor{blue!8}0.906 {\scriptsize±0.018} & \cellcolor{blue!8}0.842 {\scriptsize±0.002} & \cellcolor{blue!8}\textbf{0.916 {\scriptsize±0.007}} \\
\midrule
\multirow{5}{*}{OLMo-3-7B} 
  & ZO-Inf    & 0.220 {\scriptsize±0.007} & 0.205 {\scriptsize±0.019} & 0.330 {\scriptsize±0.018} & 0.252 {\scriptsize±0.014} \\
  & RapidIn   & 0.473 {\scriptsize±0.363} & 0.906 {\scriptsize±0.013} & 0.808 {\scriptsize±0.012} & 0.729 {\scriptsize±0.129} \\
  & TrackStar & 0.709 {\scriptsize±0.241} & 0.955 {\scriptsize±0.003} & 0.664 {\scriptsize±0.022} & 0.776 {\scriptsize±0.089} \\
  & \cellcolor{blue!8}\textit{RISE} & \cellcolor{blue!8}0.999 {\scriptsize±0.000} & \cellcolor{blue!8}0.920 {\scriptsize±0.002} & \cellcolor{blue!8}0.856 {\scriptsize±0.007} & \cellcolor{blue!8}\textbf{0.925 {\scriptsize±0.003}} \\
\midrule
\multirow{5}{*}{OLMo-2-1B} 
  & ZO-Inf    & 0.199 {\scriptsize±0.018} & 0.217 {\scriptsize±0.007} & 0.370 {\scriptsize±0.022} & 0.262 {\scriptsize±0.016} \\
  & RapidIn   & 0.421 {\scriptsize±0.305} & 0.907 {\scriptsize±0.002} & 0.840 {\scriptsize±0.004} & 0.723 {\scriptsize±0.104} \\
  & TrackStar & 0.418 {\scriptsize±0.098} & 0.961 {\scriptsize±0.027} & 0.501 {\scriptsize±0.047} & 0.627 {\scriptsize±0.057} \\
  & \cellcolor{blue!8}\textit{RISE} & \cellcolor{blue!8}0.986 {\scriptsize±0.003} & \cellcolor{blue!8}0.903 {\scriptsize±0.007} & \cellcolor{blue!8}0.837 {\scriptsize±0.002} & \cellcolor{blue!8}\textbf{0.909 {\scriptsize±0.004}} \\
\bottomrule
\end{tabular}
\end{table*}

\subsection{Large Scale Howdy Results}
\label{sec:large_scale_howdy_results}

This subsection presents a direct comparison between \textit{RISE} and BM25 on the Howdy backdoor detection task at larger corpus scales, as shown in Table~\ref{tab:rise_howdy_results}.

\begin{table*}[htbp]
\centering
\caption{\textit{RISE} vs BM25 on Howdy! backdoor detection task. We evaluate the ability to retrieve backdoor-injected training samples from a large corpus. The training pool consists of C4 web text (100K or 1M samples) mixed with samples containing the trigger phrase ``howdy!''. We use 100 test queries (all containing ``howdy!'') to retrieve from the training pool. Ground truth labels are determined by regex matching for ``howdy!'' in the text field. Model: OLMo-3-1125-32B with RISE parameters Kr=16, Kh=8, Kg=28.}
\label{tab:rise_howdy_results}
\resizebox{\textwidth}{!}{%
\begin{tabular}{l|l|cc|cc|cc|cc}
\toprule
\multirow{2}{*}{\textbf{Method}} & \multirow{2}{*}{\textbf{Dataset}} & \multirow{2}{*}{\textbf{Index}} & \multirow{2}{*}{\textbf{Query}} & \multicolumn{2}{c|}{\textbf{Precision}} & \multicolumn{2}{c|}{\textbf{auPRC}} & \multicolumn{2}{c}{\textbf{auROC}} \\
& & & & Top10 & Top100 & Top10 & Top100 & Top10 & Top100 \\
\midrule
RISE & 105K (440 pos, 0.42\%) & 102 MB & 30s & \textbf{23.8\%} & \textbf{13.0\%} & \textbf{0.383} & \textbf{0.214} & \textbf{0.722} & \textbf{0.678} \\
BM25 & 105K (440 pos, 0.42\%) & 84 MB & 0.1s & 7.2\% & 12.4\% & 0.188 & 0.166 & 0.353 & 0.401 \\
\midrule
RISE & 1.005M (464 pos, 0.046\%) & 963 MB & 112s & \textbf{5.9\%} & 3.4\% & \textbf{0.141} & \textbf{0.082} & \textbf{0.756} & \textbf{0.674} \\
BM25 & 1.005M (464 pos, 0.046\%) & 797 MB & 0.7s & 3.0\% & \textbf{5.2\%} & 0.075 & 0.078 & 0.589 & 0.507 \\
\bottomrule
\end{tabular}
}
\end{table*}

\end{document}